\newtheorem{lem}{Lemma}
\newtheorem{thm}{Theorem}
\newtheorem{defn}{Definition}
\newtheorem{exple}{Example}
\newtheorem{rem}{Remark}
\newtheorem{fact}{Fact}
\newcommand{\E}[1]{\mathbb{E}\left[{#1}\right]}
\DeclareMathOperator*{\argmax}{arg\,max}
\crefname{equation}{}{}
\Crefname{equation}{}{}
\crefname{thm}{theorem}{theorems}
\Crefname{thm}{Theorem}{Theorems}
\crefname{clm}{claim}{claims}
\Crefname{clm}{Claim}{Claims}
\Crefname{coro}{Corollary}{Corollaries}
\Crefname{lem}{Lemma}{Lemmas}
\Crefname{sec}{Section}{Sections}
\crefname{app}{appendix}{appendices}
\Crefname{app}{Appendix}{Appendices}
\crefname{prop}{proposition}{propositions}
\Crefname{prop}{Proposition}{Propositions}
\Crefname{propty}{Property}{Properties}
\crefname{figure}{figure}{figures}
\Crefname{figure}{Figure}{Figures}
\crefname{fig}{figure}{figures}
\Crefname{fig}{Figure}{Figures}
\crefname{defn}{definition}{definitions}
\Crefname{defn}{Definition}{Definitions}
\crefname{fact}{fact}{facts}
\Crefname{fact}{Fact}{Facts}
\crefname{appendix}{appendix}{appendices}
\Crefname{appendix}{Appendix}{Appendices}
\crefname{algo}{algorithm}{algorithms}
\Crefname{algo}{Algorithm}{Algorithms}
\crefname{algorithm}{algorithm}{algorithms}
\Crefname{algorithm}{Algorithm}{Algorithms}
\crefname{conj}{conjecture}{conjectures}
\Crefname{conj}{Conjecture}{Conjectures}
\crefname{obs}{observation}{observations}
\Crefname{obs}{Observation}{Observations}
\newcommand{\totalPulls}{T}
\newcommand{\setofArms}{\mathcal{C}}
\newcommand{\sampleSpace}{\mathcal{W}}
\newcommand{\numArms}{K}
\newcommand{\numPts}{J}
\newcommand{\arm}{k}
\newcommand{\meanReward}{\mu}
\newcommand{\Index}{I}
\newcommand{\pulls}{n}
\newcommand{\slot}{t}
\newcommand{\gap}{\Delta}
\newcommand{\estimateMean}{\hat{\phi}}
\newcommand{\expectedPseudoReward}{\phi}
\newcommand{\regret}{Reg}
\newcommand{\reward}{r}
\newcommand{\estimateReward}{s}
\newcommand{\optimistGap}{\tilde{\Delta}}
\newcommand{\indicator}{\mathbbm{1}}
\def \OO {\mathrm{O}}
\begin{document}
\title{Correlated Multi-armed Bandits with a \\Latent Random Source}

\author{\name Samarth Gupta \email samarthg@andrew.cmu.edu \\
 \addr Carnegie Mellon University\\
 Pittsburgh, PA 15213 
 \AND
 \name Gauri Joshi \email gaurij@andrew.cmu.edu \\
 \addr Carnegie Mellon University\\
 Pittsburgh, PA 15213 
 \AND
 \name Osman Ya\u{g}an \email oyagan@andrew.cmu.edu\\
 \addr Carnegie Mellon University\\
 Pittsburgh, PA 15213}

\editor{No editors}
\maketitle

\begin{abstract}
We consider a novel multi-armed bandit framework where the rewards obtained by pulling the arms are functions of a common latent random variable. The correlation between arms due to the common random source can be used to design a generalized upper-confidence-bound (UCB) algorithm that identifies certain arms as \emph{non-competitive}, and avoids exploring them. As a result, we reduce a $K$-armed bandit problem to a $C+1$-armed problem, where $C+1$ includes the best arm and $C$ \emph{competitive} arms. Our regret analysis shows that the competitive arms need to be pulled $\OO(\log T)$ times, while the non-competitive arms are pulled only $\OO(1)$ times. As a result, there are regimes where our algorithm achieves a $\OO(1)$ regret as opposed to the typical logarithmic regret scaling of multi-armed bandit algorithms. We also evaluate lower bounds on the expected regret and prove that our correlated-UCB algorithm achieves $\OO(1)$ regret whenever possible.
\end{abstract}
\section{Introduction}
\label{sec:introduction}


\textbf{Multi-armed Bandits.} The \emph{multi-armed bandit} (MAB) framework is a special case of reinforcement learning \citep{sutton98reinforcement} where actions do not change the system state. At each time step we  obtain a reward by pulling one of $K$ arms which have unknown reward distributions, and the objective is to maximize the cumulative reward. The seminal work of Lai and Robbins \citep{lai1985asymptotically} proposed the upper confidence bound (UCB) arm-selection algorithm, and studied its fundamental limits in terms of bounds on \emph{regret}. Subsequently, multi-armed bandit algorithms \citep{bubeck2012regret, garivier2011kl} have been used in numerous applications including medical diagnosis \citep{villar2015multi}, system testing \citep{tekin2017multi}, scheduling in computing systems \citep{mora2009stochastic, krishnasamy2016regret, joshi2016efficient}, and web optimization \citep{white2012bandit, agarwal2009explore} among others. A drawback of the classical model is that it assumes independent rewards from the arms, which is typically not true in practice. 

\textbf{Related Work.} Motivated by this shortcoming, several variants of the multi-armed bandit framework have been proposed in recent years. A class of variants relevant to our work is contextual bandits \citep{zhou2015survey, agrawal2013thompson,agarwal2014taming,sakulkar2016stochastic, sen2017identifying}, where in each round we observe a contextual vector that provides side information about the reward of each arm. Instead of receiving side information, correlated multi-armed bandits exploit the inherent correlation between the rewards of arms arising due to a structural relationship between the arms, or a set of common parameters shared between them. Some recent works \citep{pandey2007multi-armed,wang2018regional,hoffman2014correlation, yahyaa2015correlated, srivastava2016correlated, mersereau2009structured, ata2015global,combes2017minimal} have studied the correlated multi-armed bandit problem. Many of these works consider specific types of correlation such as clusters of arms \citep{pandey2007multi-armed, wang2018regional} and Gaussian or invertible reward functions \citep{ata2015global} that depend on a constant hidden parameter vector $\mathbf{\theta}$ \citep{yahyaa2015correlated, ata2015global, combes2017minimal, maillard2014latent, lattimore2014bounded}. We consider latent \emph{random variable} $X$, instead of constant parameter $\mathbf{\theta}$. Some recent papers \citep{bresler2014latent} study the regret of such latent source models for collaborative filtering, with rewards belonging to the set $\{-1, 0, +1\}$. Instead of maximizing regret, \citep{gupta2018active} considers the same model as this paper, but with the objective of learning the distribution of the latent random variable $X$.

\textbf{Main Contributions.} We consider a novel correlated multi-armed bandit model with a latent random source $X$, and we allow the rewards to be arbitrary functions of $X$, as described in \Cref{sec:problem}. In \Cref{sec:algorithm}, we propose the C-UCB algorithm, which is a fundamental generalization of the classic UCB algorithm. The $C$-UCB algorithm uses observed rewards to generate \emph{pseudo-reward} estimates of other arms, and restricts the exploration to the arms that are deemed (empirically) competitive. Regret analysis in \Cref{sec:regret} shows that after $T$ rounds of sampling, the C-UCB algorithm achieves an expected regret of $C \cdot \OO(\log T) + \OO(1)$, where $C \in \{0, \ldots, K-1\}$ denotes the number of arms that are \emph{competitive} with respect to the optimal arm. Thus, when the correlation between the rewards results in $C$ being equal to $0$, C-UCB achieves constant regret scaling with $T$, which is an order-wise improvement over standard bandit algorithms like UCB. We also find a lower bound on expected regret and show that the proposed algorithm achieves bounded regret whenever possible. Simulation results in \Cref{sec:simulations} show that our C-UCB algorithm outperforms the vanilla UCB algorithm that does not exploit the correlation between arms. 

\textbf{Applications.} Unlike the classic MAB model that considers arms with independent rewards, our framework captures several  applications where the rewards of arms $k=1, \ldots, K$ depend on a common source of randomness. For example, the response to $K$ possible advertisements/products can depend on a latent variable $X$ that represents the social/economic condition of a customer. Similarly, the reward for using one of the $K$ possible encoding/routing strategies in a wireless communication network may depend on the current state $X$ of a time-varying channel.

Through controlled experiments or supervised learning approaches, we can learn the reward function $g_k(\cdot)$ for each possible value of $X$. While it is possible to find the mappings $g_k(x)$ for a small control group with different $x$'s, learning the distribution $F_X$ of a large population is likely to be difficult and costly; e.g., imagine a company willing to expand to a new region/country with an unknown demographic, and trying to identify the best products/ads. Similarly, in a communication network, it may not be efficient/possible to obtain the channel state information at every node and at every time instant. In this setting, our framework will help obtain larger cumulative reward. In particular, instead of the correlation-agnostic MAB framework, our approach will leverage the previously learned correlations to reduce the regret. Also, unlike contextual bandits where a personalized recommendation is given after observing the context $x$, our framework identifies a single recommendation that appeals to a large population where these contexts are hidden. 



\section{Problem Formulation}
\label{sec:problem}

\begin{wrapfigure}{r}{0.35\textwidth}
\centerline{\includegraphics[width=5.0cm]{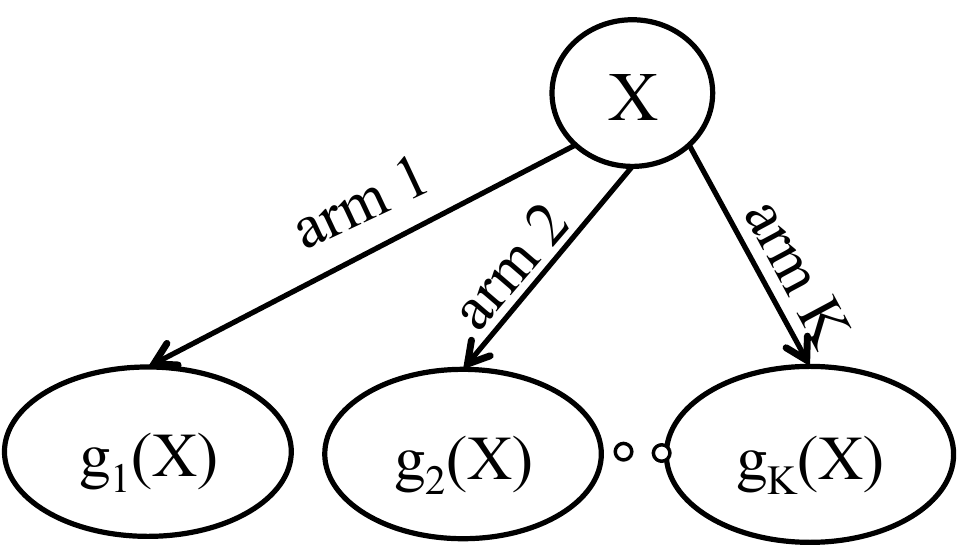}}
\caption{\small{The correlated multi-armed bandit framework. The reward of arm $k$ at round $t$ is $g_k(x_t)$, where $x_t$ is an i.i.d. realization of the latent random variable $X$. \label{fig:sys_model}} }
\vspace{-0.28 cm}
\end{wrapfigure}

\subsection{System Model and Regret Definition}

Consider a latent random variable $X$ whose probability distribution is unknown. The random variable can be either discrete or continuous. For discrete $X$, we denote the sample space by $\sampleSpace = \{x_1, x_2, \dots x_\numPts \}$, and use $p_j$ to denote the probability $\Pr(X=x_j)$ such that $\sum_{j=1}^{\numPts} p_j = 1$. For continuous $X$, $f_X(x)$ denotes the probability density function of $X$ over $x \in \mathbb{R}$.

Due to the latent nature of $X$, it is not possible to draw direct samples of $X$ and infer its unknown probability distribution. Instead, indirect samples can be obtained by choosing one of $\numArms$ \emph{arms} in each round $\slot$, where $\numArms$ is finite and fixed. Arm $\arm$ is associated with a reward function $g_\arm(X)$. If we take action $\arm_{\slot} \in \{1, 2 \ldots, \numArms\}$ in time slot $\slot$, we obtain the reward $g_{\arm_{\slot}}(x_{\slot})$ where $x_{\slot}$ is an i.i.d. realization of $X$ as shown in Figure~\ref{fig:sys_model}. The functions $g_1(X), g_2(X) \ldots g_\numArms(X)$ are assumed to be known.
Assume that there is a unique optimal arm $\arm^*$ that gives the maximum expected reward, that is,
\begin{equation}
\arm^* = \argmax_{\arm \in \{1,2, \ldots, \numArms\}}\E{g_\arm(X)} =  \arg \max_{\arm \in \{1,2, \dots \numArms\}} \meanReward_\arm,
\end{equation}
where $\meanReward_\arm$ denotes the mean reward of arm $\arm$. Let  $\Delta_\arm \triangleq \meanReward_{\arm^*} - \meanReward_\arm$ be defined as the sub-optimality gap of arm $\arm$ with respect to the optimal arm $\arm^*$. We also assume that the reward functions are bounded within an interval of size $B$, that is, $(\max_{x \in \mathcal{W}}g_\arm(x) - \min_{x \in \mathcal{W}}g_\arm(x)) \leq B$ for all arms $\arm \in \{1, \dots, \numArms\}$. 
%
We do not make any other assumptions such as the functions $g_1, \dots g_\numArms$ being invertible. And indeed our problem framework and algorithm is most interesting when the reward functions are not invertible.

Our objective is to sequentially pull 
arms $k_1, \ldots, k_t$ in order to maximize the cumulative reward. After $\totalPulls$ rounds, the cumulative reward is $\sum_{\slot = 1}^{\totalPulls} g_{\arm_\slot}(x_\slot)$.
%
%
Maximizing the cumulative reward is equivalent to minimizing the cumulative regret which is defined as follows.
\begin{defn}[Cumulative Regret]
\label{defn:cumulative_regret}
The cumulative regret $\regret(\totalPulls)$ after $\totalPulls$ rounds is defined as
\begin{align}
    \regret(\totalPulls) &\triangleq \sum_{\slot = 1}^{\totalPulls} (g_{{\arm^*}}(x_\slot) - g_{\arm_\slot}(x_\slot))
    \label{eqn:regret}
\end{align}
where $x_t$ is an i.i.d.\ realization of $X$ that is not directly observed; we only observe $g_{k_t}(x_t)$. 

\end{defn}

Thus, our goal is to design an algorithm to choose an arm $\arm_{\slot}$ at every round $\slot$ so as to minimize expected $\regret(\totalPulls)$. Note that we do not know the number of rounds $\totalPulls$ beforehand, and aim to minimize $\regret(\totalPulls)$ for all $\totalPulls$.


\begin{rem}[Connection to Classical Multi-armed Bandits]
\normalfont Although we consider a scalar random variable $X$ for brevity, our framework and algorithm can be generalized to a latent random vector $\mathbf{X} = (X_1, X_2, \dots X_m)$, as we explain in the supplementary material. The classical multi-armed bandit framework with independent arms is a special case of this generalized model when $\mathbf{X} = (X_1, X_2, \ldots X_\numArms)$ where $X_i$ are independent random variables and $g_\arm(X) = X_\arm$ for $k \in \{ 1, 2, \ldots, \numArms\}$.
\end{rem}

\subsection{Utilizing Correlation Between the Arms: Intuition and Examples}
In the classical multi-armed bandit framework there is a trade-off between exploring more arms to improve the estimates of their rewards, and exploiting the current best arm in order to maximize the cumulative reward. The sub-optimal arms have to be pulled $\Theta(\log T)$ times each, resulting in a $\Theta(\log T)$ cumulative regret as shown in the seminal work \citep{lai1985asymptotically}. In our new framework, since the reward functions $g_1, \dots g_\numArms$ are correlated through the common hidden random variable $X$, pulling one arm can give information about the distribution of $X$, which in turn can help estimate the reward from other arms. These \emph{pseudo-rewards} (defined formally in \Cref{sec:algorithm}) can allow us to declare certain arms as \emph{non-competitive} (defined formally in \Cref{sec:algorithm}) and pull them only $\OO(1)$ times. As a result, a $K$-armed bandit problem is reduced to a $C+1$-armed bandit problem, where $C \in \{0, 1, \ldots, \numArms-1 \}$ is the number of \emph{competitive} arms. Let us consider some examples to gain intuition on how arms are deemed non-competitive.

\begin{exple}[All Reward Functions are Invertible]
\normalfont Suppose that all the reward functions $g_1, \dots g_\numArms$ are invertible. Then, if we obtain a reward $r$ by pulling arm $\arm$ in slot $t$, it can be mapped back to a unique realization $x =  g_{\arm}^{-1}(r)$ of the latent random variable $X$. Using this realization, we can generate pseudo-samples $g_\ell(x)$ from any other arm $\ell \neq \arm$. This renders all sub-optimal arms non-competitive and obviates the need to explore them. As a result, a pure-exploitation strategy is optimal and it gives $\OO(1)$ regret.
\label{exple:invertible}
\end{exple}

\begin{wrapfigure}{r}{0.35\textwidth}
\centerline{\includegraphics[width=5.0cm]{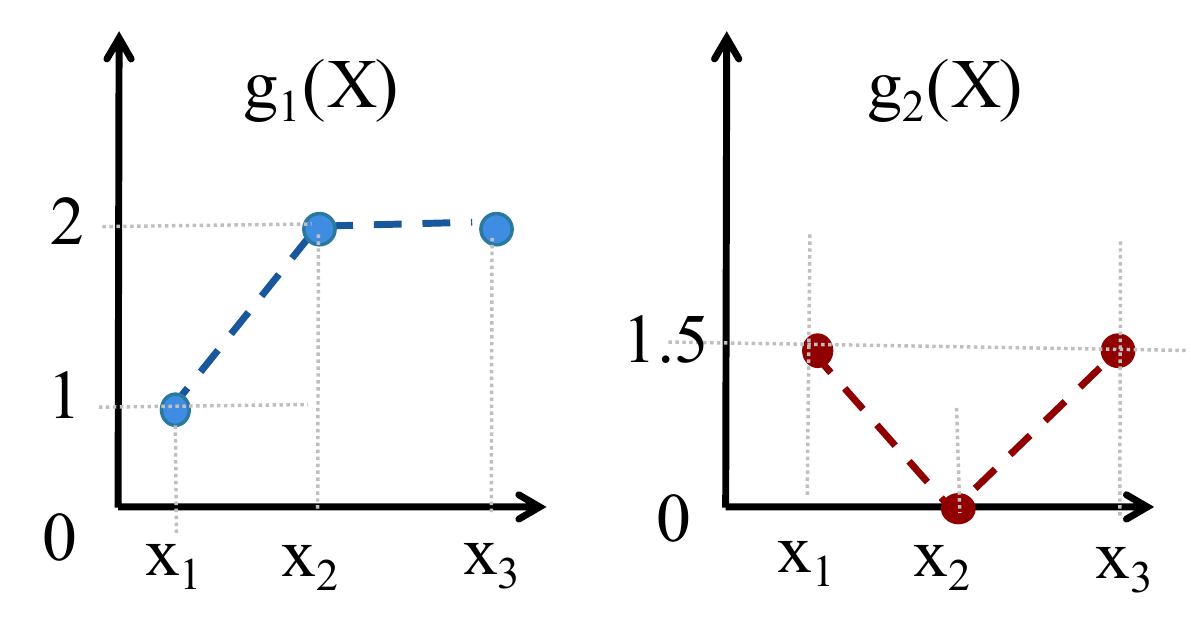}}
\caption{\small Example of two arms. \label{fig:two_arm_example} }
\vspace{-0.5 cm}
\end{wrapfigure}

In fact, it suffices to have only the function $g_{\arm^*}(x)$ corresponding to the optimal arm to be invertible to deem all other arms as non-competitive and to  achieve $\OO(1)$ regret; see Section~\ref{sec:regret} for details. To understand the intuition behind declaring arms as non-competitive for general reward functions, consider the two-arm example below. 


\begin{exple}[Identifying Non-competitive Arms]
\normalfont Consider two-armed bandit problem with reward functions $g_1$ and $g_2$ respectively, as shown in \Cref{fig:two_arm_example}.
Suppose arm $1$ is pulled $10$ times, out of which we observe reward $1$ three times, and $2$ seven times, such that the empirical reward is
\begin{align}
\hat{\meanReward}_1 &= \hat{p}_1 + 2(\hat{p}_2 + \hat{p}_3) = 1.7 \label{eqn:mu_1}
\end{align}
Using \eqref{eqn:mu_1}, we can estimate the distribution $(p_1, p_2, p_3)$ of $X$ to be $\hat{p}_1 = 0.3$ and $\hat{p}_2 + \hat{p}_3$ = 0.7. It is not possible to use this to estimate the reward of arm $2$ since we only know the sum $\hat{p}_2 + \hat{p}_3$. However, we can find an upper bound on the empirical reward of arm $2$ as follows.
\begin{align}
\hat{\meanReward}_2 &= 1.5 \hat{p}_1 + 0   \hat{p}_2 + 1.5 \hat{p}_3 \\
&\leq 1.5 \hat{p}_1 + \max(0,1.5) (\hat{p}_2 + \hat{p}_3) = 1.5
\end{align}
Since the upper bound on arm $2$'s reward (which we refer to as its pseudo-reward) is less than arm $1$'s empirical reward, we consider arm $2$ as \emph{empirically non-competitive} with respect to arm $1$ and do not pull it until it becomes \emph{empirically competitive} again.
\label{exple:twoarms}
\end{exple}



In Section~\ref{sec:algorithm} below we formalize the idea of competitive and non-competitive arms and propose a correlated upper confidence bound (C-UCB) algorithm. In \Cref{sec:regret} we give upper and lower bounds on the regret of the proposed algorithm, and show that the regret is similar to that of UCB with just $C+1$ arms instead of $K$ arms, where $C$ is the number of competitive arms.

\section{C-UCB: The Proposed Correlated-UCB Algorithm}
\label{sec:algorithm}

Our algorithm to choose an arm in each round in the correlated multi-armed bandit framework is a fundamental generalization of the upper confidence bound (UCB1) algorithm presented in \citep{auer2002finite}. In round $\slot$, the UCB1 algorithm chooses the arm that maximizes the upper confidence index $\Index_\arm(t)$ which is defined as
\begin{align}
\Index_\arm(t) &= \hat{\meanReward}_\arm(t) + B\sqrt{\frac{2 \log \slot}{\pulls_\arm(\slot)}}, \label{eqn:UCB1_index}
\end{align}
where $\hat{\meanReward}_\arm(t)$ is the empirical mean of the rewards received from arm $\arm$ until round $t$, and $\pulls_\arm(\slot)$ is the number of times arm $\arm$ is pulled till round $\slot$. The second term causes the algorithm to explore arms that have been pulled only a few times (small $\pulls_\arm(\slot)$). Recall that we assume all rewards to be bounded within an interval of size $B$. When the index $t$ is implied by context, we abbreviate $\hat{\meanReward}_\arm(t)$ and $ \Index_\arm(t)$ to $\hat{\meanReward}_\arm$ and $\Index_\arm$ respectively in the rest of the paper. Also, we use the terms UCB1, UCB, and classic UCB interchangeably to refer to the UCB1 algorithm proposed in \citep{auer2002finite}. 

In correlated MAB framework, the rewards observed from one arm can help estimate the rewards from other arms. Our key idea is to use this information to reduce the amount of exploration required. We do so by evaluating the \emph{empirical pseudo-reward} of every other arm $\ell$ with respect to an arm $\arm$, as we saw in \Cref{exple:twoarms}. If this pseudo-reward is smaller than empirical reward of arm $\arm$, then arm $\ell$ is considered to be \textit{empirically non-competitive} with respect to arm $\arm$, and we do not consider it as a candidate in the UCB1 algorithm. 

The notions of pseudo-reward and empirical competitiveness of arms are defined in \Cref{sec:pseudo_reward} and \Cref{sec:competitive} below, and in \Cref{sec:modified_ucb} we describe how we modify the UCB1 algorithm. The pseudo-code of our algorithm is presented in \Cref{alg:formalAlgo}.

\subsection{Pseudo-Reward of Arm $\ell$ with respect to Arm $\arm$}
\label{sec:pseudo_reward}



The pseudo-reward of arm $\ell$ with respect to arm $\arm$ is an artificial sample of arm $\ell$'s reward generated using the reward observed from arm $\arm$. It is defined as follows.

\begin{defn}[Pseudo-Reward]
\label{defn:pseudo_reward}
Suppose we pull arm $\arm$ and observe reward $\reward$. 
Then the pseudo-reward of arm $\ell$ with respect to arm $\arm$ is
\begin{align}
\estimateReward_{\ell,\arm}(\reward) \triangleq \max_{x: g_\arm(x)=r} g_{\ell}(x). \label{eqn:pseudo_reward}
\end{align}
\end{defn}

The pseudo-reward $\estimateReward_{\ell,\arm}(r)$ gives the maximum possible reward that could have been obtained from arm $\ell$, given the reward observed from arm $\arm$.
In \Cref{exple:twoarms}, if we observe a reward of $r=2$ from arm $1$, $X$ could have been either $x_2$ or $x_3$. Then the pseudo-reward of arm $2$ is $\estimateReward_{2,1} = 1.5$ which is the maximum of $g_2(x_2)$ and $g_2(x_3)$. The pseudo-reward definition also applies to continuous $X$, and it can be directly extended to a latent random vector $\mathbf{X} = (X_1, \dots X_m)$ as well as explained in the supplementary material.

\begin{defn}[Empirical and Expected Pseudo-Reward]
\label{defn:empirical_pseudo_reward}
After $\slot$ rounds, arm $\arm$ is pulled $\pulls_\arm(\slot)$ times. Using these $\pulls_\arm(\slot)$ reward realizations, we can construct the empirical pseudo-reward $\estimateMean_{\ell, \arm}(\slot)$ for each arm $\ell$ with respect to arm $\arm$ as follows. 
\begin{align}
\estimateMean_{\ell, \arm}(\slot) \triangleq \frac{\sum_{\tau=1}^{\slot} \indicator_{k_\tau = k} \estimateReward_{\ell, \arm}(\reward_\slot)}{\pulls_\arm(\slot)}, \qquad \ell \in \{1,\ldots, K\} \setminus \{k\}. 
\end{align}
The expected pseudo-reward of arm $\ell$ with respect to arm $\arm$ is defined as
\begin{align}
\expectedPseudoReward_{\ell, \arm} \triangleq \E{\estimateReward_{\ell, \arm}(g_k(X))}.
\end{align}
\end{defn}

\begin{algorithm}[t]
\hrule 
\vspace{0.1in}
\begin{algorithmic}[1]
\STATE \textbf{Input:} Reward Functions $\{g_1, g_2 \ldots g_\numArms\}$ 
\STATE \textbf{Initialize:} $\pulls_\arm = 0, \Index_\arm = \infty$ for all $\arm \in \{1, 2, \dots \numArms\}$ 
\FOR{ each round $\slot$}
\STATE Find $\arm^{\text{max}} = \arg \max_\arm \pulls_\arm(t-1)$, the arm that has been pulled most times until round $t-1$
\STATE Initialize the empirically competitive set $\mathcal{A} = \{1, 2, \ldots, \numArms\} \setminus \{\arm^{\text{max}}\}.$
\FOR{$\arm \neq \arm^{\text{max}}$}
\IF {$\hat{\meanReward}_{\arm^{\text{max}}} > \estimateMean_{\arm, \arm^{\text{max}}}$} 
\STATE Remove arm $\arm$ from the empirically competitive set: $\mathcal{A} = \mathcal{A} \diagdown \{\arm\}$ 
\ENDIF
\ENDFOR
 \STATE Apply UCB1 over arms in $\mathcal{A} \cup \{\arm^{\text{max}}\} $ by pulling arm $\arm_\slot = \arg \max_{\arm \in \mathcal{A} \cup \{\arm^{\text{max}}\}} \Index_\arm(t-1)$
\STATE Receive reward $\reward_{\slot}$, and update $\pulls_{\arm_\slot} = \pulls_{\arm_\slot} + 1$
\STATE Update Empirical reward: 
$\hat{\meanReward}_{\arm_\slot}(\slot) = \frac{\hat{\meanReward}_{\arm_\slot}(\slot-1)(\pulls_{\arm_\slot}(t)-1) + r_t }{\pulls_{\arm_\slot}(t)}$
\STATE Update the UCB Index: $\Index_{\arm_\slot}(t) = \hat{\meanReward}_{\arm_\slot} + B\sqrt{\frac{2 \log \slot}{\pulls_{\arm_\slot}}}$
\STATE Compute pseudo-rewards for all arms $\arm \neq \arm_\slot$: \,\, 
$\estimateReward_{\arm,\arm_\slot}( \reward_{\slot}) = \max_{x: g_{\arm_\slot}(x) =\reward_{\slot}} g_{\arm}(x).$
\STATE Update empirical pseudo-rewards for all $ \arm \neq \arm_\slot$:  $\estimateMean_{\arm, \arm_\slot}(\slot) = \sum_{\tau: \arm_\tau = \arm_\slot} \estimateReward_{\arm,\arm_\tau}( \reward_\tau)/\pulls_{\arm_\slot}$
\ENDFOR
\end{algorithmic}
\vspace{0.1in}
\hrule
\caption{C-UCB Correlated UCB Algorithm}
\label{alg:formalAlgo}
\end{algorithm}


Note that the empirical pseudo-reward $\estimateMean_{\ell, \arm}(\slot)$ is defined with respect to arm $\arm$ and it is only a function of the rewards observed by pulling $\arm$. It may be possible to get a more accurate estimate of arm $\ell$'s reward by combining the observations from all other arms. However, we consider this rough estimate, and it is sufficient to reduce $K$-armed bandit problem to a $C+1$ armed problem, as we show in \Cref{sec:regret}.

\subsection{Competitive and Non-competitive arms with respect to Arm $\arm$}
\label{sec:competitive}

Using the pseudo-reward estimates defined above, we can classify each arm $\ell \neq k$ as {\em competitive} or {\em non-competitive} with respect the arm $\arm$. To this end, we first define the notion of the pseudo-gap.

\begin{defn}[Pseudo-Gap]
The pseudo-gap $\optimistGap_{\ell,\arm}$ of arm $\ell$ with respect to arm $\arm$ is defined as 
\begin{equation}
\optimistGap_{\ell,\arm} \triangleq \meanReward_{\arm} - \expectedPseudoReward_{\ell, \arm}, \label{eqn:pseudo_gap_defn} 
\end{equation}
i.e., the difference between expected reward of arm $\arm$ and the expected pseudo-reward of arm $\ell$ with respect to arm $\arm$.
\end{defn}
From the definition of pseudo-reward, it follows that the expected pseudo-reward $\expectedPseudoReward_{\ell, \arm}$ is greater than or equal to the expected reward $\meanReward_{\ell}$ from arm $\ell$. Thus, a positive pseudo-gap $\optimistGap_{\ell,\arm} > 0$ indicates that it is possible to classify arm $\ell$ as sub-optimal
using only the rewards observed from arm $k$ (with {\em high} probability as the number of pulls for arm $k$ gets {\em large}); thus, arm
 $\ell$ needs not be explored. Such arms are called non-competitive, as we define below. 

\begin{defn}[Competitive and Non-Competitive arms]
An arm $\ell$ is said to be non-competitive if its pseudo-gap with respect to the optimal arm $\arm^*$ is positive, that is, $\optimistGap_{\ell,\arm^*} > 0$. Similarly, an arm $\ell$ is said to be competitive if $\optimistGap_{\ell,\arm^*} < 0$. The unique best arm $k^*$ has $\optimistGap_{\arm^*,\arm^*} =0$ and is not counted in the set of competitive arms.
\end{defn}

Since the distribution of $X$ is unknown, we can not find the pseudo-gap of each arm and thus have to resort to empirical estimates based on observed rewards. In our algorithm, we use a noisy notion of the competitiveness of an arm defined as follows. Note that since the optimal arm $k^*$ is also not known, empirical competitiveness of an arm $\ell$ is defined with respect to each of the other arms $\arm \neq \ell$. 


\begin{defn}[Empirically Competitive and Non-Competitive arms]
An arm $\ell$ is said to be \lq\lq empirically non-competitive with respect to arm $\arm$ at round $t$" if its empirical pseudo-reward is less than the empirical reward of arm $\arm$, that is, $\hat{\meanReward}_\arm(\slot) - \estimateMean_{\ell,\arm}(\slot) > 0$. Similarly, an arm $\ell \neq \arm$ is deemed empirically competitive with respect to arm $\arm$ at round $t$, if $\hat{\meanReward}_\arm(\slot) - \estimateMean_{\ell,\arm}(\slot) \leq 0$.
\end{defn}

\subsection{Modified UCB1 Algorithm to Eliminate Non-Competitive Arms}
\label{sec:modified_ucb}

The central idea in our correlated UCB algorithm is that after pulling the optimal arm $k^*$ sufficiently large number of times, the non-competitive (and thus sub-optimal) arms can be classified as     empirically non-competitive with increasing confidence, and thus need not be explored. As a result, the non-competitive arms will only be pulled only $\OO(1)$ times. However, the competitive arms cannot be discerned as sub-optimal by just using the rewards observed from the optimal arm, and have to be explored $\Theta(\log \totalPulls)$ times each. Thus, we are able to reduce a $K$-armed bandit to a $C+1$-armed bandit problem, where $C$ is the number of competitive arms. 


Using this idea, our C-UCB algorithm proceeds as follows. After every round $\slot$, we maintain values for empirical reward, $\hat{\meanReward}_\arm(\slot)$, and the UCB1 index $\Index_\arm(\slot)$ for each arm $\arm$. These empirical estimates are based on the $\pulls_\arm(\slot)$ samples of rewards that have been observed for $\arm$ till round $\slot$. In addition to this, we maintain empirical pseudo-reward of arm $\ell$ with respect to arm $\arm$, $\estimateMean_{\ell,\arm}(\slot)$, for all pairs of arms  $(\ell, \arm)$. 
%
%
In each round $\slot$, the algorithm performs the following steps: 
\begin{enumerate}
    \item Select arm $\arm^{\textit{max}} = \arg \max_{\arm} \pulls_{\arm}(t-1)$, that has been pulled the most until round $t-1$.
    \item Identify the set $\mathcal{A}$ of arms that are empirically competitive with respect to arm $\arm^{\textit{max}}$.
    \item Pull the arm $\arm_{\slot} \in \{\mathcal{A} \cup \arm^{\textit{max}} \} $ with the highest UCB1 index $\Index_\arm(t-1)$ (defined in \eqref{eqn:UCB1_index}).
    \item Update the empirical pseudo-rewards $\estimateReward_{\ell, \arm_\slot}$ for all $\ell$, the empirical reward $\estimateMean_{\ell,\arm_\slot}(t)$, and the UCB1 indices of all arms based on the observed reward $\reward_{\slot}$.
\end{enumerate}

In step $1$, we choose the arm that has been pulled the most number of times because we have the maximum number of reward samples from this arm. Thus, it is likely to most accurately identify the non-competitive arms. This property enables the proposed algorithm to achieve an $\OO(1)$ regret contribution from non-competitive arms as we show in \Cref{sec:regret} below. 



\section{Regret Analysis and Bounds}
\label{sec:regret}
\color{black}

We now characterize the performance of the C-UCB algorithm by analyzing the expected value of the cumulative regret (\Cref{defn:cumulative_regret}). The expected regret can be expressed as
\begin{align}
    \E{\regret(\totalPulls)} &=
   \sum_{\arm = 1}^{\numArms} \E{\pulls_\arm(T)} \Delta_\arm, 
    \label{eqn:exp_regret}
\end{align}
where $\Delta_\arm = \E{g_{\arm^*}(X)} - \E{g_\arm(X)} = \meanReward_{\arm^*} - \meanReward_\arm $ is the sub-optimality gap of arm $\arm$ with respect to the optimal arm $\arm^*$, and $\pulls_\arm(T)$ is the number of times arm $\arm$ is pulled in $\totalPulls$ slots.

For the regret analysis, we assume without loss of generality that the reward functions $g_\arm(X)$ satisfy $0 \leq g_\arm(X) \leq 1$ for all $\arm \in \{ 1, 2, \dots \numArms \}$. Note that the C-UCB algorithm does not require this condition on $g_\arm(X)$, and the regret analysis can also be generalized to any bounded reward functions.

\subsection{Instance-Dependent Bounds} 
Most works on multi-armed bandits derive two types of bounds on expected regret: instance-dependent and worst case bounds, depending on whether or not the minimum sub-optimality gap $\gap_{\text{min}}$ goes to $0$ with the total number of rounds $\totalPulls$. Our instance-dependent bounds assume that the minimum gap $\Delta_{\text{min}} = \min_\arm \Delta_\arm$ remains strictly positive as the number of rounds $T \rightarrow \infty$, which is generally true in practice. Worst-case bounds are required when $\Delta_{\text{min}}$ can be arbitrarily small for large $T$. We derive both these bounds for the correlated-UCB algorithm. We use the standard Landau notation in the results, where all asymptotic statements are for large $T$. The proofs of all the results presented below are deferred to the supplement.


In order to bound $\E{\regret(\totalPulls)}$ in \eqref{eqn:exp_regret}, we can analyze the expected number of times sub-optimal arms are pulled, that is, $\E{\pulls_\arm(\totalPulls)}$, for all $\arm \neq \arm^*$. \Cref{thm:NonCompetitiveBound} and \Cref{thm:CompetitiveBound} below show that $\E{\pulls_\arm(\totalPulls)}$ scales as $O(1)$ and $O(\log T)$ for non-competitive and competitive arms respectively. Recall that a sub-optimal arm is said to be non-competitive if its pseudo-gap $\optimistGap_{\arm,\arm^*}>0$, and competitive otherwise.

\begin{thm}[Expected Pulls of a Non-competitive Arm]
\label{thm:NonCompetitiveBound}

If the pseudo-gap $\optimistGap_{\arm,\arm^*} \geq 2\sqrt{\frac{2 \numArms \log \slot_0}{\slot_0}}$, and the sub-optimality gap  $\gap_{\text{min}} \geq 4\sqrt{\frac{\numArms \log \slot_0}{\slot_0}}$ for some constant $\slot_0 > 0$ then
\begin{align}
\E{\pulls_\arm(\totalPulls)} &\leq \numArms \slot_0 + \numArms (\numArms - 1) \sum_{\slot= \numArms \slot_0}^{\totalPulls} 3 \left(\frac{\slot}{\numArms}\right)^{-2} + \sum_{\slot = 1}^{\totalPulls} \slot^{-3}, \label{eqn:upper_bnd_comp}\\
&= \OO(1). 
\end{align} 

\end{thm}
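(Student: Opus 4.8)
The plan is to bound $\E{\pulls_\arm(\totalPulls)}$ for the non-competitive arm $\arm$ by a round-by-round accounting of the only two situations in which C-UCB can select it. First I would read off the selection rule directly: arm $\arm$ can be pulled in round $\slot$ only if it lies in $\mathcal{A} \cup \{\arm^{\text{max}}\}$. Since a non-competitive arm is sub-optimal, whenever the most-pulled arm $\arm^{\text{max}}(\slot-1)$ equals $\arm^*$ and arm $\arm$ is correctly flagged as empirically non-competitive (i.e. $\hat{\meanReward}_{\arm^*} > \estimateMean_{\arm,\arm^*}$), it is excluded from $\mathcal{A}$ and cannot be chosen. This gives the containment
\begin{align}
\{\arm_\slot = \arm\} \subseteq \{\arm^{\text{max}}(\slot-1) \neq \arm^*\} \cup \{\arm^{\text{max}}(\slot-1) = \arm^*,\ \hat{\meanReward}_{\arm^*}(\slot-1) \leq \estimateMean_{\arm,\arm^*}(\slot-1)\},
\end{align}
and the whole proof reduces to summing the probabilities of the two events on the right over $\slot \leq \totalPulls$.

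For the first event I would use the pigeonhole fact that the most-pulled arm always satisfies $\pulls_{\arm^{\text{max}}}(\slot) \geq \slot/\numArms$. When $\slot < \numArms\slot_0$ every arm has fewer than $\slot_0$ pulls, so at most $\numArms\slot_0$ such rounds can occur; bounding arm $\arm$'s pulls in this early phase trivially by the round count yields the leading $\numArms\slot_0$ term. For $\slot \geq \numArms\slot_0$ the threshold $\pulls_{\arm^{\text{max}}} \geq \slot_0$ is automatic, so I would bound $\Pr(\arm^{\text{max}}(\slot) \neq \arm^*)$ by a union bound over sub-optimal arms $j$ of $\Pr(\pulls_j(\slot) \geq \slot/\numArms)$. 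A sub-optimal $j$ reaches $\slot/\numArms$ pulls only if, at the round its count crossed that threshold, its UCB index beat $\arm^*$'s; with at least $\slot/\numArms$ samples its confidence radius is at most $\sqrt{2\numArms\log\slot/\slot}$, which the hypothesis $\gap_{\text{min}} \geq 4\sqrt{\numArms\log\slot_0/\slot_0}$ forces below $\gap_{\text{min}}/2$. A standard Hoeffding/confidence argument then makes each such probability of order $(\slot/\numArms)^{-2}$, and the union bounds over the sub-optimal arms and the relevant pull thresholds produce the $\numArms(\numArms-1)\sum_\slot 3(\slot/\numArms)^{-2}$ term.

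For the second event I would again invoke pigeonhole: when $\arm^{\text{max}}(\slot-1) = \arm^*$ the optimal arm has been sampled at least $\slot/\numArms$ times. Since the true pseudo-gap is $\optimistGap_{\arm,\arm^*} = \meanReward_{\arm^*} - \expectedPseudoReward_{\arm,\arm^*} > 0$, the empirical reversal $\hat{\meanReward}_{\arm^*} \leq \estimateMean_{\arm,\arm^*}$ requires one of $\hat{\meanReward}_{\arm^*}$ or $\estimateMean_{\arm,\arm^*}$ to stray from its mean by at least $\optimistGap_{\arm,\arm^*}/2$. Both are averages of bounded quantities computed from the $\geq \slot/\numArms$ observations of $\arm^*$, so applying Hoeffding's inequality to each and using $\optimistGap_{\arm,\arm^*} \geq 2\sqrt{2\numArms\log\slot_0/\slot_0}$ bounds this probability by $\slot^{-3}$, giving the final $\sum_\slot \slot^{-3}$ term. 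Collecting the three contributions and noting that both sums converge as $\totalPulls \to \infty$ yields $\E{\pulls_\arm(\totalPulls)} = \OO(1)$.

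I expect the middle term to be the main obstacle. The pseudo-reward concentration (third event) is a clean one-shot Hoeffding bound once the sample count is fixed by pigeonhole, but controlling $\Pr(\arm^{\text{max}}(\slot) \neq \arm^*)$ is a statement about the entire trajectory of the embedded UCB sub-routine: I must rule out a sub-optimal arm quietly accumulating a plurality of pulls. The delicate points are converting ``$\pulls_j(\slot) \geq \slot/\numArms$'' into a last-pull confidence event, choosing the pull-count threshold so the confidence radius sits below $\gap_{\text{min}}/2$, and tracking the union-bound factors that generate the $\numArms(\numArms-1)$ constant. The precise numerical constants in the two hypotheses are exactly what is needed to push the two Hoeffding exponents into the summable rates $(\slot/\numArms)^{-2}$ and $\slot^{-3}$.
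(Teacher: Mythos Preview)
Your decomposition and overall plan match the paper's proof almost exactly: split on whether $\arm^{\max}=\arm^*$, handle the $\arm^{\max}=\arm^*$ case with a Hoeffding bound driven by the pseudo-gap (the paper's Lemma~7), and bound $\Pr(\arm^{\max}\neq\arm^*)$ by $\sum_{j\neq\arm^*}\Pr(\pulls_j(\slot)\geq\slot/\numArms)$ via a telescoping ``last crossing'' argument (the paper's Lemma~8).

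There is, however, one real omission in the middle-term analysis. You write that ``a sub-optimal $j$ reaches $\slot/\numArms$ pulls only if, at the round its count crossed that threshold, its UCB index beat $\arm^*$'s.'' That implication holds for vanilla UCB but not for C-UCB: arm $j$ can be pulled even when $I_j(\slot)<I_{\arm^*}(\slot)$, provided $\arm^*$ has been (incorrectly) removed from the competitive set, i.e.\ the current $\arm^{\max}$ is some sub-optimal arm and $\hat{\meanReward}_{\arm^{\max}}>\estimateMean_{\arm^*,\arm^{\max}}$. The paper isolates this as the event $E_1(\slot)$ and bounds it separately (Lemma~4): since whoever is $\arm^{\max}$ has at least $\slot/\numArms$ samples, and $\expectedPseudoReward_{\arm^*,\arm^{\max}}\geq\meanReward_{\arm^*}>\meanReward_{\arm^{\max}}$, a Hoeffding bound on the bounded i.i.d.\ differences $r_\tau-s_{\arm^*,\arm^{\max}}(r_\tau)$ gives $\Pr(E_1(\slot))\leq \slot\exp(-\slot\gap_{\text{min}}^2/(2\numArms))$, which the hypothesis on $\gap_{\text{min}}$ converts to $\slot^{-3}$. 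Only after peeling off $E_1$ does the paper invoke the UCB-index comparison you describe (its Lemma~6). This is exactly why the bound in Lemma~6 carries the constant $3$ rather than $2$, and hence why the middle term of the theorem has the factor $3$.

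Aside from this, two small remarks. First, for the $\arm^{\max}=\arm^*$ case the paper applies Hoeffding once to the single i.i.d.\ sequence $r_\tau-s_{\arm,\arm^*}(r_\tau)$ (bounded in $[-1,1]$, mean $\optimistGap_{\arm,\arm^*}$) rather than separately to $\hat{\meanReward}_{\arm^*}$ and $\estimateMean_{\arm,\arm^*}$; this avoids a superfluous factor of $2$ and is what makes the constant in the pseudo-gap hypothesis exactly $2\sqrt{2\numArms\log\slot_0/\slot_0}$. Second, the telescoping that turns the ``last crossing'' probability into the $(\slot/\numArms)^{-2}$ rate (your admitted obstacle) is carried out explicitly in the paper's Lemma~8 and is short once Lemma~6 is in hand.
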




\begin{thm}[Expected Pulls of a Competitive Arm]
\label{thm:CompetitiveBound}
Expected number of times a competitive arm is pulled can be bounded as 
\begin{align}
\E{\pulls_\arm(\totalPulls)} 
&\leq 8 \frac{\log (\totalPulls)}{\gap_\arm^2} + \left(1 + \frac{\pi^2}{3}\right) + \sum_{\slot = 1}^{\totalPulls} \slot \exp\left(- \frac{\slot \gap_{\text{min}}^2}{2 \numArms}\right), \label{eqn:upper_bnd_non_comp}\\
&= \OO(\log \totalPulls) \quad \text{ if  } \gap_{\text{min}} = \min_k \Delta_\arm  > 0.
\end{align}
\end{thm}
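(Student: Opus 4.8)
The plan is to bound $\E{\pulls_\arm(\totalPulls)} = \sum_{\slot=1}^{\totalPulls} \Pr(\arm_\slot = \arm)$ by conditioning on the identity of the most-pulled arm $\arm^{\text{max}}(\slot)$ that the algorithm uses to build its competitive set. I would split
\begin{align}
\Pr(\arm_\slot = \arm) \leq \Pr(\arm_\slot = \arm,\, \arm^{\text{max}}(\slot) = \arm^*) + \Pr(\arm^{\text{max}}(\slot) \neq \arm^*),
\end{align}
so the first term captures the ``good'' regime in which the reference arm used for filtering is the optimal one, and the second captures the ``bad'' regime where the most-pulled arm is sub-optimal. The three summands in the claimed bound arise, respectively, from the standard logarithmic UCB cost of the first term ($8\log\totalPulls/\gap_\arm^2 + 1 + \pi^2/3$) and the exponentially small probability of the bad regime ($\sum_\slot \slot \exp(-\slot\gap_{\text{min}}^2/(2\numArms))$).

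For the first term, observe that when $\arm^{\text{max}}(\slot) = \arm^*$ the competitive set is formed with respect to $\arm^*$, so a sub-optimal competitive arm $\arm$ can be pulled only if it both survives the empirical-competitiveness filter and has the largest UCB index, in particular $\Index_\arm(\slot-1) \geq \Index_{\arm^*}(\slot-1)$. Dropping the filter (which can only decrease the number of pulls) reduces this exactly to the event analyzed in the classic UCB1 proof, so the Chernoff--Hoeffding argument applied to the pair $(\arm, \arm^*)$ yields
\begin{align}
\sum_{\slot=1}^{\totalPulls} \Pr(\arm_\slot = \arm,\, \arm^{\text{max}}(\slot) = \arm^*) \leq 8\frac{\log \totalPulls}{\gap_\arm^2} + 1 + \frac{\pi^2}{3},
\end{align}
where the constant absorbs the two $\sum_\slot \slot^{-2} = \pi^2/6$ tail sums together with the initial forced pulls.

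The heart of the argument, and the step I expect to be the main obstacle, is bounding $\Pr(\arm^{\text{max}}(\slot) \neq \arm^*)$. The key structural fact is that the optimal arm is always competitive: since the pseudo-reward dominates the true reward, $\expectedPseudoReward_{\arm^*, m} \geq \meanReward_{\arm^*} > \meanReward_m$ for every arm $m$, hence $\optimistGap_{\arm^*, m} < 0$ and $\arm^*$ is never permanently filtered out of the competitive set. Consequently, whenever some sub-optimal arm $j$ is the most-pulled arm it has been pulled at least $(\slot-1)/\numArms$ times (being the maximum over $\numArms$ counts summing to $\slot-1$), and for it to have been repeatedly preferred over the always-available optimal arm its empirical mean must have been inflated by at least $\gap_{\text{min}}/2$. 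I would make this precise with a Hoeffding bound on the $\pulls_j \geq \slot/\numArms$ observed samples, giving $\Pr(\hat{\meanReward}_j \geq \meanReward_j + \gap_{\text{min}}/2) \leq \exp(-2(\slot/\numArms)(\gap_{\text{min}}/2)^2) = \exp(-\slot\gap_{\text{min}}^2/(2\numArms))$; a union (peeling) bound over the unknown sample count $\pulls_j \in \{1,\dots,\slot\}$ produces the extra factor $\slot$, yielding $\Pr(\arm^{\text{max}}(\slot) \neq \arm^*) \leq \slot\exp(-\slot\gap_{\text{min}}^2/(2\numArms))$.

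The delicate part of this last step is rigorously converting ``$j$ is the most-pulled arm'' into a concentration statement about $\hat{\meanReward}_j$: one must argue that the optimal arm's UCB index would have overtaken $j$'s once $j$ accumulated $\Omega(\slot/\numArms)$ pulls unless a low-probability deviation occurred, and handle the randomness of $\pulls_j$ through the peeling argument so the comparison of $\hat{\meanReward}_j$ against $\meanReward_j$ holds uniformly across possible sample counts. Summing the three pieces over $\slot = 1,\dots,\totalPulls$ then gives the stated bound, and the final $\OO(\log\totalPulls)$ claim follows because the exponential sum $\sum_\slot \slot\exp(-\slot\gap_{\text{min}}^2/(2\numArms))$ converges to a constant whenever $\gap_{\text{min}} > 0$, leaving the logarithmic term dominant.
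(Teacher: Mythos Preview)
Your decomposition differs from the paper's in a way that creates a real gap. You split on $\{\arm^{\text{max}}(\slot)=\arm^*\}$ versus $\{\arm^{\text{max}}(\slot)\neq\arm^*\}$, whereas the paper splits on the event $E_1(\slot)=\{\arm^*\text{ is empirically non-competitive at round }\slot+1\}$ versus its complement. These are not the same: $E_1(\slot)$ is strictly contained in $\{\arm^{\text{max}}(\slot)\neq\arm^*\}$, because $\arm^*$ can perfectly well be the most-pulled arm's competitor (i.e.\ remain in the competitive set) even when some other arm is $\arm^{\text{max}}$. The paper's choice matters because on $E_1^c(\slot)$ the optimal arm is in the competitive set \emph{regardless of who $\arm^{\text{max}}$ is}, so $\{\arm_\slot=\arm\}\cap E_1^c(\slot)\subseteq\{\Index_\arm>\Index_{\arm^*}\}$ and the standard UCB count applies; and $\Pr(E_1(\slot))$ is \emph{itself} a single concentration event, since $E_1(\slot)$ means $\hat{\meanReward}_{\arm^{\text{max}}}-\estimateMean_{\arm^*,\arm^{\text{max}}}>0$ while its expectation is $\meanReward_{\arm^{\text{max}}}-\expectedPseudoReward_{\arm^*,\arm^{\text{max}}}\leq -\gap_{\text{min}}$, yielding the bound $\slot\exp(-\slot\gap_{\text{min}}^2/(2\numArms))$ directly via Hoeffding on the $\geq \slot/\numArms$ samples of $\arm^{\text{max}}$.

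Your second term, by contrast, asks for a bound on $\Pr(\arm^{\text{max}}(\slot)\neq\arm^*)$, and the argument you sketch does not establish it. The claim that $\arm^*$ is ``always available'' is only true in the population sense; empirically $\arm^*$ can be filtered out (that is exactly $E_1$), so a sub-optimal arm $j$ can accumulate pulls without ever facing $\Index_{\arm^*}$. Even ignoring that, ``$j$ is the most-pulled arm'' is a statement about the algorithm's trajectory, not a single deviation of $\hat{\meanReward}_j$; you would also need to rule out underestimation of $\arm^*$, and the paper in fact only obtains a \emph{polynomial} bound $\OO((\slot/\numArms)^{-2})$ on $\Pr(\pulls_j(\slot)\geq\slot/\numArms)$ (its Lemma on sub-optimal arms being most-pulled), and that under extra gap assumptions. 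So your claimed exponential bound on $\Pr(\arm^{\text{max}}\neq\arm^*)$ is stronger than what is actually provable here. The fix is simply to replace your split by the paper's: condition on whether $\arm^*$ survives the empirical-competitiveness filter, not on whether it is the reference arm.
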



Substituting the bounds on $\E{\pulls_\arm(\totalPulls)}$ derived in \Cref{thm:NonCompetitiveBound} and \Cref{thm:CompetitiveBound} into \eqref{eqn:exp_regret}, we get the following upper bound on expected regret.

\begin{thm}[Upper Bound on Expected Regret]
\label{thm:upper_bnd_exp_regret}
If the minimum sub-optimality gap $\gap_{\text{min}} \geq 4\sqrt{\frac{\numArms \log \slot_0}{\slot_0}}$, and the pseudo-gap of non-competitive arms $\optimistGap_{\arm,\arm^*} \geq 2\sqrt{\frac{2 \numArms \log \slot_0}{\slot_0}}$ for some constant $t_0 > 0$, then the expected cumulative regret of the C-UCB algorithm is
\begin{align}
\E{\regret(\totalPulls)} &\leq \sum_{\arm \in \setofArms} \Delta_\arm  U^{(c)}_\arm(\totalPulls) + \sum_{\arm' \in \{ 1, \ldots , \numArms \} \setminus \{ \setofArms \cup \arm^* \}
}\Delta_{\arm'} U^{(nc)}_{\arm'}(\totalPulls) , \label{eqn:upper_bnd_exp_regret}\\
&= C \cdot \OO(\log \totalPulls) + \OO(1), \label{eqn:upper_bnd_exp_regret_order}
\end{align} 
where $\mathcal{C} \subseteq \{ 1, \ldots , \numArms \} \setminus \{\arm^*\}$ is set of competitive arms with cardinality $C$, $U^{(c)}_\arm (\totalPulls)$ is the upper bound on $\E{\pulls_\arm(\totalPulls)}$ for competitive arms given in \eqref{eqn:upper_bnd_non_comp}, and $U^{(nc)}_\arm(\totalPulls)$ is the upper bound for non-competitive arms given in \eqref{eqn:upper_bnd_comp}.

\end{thm}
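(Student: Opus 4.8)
The plan is to derive this bound as an immediate consequence of the regret decomposition in \eqref{eqn:exp_regret} combined with the per-arm bounds already established in \Cref{thm:NonCompetitiveBound} and \Cref{thm:CompetitiveBound}. I would start from
\begin{equation*}
\E{\regret(\totalPulls)} = \sum_{\arm=1}^{\numArms} \E{\pulls_\arm(\totalPulls)}\, \Delta_\arm,
\end{equation*}
and observe that the optimal arm $\arm^*$ contributes nothing since $\Delta_{\arm^*}=0$, so the sum effectively runs only over the $\numArms-1$ sub-optimal arms.

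Next I would partition these sub-optimal arms into the competitive set $\setofArms$ (those with $\optimistGap_{\arm,\arm^*}<0$, of cardinality $C$) and the non-competitive arms (those with $\optimistGap_{\arm,\arm^*}>0$), which splits the regret sum into the two groups written in \eqref{eqn:upper_bnd_exp_regret}. For each competitive arm, the instance-dependent assumption $\gap_{\text{min}}>0$ lets me invoke \Cref{thm:CompetitiveBound} to get $\E{\pulls_\arm(\totalPulls)} \leq U^{(c)}_\arm(\totalPulls) = \OO(\log \totalPulls)$. For each non-competitive arm, the two gap conditions in the statement — namely $\gap_{\text{min}} \geq 4\sqrt{\numArms \log \slot_0/\slot_0}$ and $\optimistGap_{\arm,\arm^*} \geq 2\sqrt{2\numArms \log \slot_0/\slot_0}$ — are exactly the hypotheses of \Cref{thm:NonCompetitiveBound}, so I can substitute $\E{\pulls_\arm(\totalPulls)} \leq U^{(nc)}_\arm(\totalPulls) = \OO(1)$. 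Multiplying each bound by its gap $\Delta_\arm$ (which is bounded, since $0\le g_\arm \le 1$) and summing produces \eqref{eqn:upper_bnd_exp_regret}; collecting the asymptotic orders then yields the $C\cdot\OO(\log \totalPulls)+\OO(1)$ claim of \eqref{eqn:upper_bnd_exp_regret_order}.

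The only step needing genuine care — though it is not deep — is confirming that $U^{(nc)}_\arm(\totalPulls)$ is truly $\OO(1)$, i.e. that the right-hand side of \eqref{eqn:upper_bnd_comp} remains bounded as $\totalPulls\to\infty$. This reduces to checking convergence of the two infinite series $\sum_\slot 3(\slot/\numArms)^{-2}$ and $\sum_\slot \slot^{-3}$, both of which converge as constant multiples of $\sum_\slot \slot^{-2}$ and $\sum_\slot \slot^{-3}$ respectively. The remaining subtlety is purely bookkeeping: I must ensure the non-competitive gap condition is assumed to hold simultaneously for every non-competitive arm under a single common constant $\slot_0$, so that the finitely many $\OO(1)$ contributions can be summed without inflating the order. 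Since both sums range over a fixed, finite set of arms, the combination of a constant number of $\OO(\log \totalPulls)$ terms and a constant number of $\OO(1)$ terms gives the stated order exactly.
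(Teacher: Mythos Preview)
Your proposal is correct and mirrors the paper's own proof, which simply states that the result ``follows directly by combining the results on Theorem 1 and Theorem 2'' via the regret decomposition \eqref{eqn:exp_regret}. Your additional care about the convergence of the series in $U^{(nc)}_\arm(\totalPulls)$ and about using a single common $\slot_0$ is more detail than the paper provides, but entirely consistent with its argument.
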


\vspace{0.05cm}
\begin{rem}
\normalfont
If the set of competitive arms $\setofArms$ is empty (i.e., the number of competitive arms 
$C =0$), then our algorithm will lead to (see \eqref{eqn:upper_bnd_exp_regret_order}) an expected regret of $\OO(1)$, instead of the typical $\OO(\log T)$ regret scaling in classic multi-armed bandits. A simple case where $\setofArms$ is empty is when the reward function $g_{\arm^*}(X)$ corresponding to the arm $\arm^*$ is invertible. This is
because, for all sub-optimal arms $\ell \neq k^*$, the pseudo-gap $\optimistGap_{\ell, \arm^*} = \gap_\ell > 0$, resulting in those arms being non-competitive. The set $\setofArms$ can  be empty in more general cases where none of the arms are invertible. Then, our algorithm still achieves an expected regret of $\OO(1)$.
\end{rem}


\vspace{0.1cm}
\begin{rem}
\normalfont
For the UCB1 algorithm \citep{auer2002finite}, the first sum in \eqref{eqn:upper_bnd_exp_regret} is taken over all arms. In this sense, our C-UCB algorithm is able to reduce a $\numArms$-armed bandit problem to a $C+1$-armed bandit problem. 
\end{rem}

Next, we present a lower bound on the expected regret $\E{\regret (\totalPulls)}$. Intuitively, if an arm $\ell$ is \textit{competitive}, it can not be deemed sub-optimal by only pulling the optimal arm $\arm^*$ infinitely many times. This indicates that exploration is necessary for competitive arms. The proof of this bound closely follows that of the 2-armed classical bandit problem \citep{lai1985asymptotically}; i.e., we construct a new bandit instance under which a previously sub-optimal arm becomes optimal without affecting reward distribution of any other arm. 

\begin{thm}[Lower Bound on Expected Regret]
\label{thm:lower_bnd_exp_regret}

For any algorithm that achieves a sub-polynomial regret, 

\begin{equation}
    \lim_{\totalPulls \rightarrow \infty}\inf \frac{\E{\regret (\totalPulls)}}{\log (\totalPulls)} \geq 
    \begin{cases} 
    \max_{\arm \in \setofArms}\frac{\Delta_k}{D(f_{R_k} || f_{\tilde{R}_k})} \quad &\text{if } C > 0,\\
    0 \quad &\text{if } C = 0. 
    \end{cases}
\end{equation}

\label{thm:lowerBound}
 \end{thm}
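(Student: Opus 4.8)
The plan is to follow the classical change-of-measure argument of Lai and Robbins~\citep{lai1985asymptotically}, specialized to our latent-source model. For each competitive arm $\arm \in \setofArms$ I would construct an alternative bandit instance, specified by a perturbed latent distribution $F_{\tilde{X}}$, under which arm $\arm$ becomes the unique optimal arm while the reward distribution observed from the original optimal arm $\arm^*$ is left unchanged. Because the two instances are then statistically identical whenever the learner pulls $\arm^*$, the only way to tell them apart is to pull arm $\arm$; a standard information-theoretic inequality will force any algorithm with sub-polynomial regret to pull arm $\arm$ at least $\approx \log \totalPulls / D(f_{R_\arm} \| f_{\tilde{R}_\arm})$ times in expectation, and multiplying by the gap $\Delta_\arm$ and maximizing over $\arm \in \setofArms$ yields the claimed bound.

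The crux is the construction of $F_{\tilde{X}}$, and this is exactly where competitiveness enters. Partition the sample space into the level sets $L_r = \{x : g_{\arm^*}(x)=r\}$ of the optimal arm's reward function. I would build $\tilde{X}$ by redistributing, within each level set $L_r$, the conditional mass of $X$ so as to maximize the conditional expectation of $g_\arm$, while keeping the total mass $\Pr(g_{\arm^*}(X)=r)$ of every level set unchanged. By construction $g_{\arm^*}(\tilde{X})$ has the same law as $g_{\arm^*}(X)$, so arm $\arm^*$'s reward distribution --- and hence $\meanReward_{\arm^*}$ --- is preserved, whereas the new expected reward of arm $\arm$ equals its expected pseudo-reward $\expectedPseudoReward_{\arm,\arm^*} = \E{\max_{x:\,g_{\arm^*}(x)=g_{\arm^*}(X)} g_\arm(x)}$. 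Since arm $\arm$ is competitive, $\optimistGap_{\arm,\arm^*} = \meanReward_{\arm^*} - \expectedPseudoReward_{\arm,\arm^*} < 0$, i.e.\ $\expectedPseudoReward_{\arm,\arm^*} > \meanReward_{\arm^*}$, so in the new instance arm $\arm$ strictly outperforms $\arm^*$; a mild additional perturbation makes it the unique best arm. Writing $f_{R_\arm}$ and $f_{\tilde{R}_\arm}$ for the laws of arm $\arm$'s reward $g_\arm(X)$ under $F_X$ and $F_{\tilde{X}}$ respectively, arm $\arm$ is (within the two-arm sub-problem on $\{\arm,\arm^*\}$) the only arm whose observed reward distribution differs across the two instances.

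With this pair of instances in hand, the remaining argument is routine. I would invoke the standard divergence-decomposition inequality, which for any algorithm and any event $E$ gives $\E{\pulls_\arm(\totalPulls)}\,D(f_{R_\arm}\|f_{\tilde{R}_\arm}) \geq d\big(\Pr_{F_X}(E),\,\Pr_{F_{\tilde{X}}}(E)\big)$, where $d(\cdot,\cdot)$ denotes the binary relative entropy. Taking $E=\{\pulls_\arm(\totalPulls)\geq \totalPulls/2\}$ and using the sub-polynomial-regret hypothesis on both instances --- so that $\Pr_{F_X}(E)\to 0$ while $\Pr_{F_{\tilde{X}}}(E)\to 1$ --- yields $\liminf_{\totalPulls\to\infty}\E{\pulls_\arm(\totalPulls)}/\log\totalPulls \geq 1/D(f_{R_\arm}\|f_{\tilde{R}_\arm})$. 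Substituting into $\E{\regret(\totalPulls)}\geq \Delta_\arm\,\E{\pulls_\arm(\totalPulls)}$ and taking the maximum over $\arm\in\setofArms$ establishes the $C>0$ case.

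Finally, the $C=0$ case is immediate: regret is nonnegative, so $\liminf_{\totalPulls\to\infty}\E{\regret(\totalPulls)}/\log\totalPulls \geq 0$ holds trivially, and moreover the construction above provably cannot be carried out, since when arm $\arm$ is non-competitive we have $\expectedPseudoReward_{\arm,\arm^*} < \meanReward_{\arm^*}$, so no reallocation of mass within the level sets of $g_{\arm^*}$ can lift arm $\arm$ above $\arm^*$; this confirms that no logarithmic penalty is forced and is consistent with the $\OO(1)$ upper bound of \Cref{thm:upper_bnd_exp_regret}. I expect the main obstacle to be the construction step of the second paragraph: one must verify that $F_{\tilde{X}}$ is a valid probability distribution attaining conditional mean exactly $\expectedPseudoReward_{\arm,\arm^*}$ (handling level sets on which the supremum of $g_\arm$ is not attained via a limiting argument, and ensuring $D(f_{R_\arm}\|f_{\tilde{R}_\arm})$ stays finite), and, for the full $\numArms$-armed problem, either reduce cleanly to the two-arm sub-problem on $\{\arm,\arm^*\}$ or check that the remaining arms' contributions to the divergence decomposition do not weaken the bound.
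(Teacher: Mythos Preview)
Your proposal is correct and follows essentially the same route as the paper: the paper's \Cref{lem:ChangeDistribution} constructs $F_{\tilde{X}}$ by the same level-set redistribution (placing mass $(1-\epsilon)$ on the maximizer within each level set of $g_{\arm^*}$, with the residual $\epsilon$-smear handling precisely your finite-divergence concern), and then invokes the two-armed Lai--Robbins bound (\Cref{lem:LaiRobbins2Arms}, proved via the same divergence-decomposition/high-probability Pinsker argument with the event $\{\pulls_\arm(\totalPulls)\geq \totalPulls/2\}$ that you describe). The paper likewise reduces to the two-arm sub-problem on $\{\arm,\arm^*\}$ and treats the extension to $\numArms$ arms as loosely as you anticipated in your final paragraph.
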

Here $f_{R_k}$ is the reward distribution of arm $k$, which is linked with $f_X$ since $R_k = g_k(X)$. The term $f_{\tilde{R}_{k}}$ represents the reward distribution of arm $k$ in the new bandit instance where arm $k$ becomes optimal and distribution $f_{R_{k^{*}}}$ is unaffected. The divergence term represents "the amount of distortion needed in $f_X$ to make arm $k$ optimal", and hence captures the problem difficulty in the lower bound expression. 


\begin{rem}
\normalfont
From \Cref{thm:upper_bnd_exp_regret}, we see that whenever $C > 0$, our proposed algorithm achieves $\OO(\log \totalPulls)$ regret matching the lower bound given in Theorem \ref{thm:lowerBound} order-wise. Also, when $C = 0$, our algorithm achieves $\OO(1)$ regret. Thus, our algorithm achieves bounded regret whenever possible, i.e., when $C = 0$. 
\end{rem}




\subsection{Worst Case Bound on Expected Regret}

Our instance-dependent bounds assumed that the minimum gap $\gap_{\text{min}} \geq 4 \sqrt{\frac{\numArms \log \slot_0}{\slot_0}}$ for some $\slot_0 > 0$, with a similar assumption on the pseudo-gap. We now present an upper bound the on expected regret without this assumption, when $\Delta_k$ can scale with $\totalPulls$ and become arbitrarily small as $\totalPulls \rightarrow \infty$.


\begin{thm}[Worst Case Expected Regret]
In the worst case, the expected regret of the C-UCB algorithm is $\OO(\sqrt{\totalPulls \log (\totalPulls)})$.
\label{thm:worstCase}
\end{thm}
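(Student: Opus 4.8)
The plan is to prove the $\OO(\sqrt{\totalPulls \log \totalPulls})$ bound via the standard gap-based splitting argument, which deliberately avoids the instance-dependent assumption $\gap_{\text{min}} \ge 4\sqrt{\numArms \log \slot_0 / \slot_0}$ used earlier. Starting from the exact decomposition $\E{\regret(\totalPulls)} = \sum_{\arm \neq \arm^*} \gap_\arm \E{\pulls_\arm(\totalPulls)}$ from \eqref{eqn:exp_regret}, I would fix a threshold $\delta > 0$ (to be optimized at the end) and partition the sub-optimal arms into a \emph{small-gap} group $\{\arm : \gap_\arm \le \delta\}$ and a \emph{large-gap} group $\{\arm : \gap_\arm > \delta\}$. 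The contribution of each group is then bounded separately: the former using only the trivial pull budget, and the latter using a clean UCB-type logarithmic pull bound.

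For the small-gap group I would invoke only the deterministic identity $\sum_{\arm} \pulls_\arm(\totalPulls) = \totalPulls$, which yields $\sum_{\arm : \gap_\arm \le \delta} \gap_\arm \E{\pulls_\arm(\totalPulls)} \le \delta \totalPulls$. For the large-gap group I need a per-arm bound of the clean form $\E{\pulls_\arm(\totalPulls)} \le 8 \log \totalPulls / \gap_\arm^2 + \OO(1)$ in which the additive constant does \emph{not} depend on $\gap_{\text{min}}$; this gives $\sum_{\arm : \gap_\arm > \delta} \gap_\arm \E{\pulls_\arm(\totalPulls)} \le 8 \numArms \log \totalPulls / \delta + \OO(\numArms)$. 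Adding the two groups and optimizing the threshold by choosing $\delta = \sqrt{8 \numArms \log \totalPulls / \totalPulls}$ balances the two terms and produces $\E{\regret(\totalPulls)} \le 2\sqrt{8 \numArms \totalPulls \log \totalPulls} + \OO(\numArms) = \OO(\sqrt{\totalPulls \log \totalPulls})$ for fixed $\numArms$.

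The main obstacle is justifying the clean per-arm bound for the large-gap arms, because the competitive-arm bound in \Cref{thm:CompetitiveBound} carries the extra term $\sum_{\slot} \slot \exp(-\slot \gap_{\text{min}}^2 / 2\numArms)$, which grows like $\gap_{\text{min}}^{-4}$ precisely in the worst-case regime $\gap_{\text{min}} \to 0$ and would destroy the bound. To circumvent this, I would re-derive the pull bound directly from the elimination structure of C-UCB rather than reusing \Cref{thm:CompetitiveBound}. The key point is that C-UCB pulls a sub-optimal arm $\arm$ only when $\arm$ lies in the empirically competitive candidate set \emph{and} simultaneously maximizes the UCB index over that set; since restricting the candidate set can only shrink the event on which $\arm$ is selected relative to unconstrained UCB1, the Chernoff--Hoeffding concentration argument of \citep{auer2002finite} applies and gives $\E{\pulls_\arm(\totalPulls)} \le 8 \log \totalPulls / \gap_\arm^2 + 1 + \pi^2/3$ with a $\gap_{\text{min}}$-free additive constant.

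The one subtlety to handle in that re-derivation is the case where the optimal arm $\arm^*$ is temporarily excluded from the candidate set, so the UCB comparison is not directly against $\arm^*$. This occurs only on the event that $\arm^*$ is declared empirically non-competitive with respect to $\arm^{\text{max}}$, which requires $\hat{\meanReward}_{\arm^{\text{max}}} > \estimateMean_{\arm^*, \arm^{\text{max}}}$; since $\expectedPseudoReward_{\arm^*, \arm^{\text{max}}} \ge \meanReward_{\arm^*} \ge \meanReward_{\arm^{\text{max}}}$, this is a concentration-failure event whose probability decays in $\pulls_{\arm^{\text{max}}}$ and whose total contribution across rounds is $\OO(1)$ uniformly in $\gap_{\text{min}}$. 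Folding this $\OO(1)$ term into the additive constant completes the per-arm bound and hence establishes the worst-case rate.
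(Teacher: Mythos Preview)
Your high-level plan (split by a threshold $\delta$, bound small-gap arms trivially by $\delta T$, bound large-gap arms by a $\Delta_{\min}$-free logarithmic pull count, then optimize $\delta$) matches the paper's proof exactly. The issue is entirely in how you justify the $\Delta_{\min}$-free per-arm bound.

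Your claim that ``restricting the candidate set can only shrink the event on which $\arm$ is selected relative to unconstrained UCB1'' is false: if $\arm^*$ is dropped from the candidate set then the event $\{\Index_\arm = \max_{j \in \mathcal{A}\cup\{\arm^{\max}\}} \Index_j\}$ is \emph{strictly larger} than $\{\Index_\arm = \max_j \Index_j\}$, so the Auer et~al.\ counting argument does not transfer by monotonicity. You acknowledge this and try to absorb it into an $\OO(1)$ term, but that step is also wrong. The elimination event $E_1(\slot)=\{\hat\meanReward_{\arm^{\max}} > \estimateMean_{\arm^*,\arm^{\max}}\}$ is controlled by the margin $\expectedPseudoReward_{\arm^*,\arm^{\max}} - \meanReward_{\arm^{\max}} \ge \gap_{\arm^{\max}}$, and $\arm^{\max}$ can be the arm with gap $\gap_{\min}$ (e.g.\ when all $g_j$ are invertible so the pseudo-rewards are exact and the inequality is an equality). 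Hoeffding then gives $\Pr(E_1(\slot))\le \slot\exp(-\slot\gap_{\min}^2/2\numArms)$, whose sum over $\slot$ scales like $\gap_{\min}^{-4}$; this is exactly the term you were trying to avoid, not a uniform $\OO(1)$.

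The paper closes this gap with a genuinely different device: a \emph{chain} decomposition. Order the arms $\meanReward_1>\meanReward_2>\cdots$ and write $\{\arm_\slot=\arm\}\subset \bigcup_{\ell=1}^{\arm-1}\{E_1,\ldots,E_{\ell-1},\,E_\ell^c,\,\Index_\arm>\Index_\ell\}\cup\{E_1,\ldots,E_{\arm-1}\}$. Because $\gap_\arm$ exceeds the threshold, by pigeonhole there is some index $\arm'\le\arm$ with a large consecutive gap $\meanReward_{\arm'-1}-\meanReward_{\arm'}$; the joint event $\{E_1,\ldots,E_{\arm'}\}$ is then summable by a margin that does \emph{not} depend on $\gap_{\min}$. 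For the earlier links $\ell<\arm'$, the consecutive gaps are small, so $\meanReward_\ell-\meanReward_\arm \ge \gamma\gap_\arm$ and the standard UCB comparison $\Pr(\Index_\arm>\Index_\ell)$ gives $\OO(\log T/\gap_\arm^2)$. Summing yields $\E{\pulls_\arm(T)}\le \beta\log T/\gap_\arm^2$ with $\beta$ independent of $\gap_{\min}$, which is precisely the clean bound you need; your proposal is missing this chain argument.
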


Note that this worst case regret bound is the same as that obtained for the UCB1 algorithm \citep{auer2002finite} when the arms are independent. This demonstrates that our algorithm can achieve the same order-wise worst case regret as classic UCB.

\section{Simulation Results}
\label{sec:simulations}

We now present simulation results for the case where $X$ is a discrete random variable (simulations for continuous $X$ and random vector $\mathbf{X}$ are shown in the supplement). 
We consider the reward functions $g_1(X), g_2(X)$ and $g_3(X)$ shown in \Cref{fig:sim_reward_funcs} for all simulation plots. However, the probability distribution ${P}_X = (p_{x_1}, p_{x_2}, \dots p_{x_5})$ of $X$ is different for each of the following cases given below. For each case, \Cref{fig:simulationDiscrete} shows the cumulative regret versus the number of rounds. The cumulative regret is averaged over $500$ simulation runs, and for each run we use the same reward realizations for both the C-UCB and the vanilla UCB1 algorithms.

\begin{wrapfigure}{r}{0.5\textwidth}
\centerline{\includegraphics[width=7.2cm]{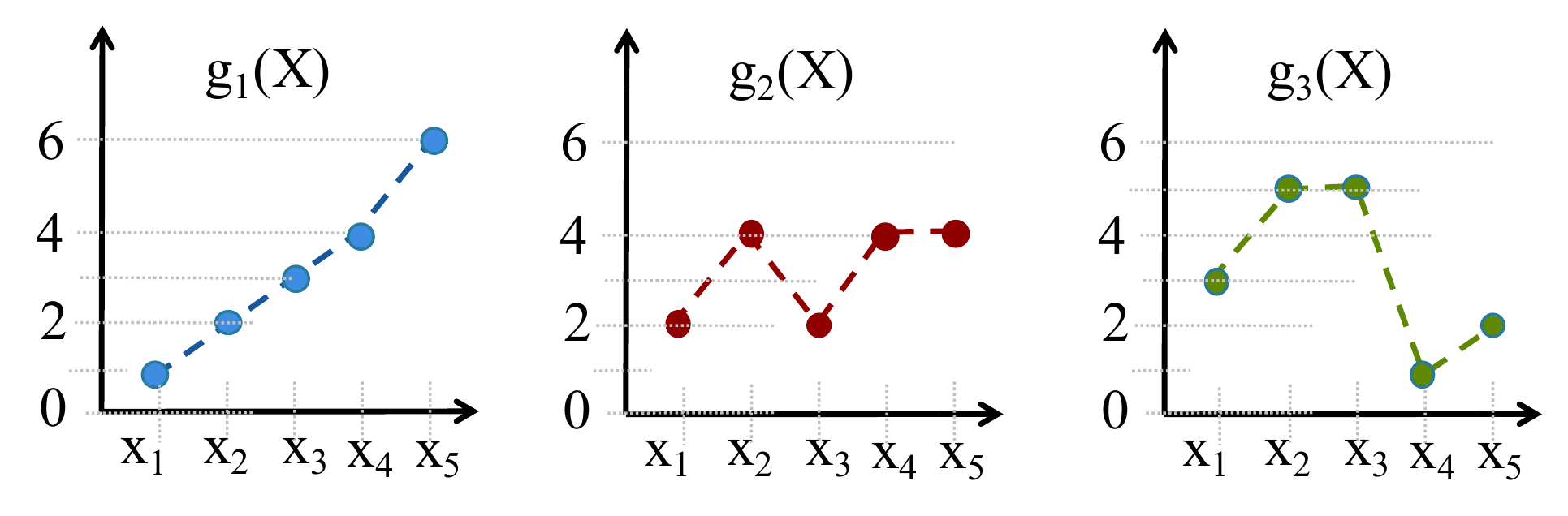}}
\caption{Reward Functions used for the simulation results presented in \Cref{fig:simulationDiscrete}. \label{fig:sim_reward_funcs} }
\vspace{-0.35cm}
\end{wrapfigure}

\textbf{Case 1: No competitive arms.}
Here, we set ${P}_X = (0.1, 0.2, 0.25, 0.25, 0.2)$. For this probability distribution, arm $1$ is optimal, and arms $2$ and $3$ are \textit{non-competitive}. Since both arm $2$ and arm $3$ are non-competitive, our result from \Cref{thm:NonCompetitiveBound} suggests that regret of C-UCB algorithm should not scale with the number of rounds $\totalPulls$. This is supported by our simulation results as well. We see in \Cref{fig:InvertibleDiscrete} that the proposed C-UCB algorithm achieves a constant regret and is significantly superior to the UCB1 algorithm as it is able to exploit the correlation of rewards between the arms. 

\textbf{Case 2: One competitive arm.} Let ${P}_X = (0.25, 0.17, 0.25, 0.17, 0.16)$ which results arm $3$ being optimal. Arm $1$ is \textit{non-competitive} while arm $2$ is \textit{competitive}. We expect from our results that number of pulls of arm $1$ should not scale with $\totalPulls$, while the number of pulls for arm $2$ can scale with the $\totalPulls$. This phenomenon can be seen in \Cref{fig:OneCompetitiveDiscrete}. The regret of C-UCB algorithm is much smaller than the UCB1 algorithm as C-UCB algorithm is not exploring arm 1. However, the regret scales with the number of rounds $\totalPulls$ as it is necessary to explore Arm 2. 

\textbf{Case 3: Two competitive arms.} In the last scenario, we set ${P}_X = (0.05, 0.3, 0.3, 0.05, 0.3)$. For this distribution, arm $3$ is optimal and arms $1$ and $2$ are both \textit{competitive}. Since both arms are competitive, exploration is necessary for both arms. Therefore, as we see in \Cref{fig:TwoCompetitiveDiscrete}, the regret obtained under C-UCB and UCB1 are similar and scale with the number of rounds $\totalPulls$.
\begin{figure}[htb]
    \begin{subfigure}{0.33\textwidth}        
        \includegraphics[width=\linewidth]{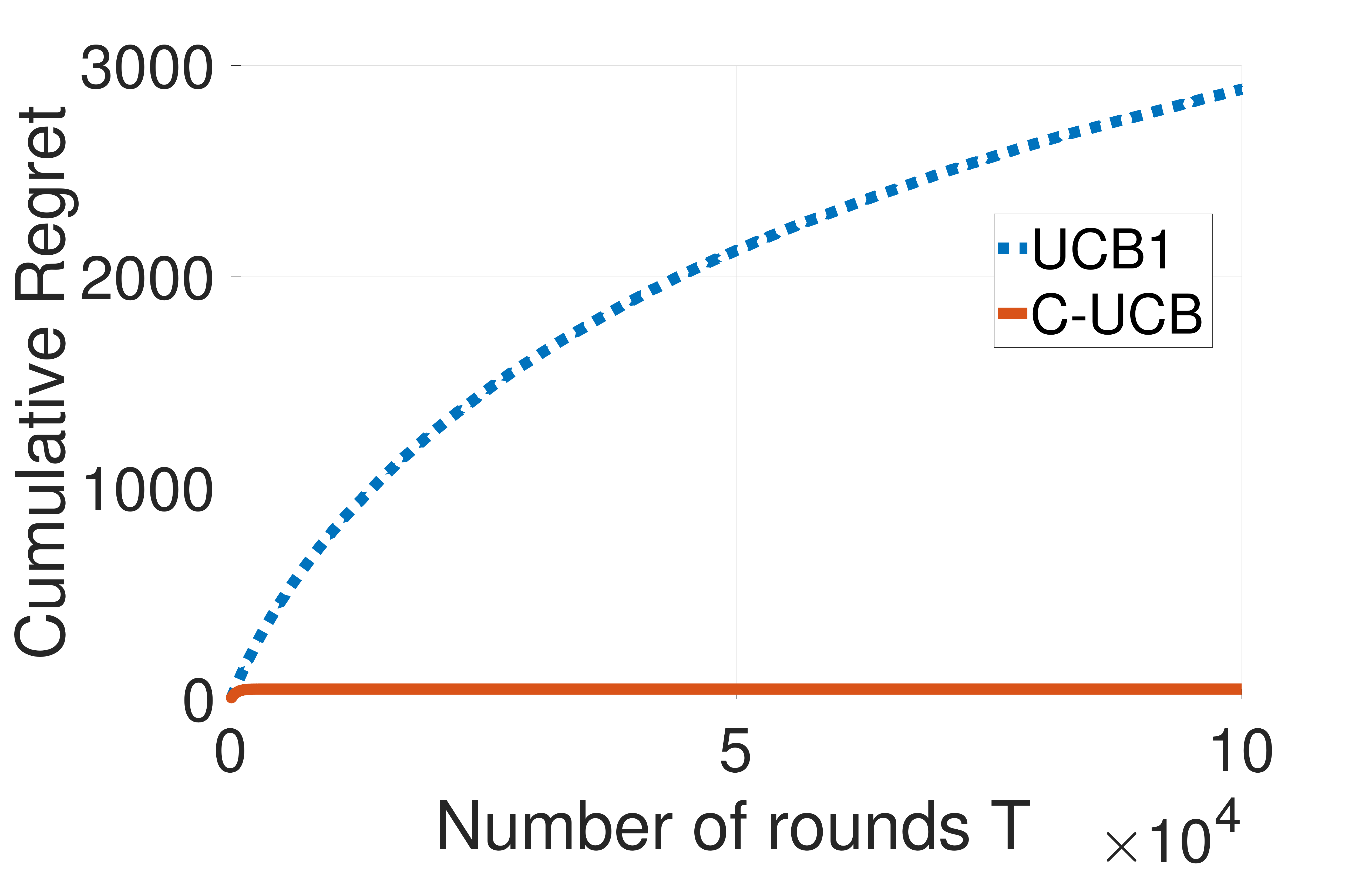}
        \caption{No competitive arms} 
        \label{fig:InvertibleDiscrete}
    \end{subfigure}%
    \begin{subfigure}{0.33\textwidth}
        \includegraphics[width=\linewidth]{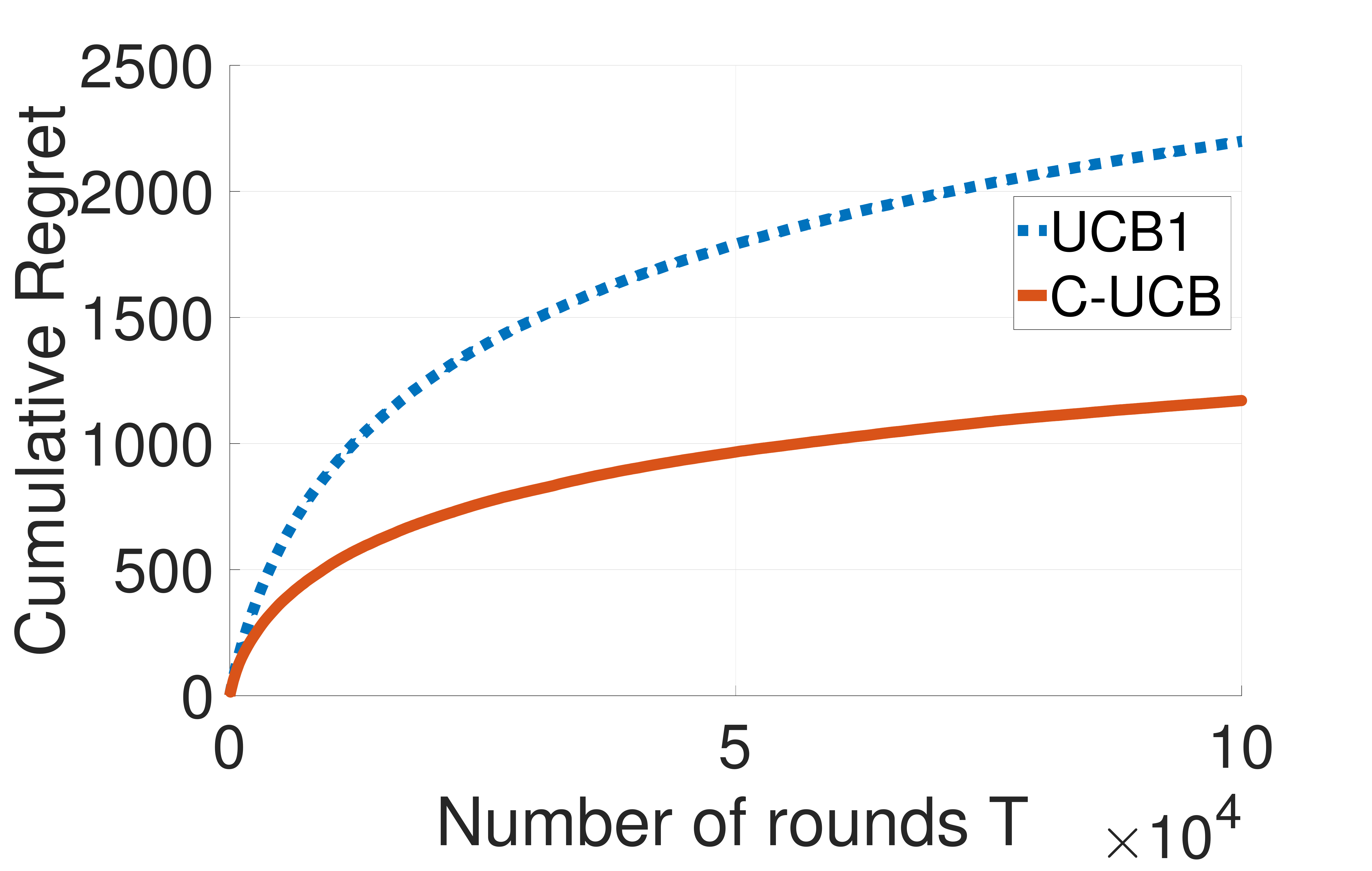}
        \caption{Only One Competitive arm} 
        \label{fig:OneCompetitiveDiscrete}
    \end{subfigure}%
    \begin{subfigure}{0.33\textwidth}
        \includegraphics[width=\linewidth]{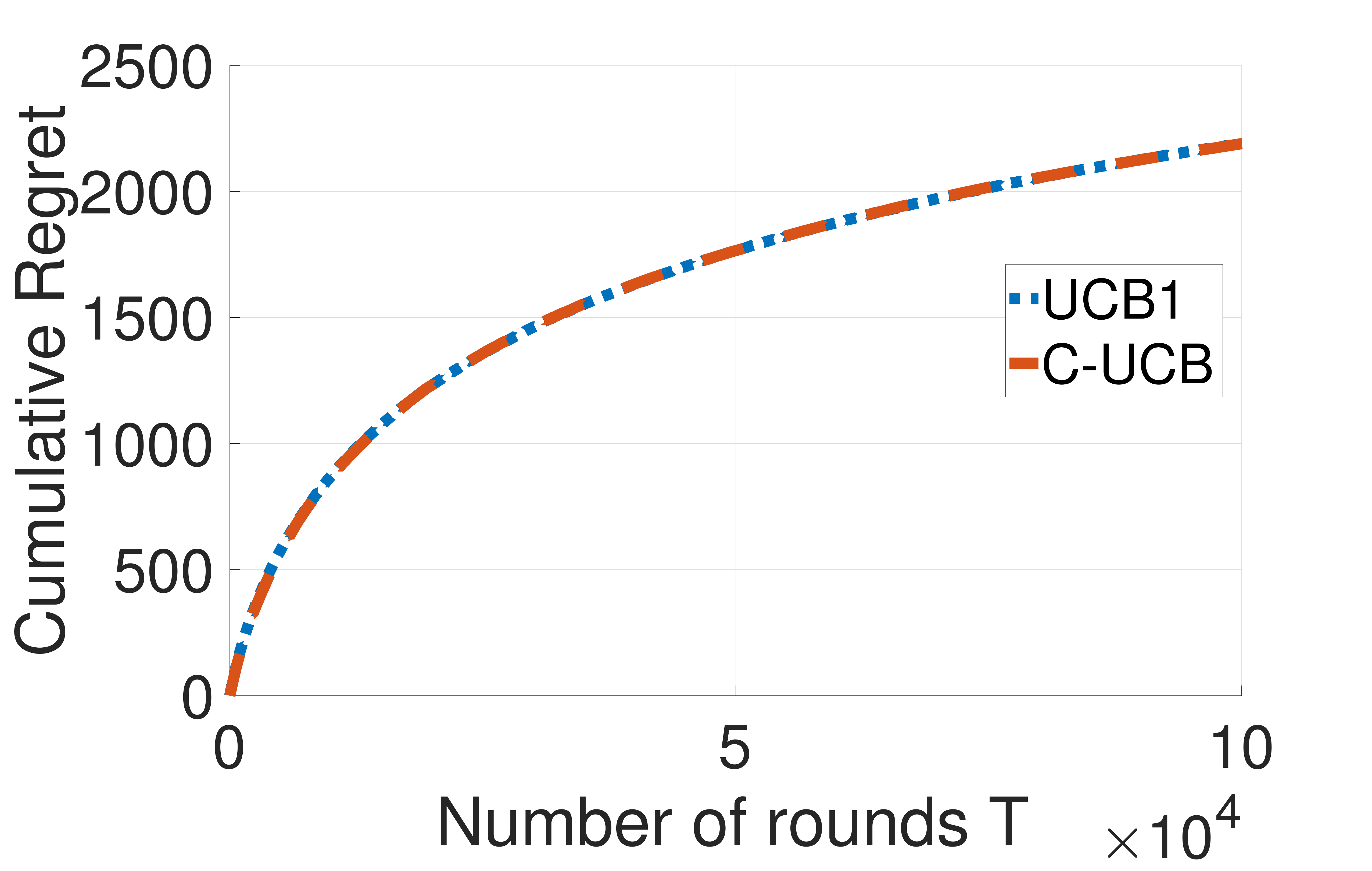}
        \caption{Both Arms are Competitive}
        \label{fig:TwoCompetitiveDiscrete}
    \end{subfigure}
\caption{For the reward functions in \Cref{fig:sim_reward_funcs}, the cumulative regret of C-UCB is smaller than vanilla-UCB1 in all the three cases above.\label{fig:simulationDiscrete}}
\vspace{-0.5cm}
\end{figure}


\section{Concluding Remarks}
\label{sec:conclusion}
This work studies a correlated multi-armed bandit (MAB) framework where the rewards obtained by pulling the $K$ different arms are functions of a common latent random variable $X$. We propose the C-UCB algorithm which achieves significant regret-reduction over the classic UCB. In fact, C-UCB is able to achieve a constant (instead of the standard logarithmic) regret in certain cases. A key idea behind the success of this algorithm is that correlation helps us use reward samples from one arm to generate pseudo-rewards from other arms, thus obviating the need to explore them. 
%
%
We believe that this idea is applicable more broadly to several other sequential decision-making problems. Ongoing work includes generalization of other multi-armed bandit algorithms such as Thompson sampling \citep{agrawal2013thompson}, and understanding the scaling of regret with respect to the number of arms $K$. Instead of the deterministic reward functions $g_i(X)$, we also plan to consider random reward variables $Y_i$, such that the conditional distribution $p(Y_i|X)$ is known.



\bibliographystyle{unsrt}
\bibliography{multi_armed_bandit}

\appendix


\section{Continuous $X$ and Random Vector $\mathbf{X} = (X_1, X_2, \ldots X_m)$}
Observe that our algorithm depends on the functions $g_i(X)$ through the evaluation of pseudo-rewards (see Definition 2). For discrete $X$, the set $\{x: g_{\arm}(x) = \reward\}$ is a discrete set with a finite number of elements. Hence, it is easy to evaluate $\max_{\{x: g_{\arm}(x) = \reward\}}  g_{\ell}(x)$ for any arm $\ell \neq \arm$. For continuous $X$, if $\{x: g_{\arm}(x) = \reward\}$ is a finite union of continuous sets, and if $g_{\ell}(x)$ has finite stationary points, then it is possible to evaluate $g_{\ell}(x)$ for $x$ that lie at the boundary of continuous sets and at stationary points lying within these sets. Therefore, it is possible to compute $\max_{\{x: g_{\arm}(x) = \reward\}}  g_{\ell}(x)$.


The algorithm and the regret analysis is also applicable to more general random sources, such as a latent random vector $\mathbf{X} = (X_1, X_2, \ldots X_m)$. For example, if $X = (X_1, X_2)$ is a random variable, and $g_1(X) = X_1 + 0.1 X_2$ and $g_2(X) = X_2 + 0.1 X_1$. Then evaluating the pseudo-reward of arm 2 with respect to arm 1 on observing reward $\reward$ reduces to solving an optimization problem
\begin{align*}
\max_{z_1, z_2} &\quad z_2 + 0.1 z_1 \\
\text{s.t }&\quad z_1 + 0.1 z_2 = \reward \\
&\quad z_1 \in \sampleSpace_1, z_2 \in \sampleSpace_2,
\end{align*}
where, $\sampleSpace_1, \sampleSpace_2$ are support of $X_1$ and $X_2$ respectively.

As mentioned in Remark 1, this also captures the case of classical multi-armed bandit problem, if $\mathbf{X} = (X_1, X_2, \ldots X_n)$, where $X_i$ are independent random variables and $g_\arm(X) = X_\arm$ for $\arm \in \{1,2, \ldots K\}$.

\section{Simulations for Continuous X and Random Vector $\mathbf{X}$}
In this section we obtained cumulative regret by averaging over 100 simulation runs, for each run we use the same reward realizations for both the C-UCB and UCB1 (\citep{auer2002finite}) algorithm. We show these results for continuous X and random vector $\mathbf{X}$.
\subsection{Continuous Random Variable}


We consider the reward functions $g_1(X), g_2(X)$ and $g_3(X)$ as shown in \Cref{fig:sim_reward_funcs_cont}. Arm 1 corresponds to a Gaussian reward function $g_1(x) = \frac{1}{2 \sqrt{2 \pi \sigma^2}}\exp\left(-\frac{(x - \mu)^2}{2 \sigma^2}\right)$, with $\mu = 0.5$ and $\sigma = 0.2$. Arm 2 corresponds to $g_2(x) = 1 - \exp(-5 \lambda x)$, with $\lambda = 0.5$. Arm 3 corresponds to a uniform reward function with $g_3(x) = 0.5$. Depending on the distribution of random variable $X$, we can have different scenarios. For this simulation, we considered three cases with distribution of $X$ as $Beta(4,4),$ $Beta(2,5)$ and $Beta(1,5)$ respectively. Distribution of $X$ for these three cases is shown in \Cref{fig:sim_distribution_funcs_cont}. 

\begin{figure*}[b!]
\centering
\begin{minipage}[t]{0.4\textwidth}
  \centering
    \includegraphics[width=1.0\textwidth]{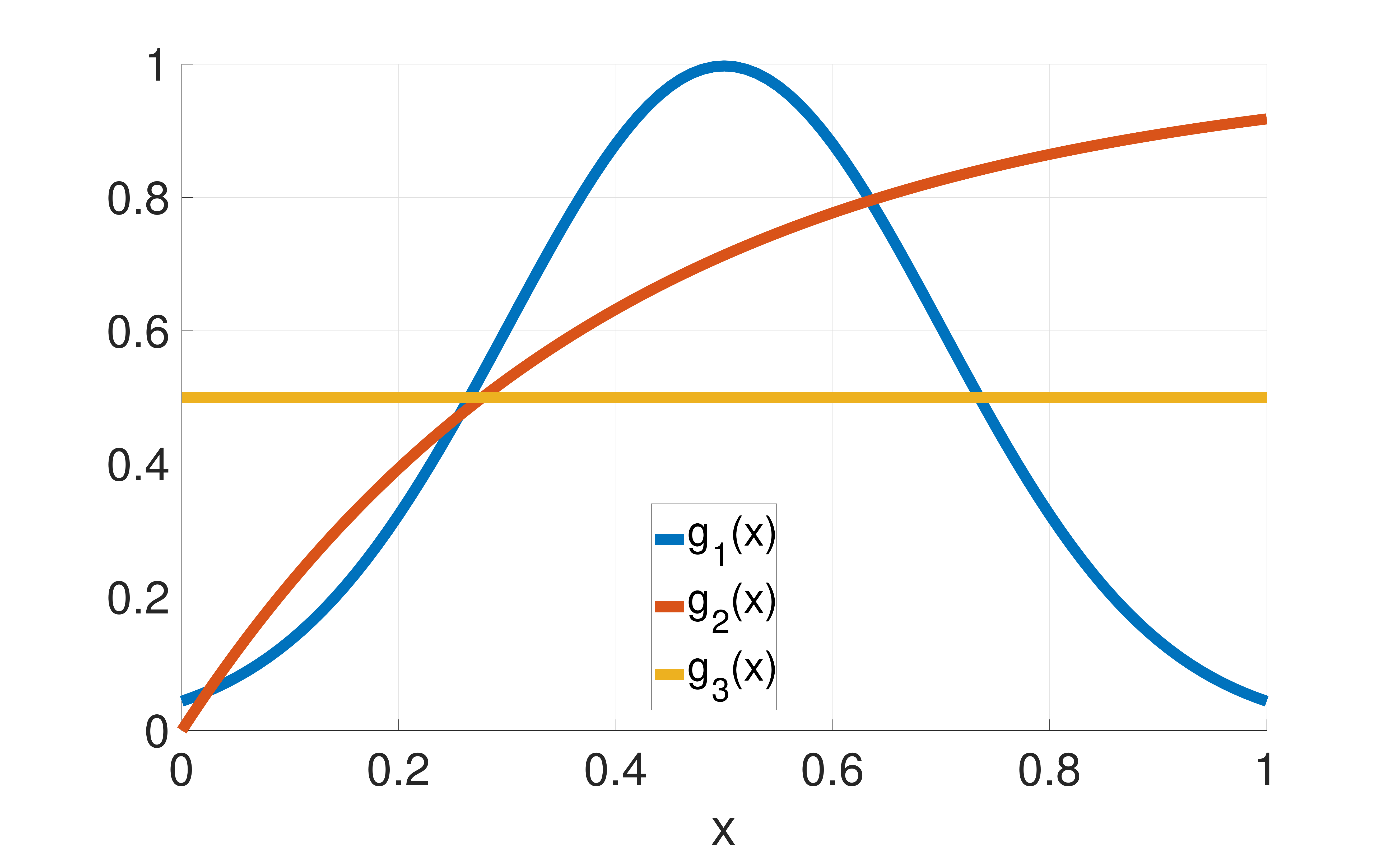}
    \caption{Reward Functions used for the simulation results presented in \Cref{fig:simulationContinuous}.}
    \label{fig:sim_reward_funcs_cont}
\end{minipage}%
~ \hspace{0.5cm}
\begin{minipage}[t]{0.45\textwidth}
  \centering
    \includegraphics[width=0.9\textwidth]{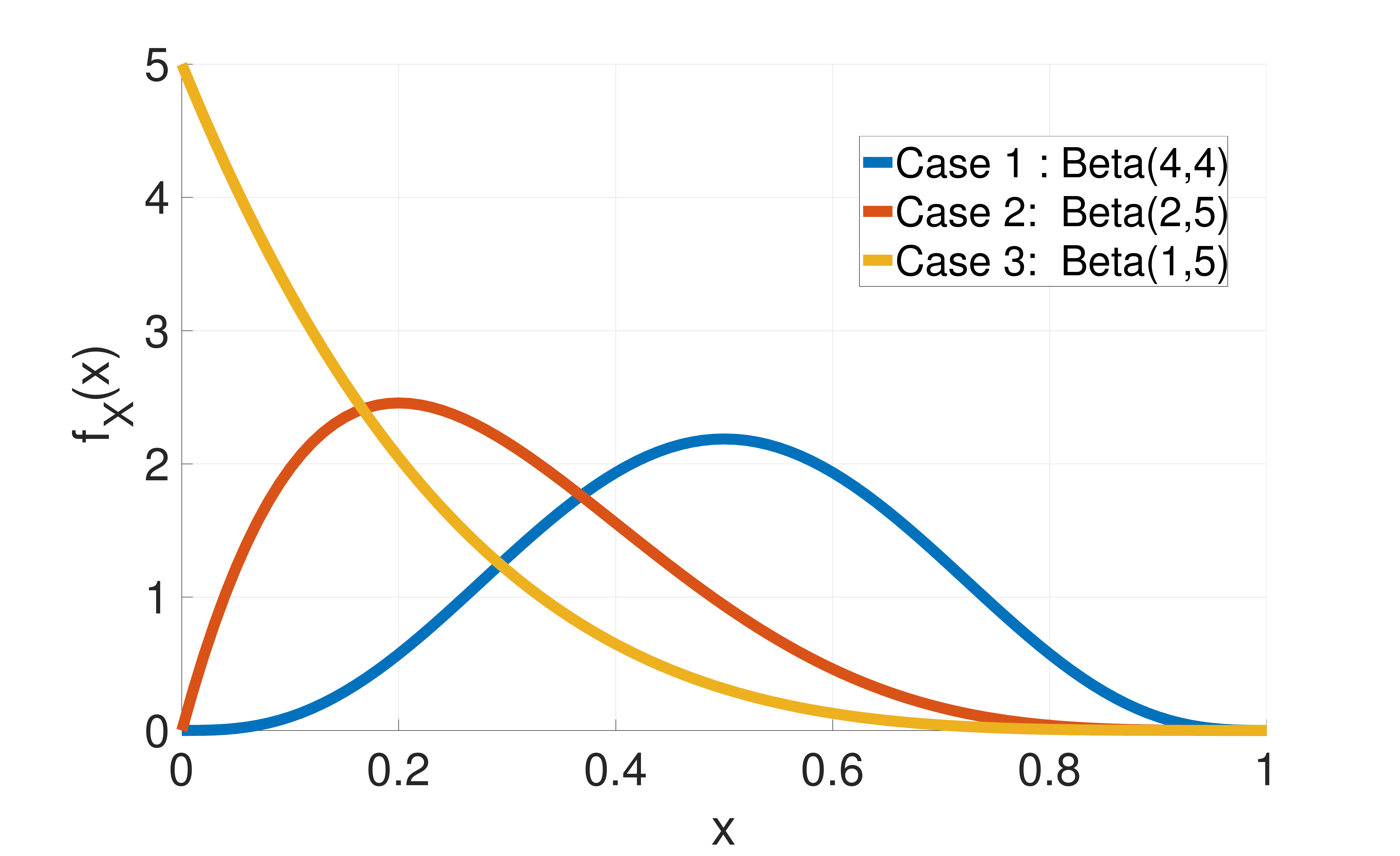}
    \caption{Distribution of $X$ for the three cases of simulation results presented in \Cref{fig:simulationContinuous}.  }
    \label{fig:sim_distribution_funcs_cont}
\end{minipage}
\end{figure*}

\textbf{Case 1: $X \sim Beta(4,4)$.} For this case arm 1 is the optimal arm, and arms 2 and 3 are non-competitive. As a result, the regret of C-UCB algorithm does not scale with the number of rounds. 
Observe that in \Cref{fig:Continuous1} the regret of C-UCB algorithm is very small; this is because the pseudo-gap of arms 2 and 3 with respect to arm 1 in this setting are large and hence sub-optimal arms are pulled very few times as they are easily identified as sub-optimal through pulls of Arm 1. This also demonstrates a case where sub-optimal arms are \textit{non-nompetitive} even though the optimal arm is non-invertible.


\textbf{Case 2: $X \sim Beta(2,5)$.} In this scenario arm 1 is the optimal arm, arm 2 is competitive and arm 3 is \textit{non-competitive}. Due to this, C-UCB algorithm still explores arm 2. As evident in \Cref{fig:Continuous2}, C-UCB clearly outperforms the UCB1 algorithm. This is because C-UCB algorithm explores only arm 2, while UCB1 explores both arm 1 and arm 2.

\textbf{Case 3: $X \sim Beta(1,5)$.} In this case, arm 3 is the optimal arm. Since pulls of arm 3 provide no information about reward from Arm 1 and Arm 2, both Arm 1 and Arm 2 are \textit{Competitive}. Due to this C-UCB algorithm explores both the arms and has a performance very similar to the UCB1 algorithm as shown in \Cref{fig:Continuous3}.

\begin{figure}[htb]
    \begin{subfigure}{0.33\textwidth}
        \includegraphics[width=\linewidth]{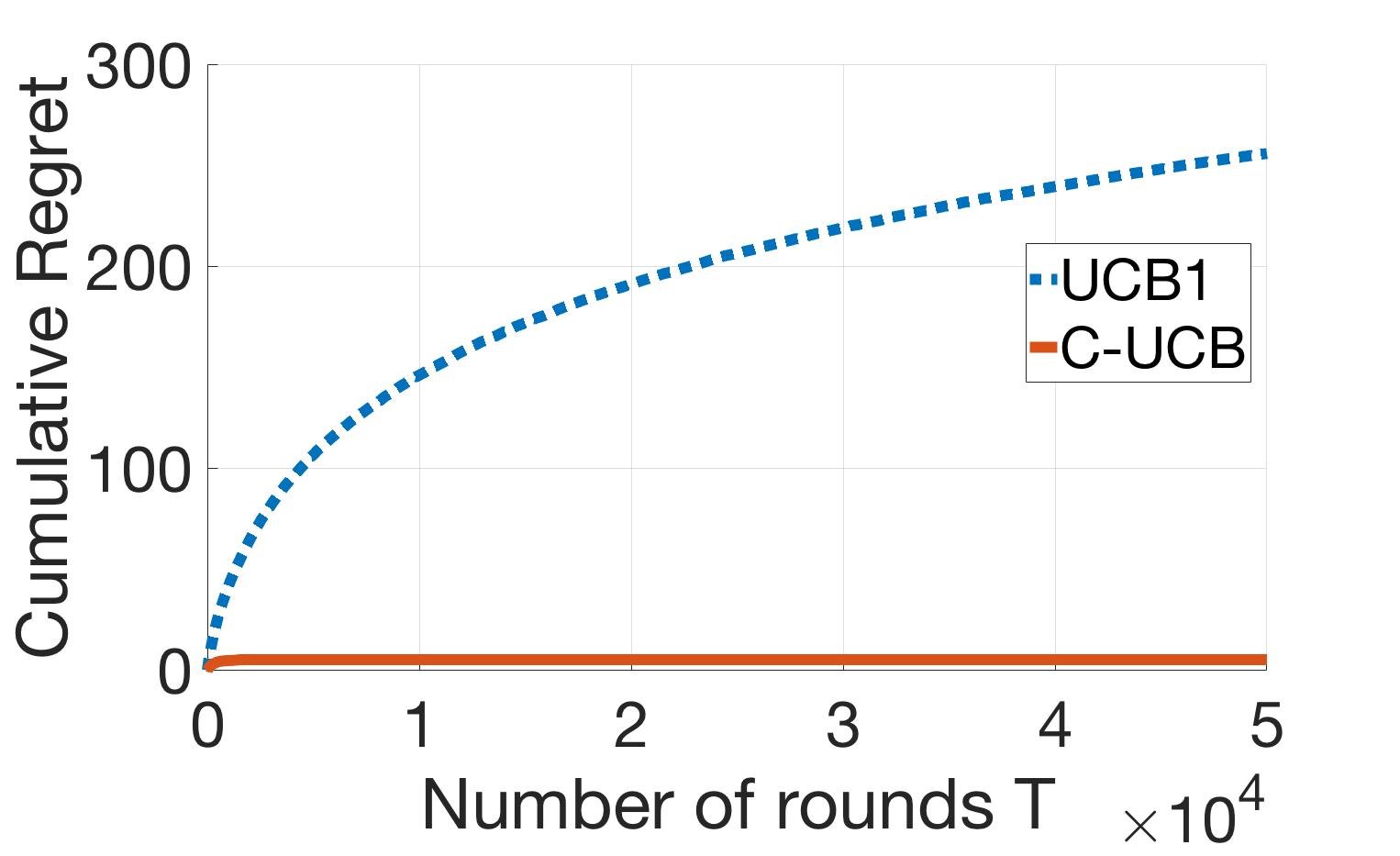}
        \caption{Arm 1 is optimal, other two are non-competitive.} 
        \label{fig:Continuous1}
    \end{subfigure}%
    \begin{subfigure}{0.33\textwidth}
        \includegraphics[width=\linewidth]{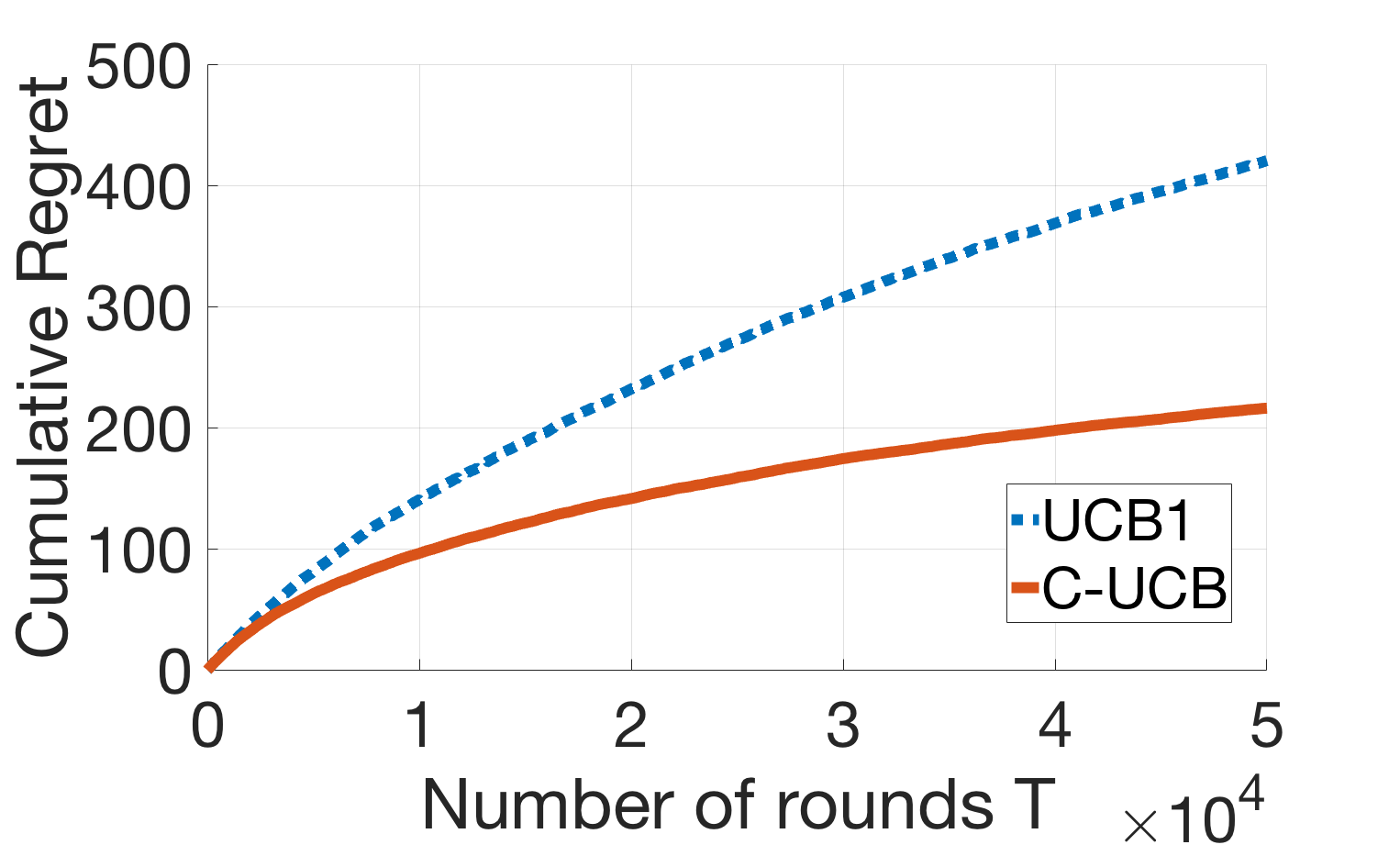}
        \caption{Arm 1 is optimal, arm 2 is competitive and arm 3 is non-competitive.} 
        \label{fig:Continuous2}
    \end{subfigure}%
    \begin{subfigure}{0.33\textwidth}
        \includegraphics[width=\linewidth]{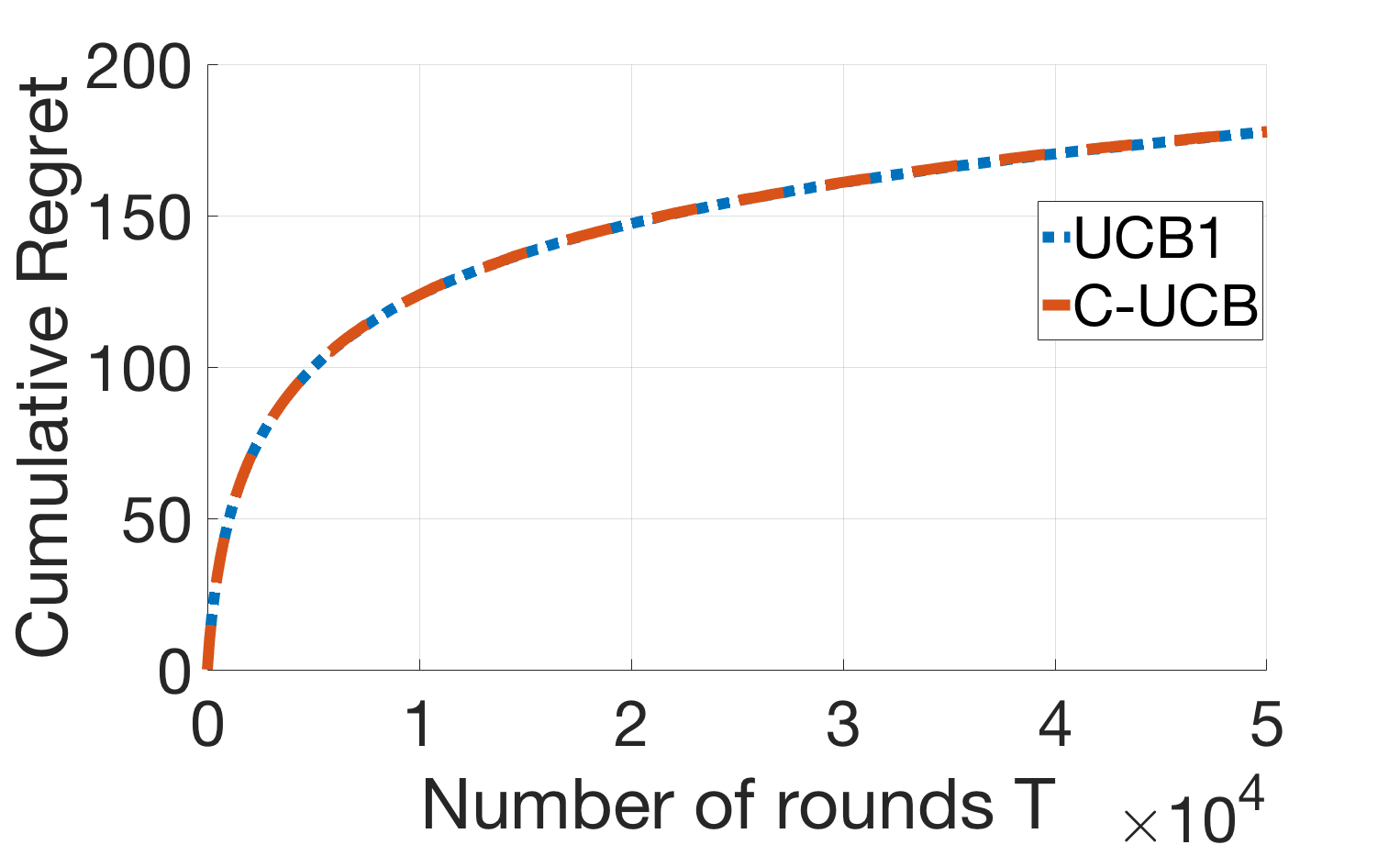}
        \caption{Arm 3 is optimal and arm 1, 2 are competitive.}
        \label{fig:Continuous3}
    \end{subfigure}
\caption{Simulation results for continuous X.} \label{fig:simulationContinuous}
\end{figure}

\subsection{Latent Random Vector $\mathbf{X}$}
We now consider a case where we have a random vector $\mathbf{X} = (X_1,X_2)$. In our setting $X_1, X_2$ have a support of $\{-1,0,1\}$. We consider two arms with $g_1(X) = X_1 + X_2$ and $g_2(X) = X_1 - X_2$. In this example $s_{2,1}(r) > g_1(r)$ only if the observed reward $r = 2$, which corresponds to the case where the realization $(X_1, X_2)$ can be identified as $(1,1)$. Similarly $s_{1,2}(r) > g_2(r)$ only if the observed reward $r = 2$, which corresponds to the realization $(1,-1)$. Depending on the distribution of $X$, suboptimal arm can be competitive or non-competitive.  

\textbf{Case 1: Suboptimal arm is \textit{Competitive}.} We consider a case where $\mathbb{P}_\mathbf{X} = \mathbb{P}_{X_1}\mathbb{P}_{X_2}$, with $\mathbb{P}_{X_1} = \{0.3, 0.4, 0.3\}$ and $\mathbb{P}_{X_2} = \{0.38, 0.22, 0.4\}$. In this scenario Arm 1 is optimal and sub-optimality gap of arm2 is $\gap_2 = 0.04$. Since the probability mass on $(1,1)$ is small, Arm 2 is \textit{Competitive}. Due to this, we see in \Cref{fig:Multi2} that regret of the C-UCB algorithm scales with number of rounds $T$ and has a performance very similar to the UCB1 algorithm. 

\textbf{Case 2: Suboptimal arm is \textit{Non-Competitve}} We consider the distribution $\mathbb{P}_{\mathbf{X}}$ with $\mathbb{P}_{\mathbf{X}}(1,-1) = 0.48$, $\mathbb{P}_{\mathbf{X}}(1,1) = 0.5$ and $\mathbb{P}_{\mathbf{X}}(x_1,x_2) = 0.0028$ for all other $x_1,x_2$. In this scenario, arm $1$ is optimal and arm $2$ is sub-optimal with suboptimality gap $\gap_2 = 0.04$. Since probability mass at $(1,1)$ is high, it is possible to infer sub-optimality of arm 2 using reward samples of arm 1. We see this effect in \Cref{fig:Multi1}.

\begin{figure}[htb]
    \begin{subfigure}{0.5\textwidth}
        \includegraphics[width=\linewidth]{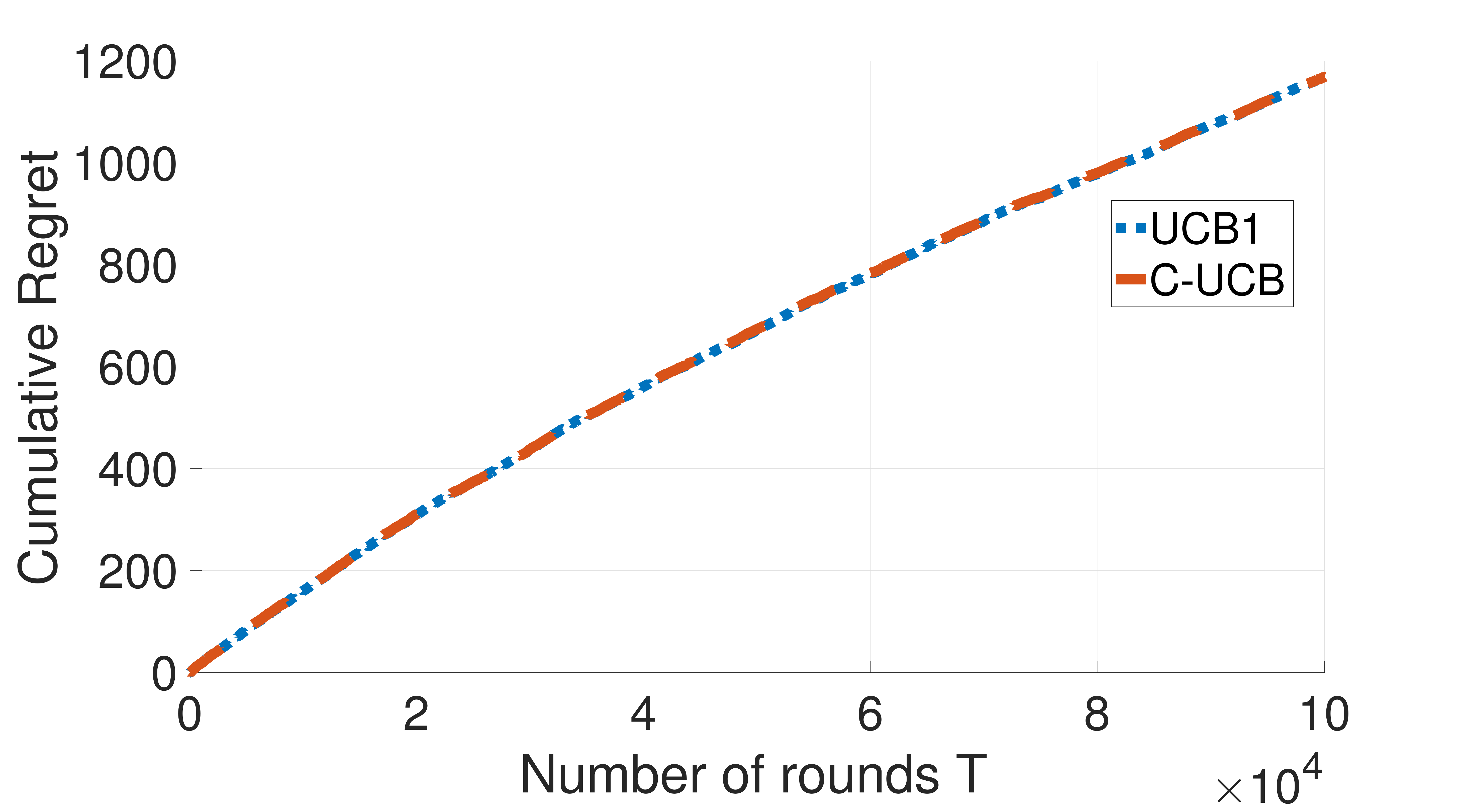}
        \caption{Sub-optimal arm is competitive.} 
        \label{fig:Multi2}
    \end{subfigure}%
    \begin{subfigure}{0.5\textwidth}
        \includegraphics[width=\linewidth]{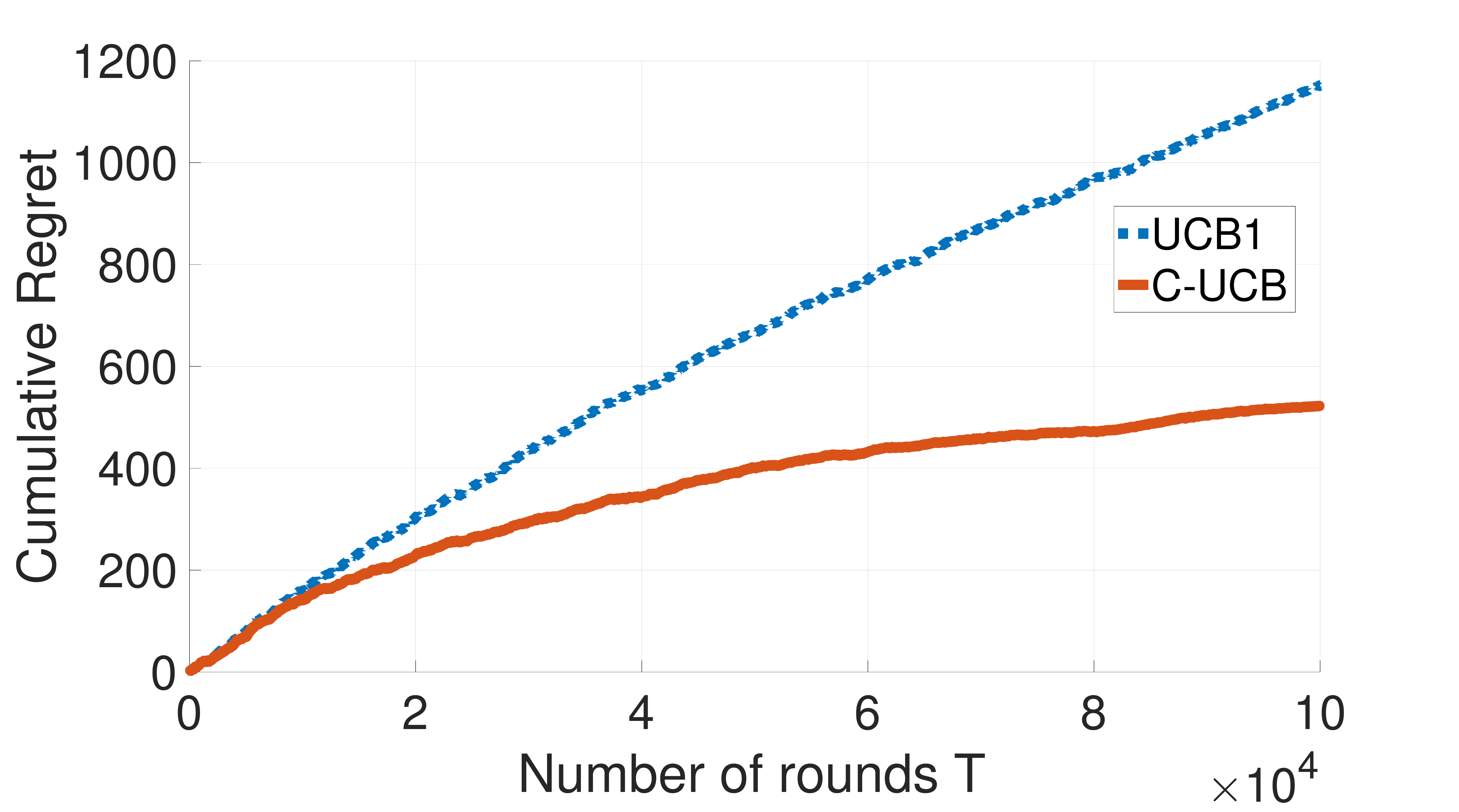}
        \caption{Sub-optimal arm is non-competitive.} 
        \label{fig:Multi1}
    \end{subfigure}%

\caption{Simulation results for latent vector $\mathbf{X}$.} \label{fig:simulationMulti}
\end{figure}

\section{Standard Results from Previous Works}
\begin{fact}[Hoeffding's inequality]
Let $Z_1, Z_2 \ldots Z_n$ be i.i.d random variables bounded between $[a, b]: a \leq Z_i \leq b$, then for any $\delta>0$, we have
$$\Pr\left(\left| \frac{\sum_{i = 1}^{n} Z_i}{n} - \E{Z_i} \right| \geq \delta\right) \leq \exp \left( \frac{-2 n \delta^2}{(b - a)^2}\right).$$ 
\end{fact}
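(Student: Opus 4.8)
The plan is to prove the inequality by the standard Chernoff / exponential-moment method, isolating the one genuinely technical ingredient—Hoeffding's lemma—as the crux. Write $S_n = \sum_{i=1}^n Z_i$ and $\mu = \E{Z_i}$, so that the centered variables $Y_i = Z_i - \mu$ are i.i.d., mean-zero, and take values in an interval of width $b-a$. I would first bound the upper tail $\Pr(S_n - n\mu \geq n\delta)$, obtain the lower tail $\Pr(S_n - n\mu \leq -n\delta)$ by applying the identical argument to the variables $-Z_i$, and then control the two-sided deviation event by a union bound over the two tails. Since $S_n/n - \mu \geq \delta$ is the same event as $S_n - n\mu \geq n\delta$, this directly addresses the empirical-mean form in the statement.

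For the upper tail, the first step is the exponential Markov inequality followed by factorization over the independent summands: for any $s > 0$,
\begin{align}
\Pr(S_n - n\mu \geq n\delta) \leq e^{-sn\delta}\,\E{e^{s(S_n - n\mu)}} = e^{-sn\delta}\prod_{i=1}^n \E{e^{s Y_i}}.
\end{align}
This reduces the whole problem to controlling the moment generating function $\E{e^{sY_i}}$ of a single centered, bounded summand.

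The key step is Hoeffding's lemma: for a mean-zero random variable $Y$ taking values in an interval $[\alpha,\beta]$ with $\beta - \alpha = b-a$, one has $\E{e^{sY}} \leq \exp(s^2(b-a)^2/8)$. I would prove this by using convexity of the exponential to bound $e^{sy}$ on $[\alpha,\beta]$ by the chord through its endpoints, taking expectations (the mean-zero hypothesis annihilates the linear term) to obtain a closed-form upper bound of the shape $\exp(\phi(u))$ with $u = s(b-a)$, and then showing via Taylor's theorem that $\phi(0)=\phi'(0)=0$ and $\phi''(u)\leq 1/4$ uniformly, whence $\phi(u)\leq u^2/8$. The bound $\phi''\leq 1/4$ is the heart of the matter and the main obstacle, since it is exactly the statement that a distribution supported on a unit-length interval has variance at most $1/4$; establishing it cleanly (either by Popoviciu's variance inequality or by recognizing $\phi''$ as the variance under an exponentially tilted measure) is the one place where real work is required.

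Combining the lemma across the product gives $\prod_{i=1}^n \E{e^{sY_i}} \leq \exp(n s^2 (b-a)^2/8)$, so the upper tail is at most $\exp\!\big(-sn\delta + n s^2 (b-a)^2/8\big)$. The exponent is a quadratic in $s$ minimized at $s^\star = 4\delta/(b-a)^2$, and substituting yields the bound $\exp(-2n\delta^2/(b-a)^2)$; the lower tail obeys the same bound by symmetry. Thus each one-sided tail is controlled by exactly the exponential factor appearing in the statement, which is the quantity invoked in \Cref{thm:NonCompetitiveBound} and \Cref{thm:CompetitiveBound}. (I note for completeness that combining the two tails by a union bound formally contributes a factor of $2$; the sharp one-sided estimate is precisely as written, and it is the one-sided form that the regret analysis uses.)
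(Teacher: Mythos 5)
Your proof is correct, and it is the standard Chernoff--Hoeffding argument: exponential Markov inequality, factorization over independent summands, Hoeffding's lemma $\E{e^{sY}} \leq \exp\left(s^2(b-a)^2/8\right)$ for a centered bounded variable (proved via the chord bound and the variance-at-most-$1/4$ estimate, exactly as you outline), and optimization over $s$. There is nothing in the paper to compare against: the paper states this as a \emph{Fact} among ``Standard Results from Previous Works'' and gives no proof, so your write-up supplies the canonical missing argument. You are also right to flag the factor of $2$: as literally stated in the paper, the two-sided bound should carry a prefactor of $2$ from the union bound over the two tails, since only each one-sided tail is bounded by $\exp\left(-2n\delta^2/(b-a)^2\right)$; this is a minor looseness in the paper's statement rather than in your proof, and it is immaterial downstream because every invocation in the paper's lemmas (e.g., in the proofs of Lemmas 1, 3, and 4) is of the one-sided form.
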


\begin{lem}[Standard result used in bandit literature]
If $\hat{\mu}_{k,n_k(t)}$ denotes the empirical mean of arm $k$ by pulling arm $k$ $n_k(t)$ times through any algorithm and $\mu_k$ denotes the mean reward of arm $k$, then we have 
$$\Pr\left(\hat{\mu}_{k,n_k(t)} - \mu_k \geq \epsilon, \tau_2 \geq n_k(t) \geq \tau_1 \right) \leq \sum_{s = \tau_1}^{\tau_2}\exp \left(- 2 s \epsilon^2\right).$$ 
\label{lem:UnionBoundTrickInt}
\end{lem}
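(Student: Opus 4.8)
The plan is to reduce the statement to a union bound over the (random) value of $n_k(t)$ followed by an application of Hoeffding's inequality at each fixed sample size. The principal difficulty is that $n_k(t)$ is itself a random quantity determined by the algorithm's past decisions, so Hoeffding's inequality cannot be applied directly to $\hat{\mu}_{k,n_k(t)}$; the deviation bound must instead be invoked conditionally on a \emph{fixed} number of pulls.

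First I would introduce the notation $\hat{\mu}_{k,s}$ for the empirical mean computed from the first $s$ pulls of arm $k$, so that on the event $\{n_k(t) = s\}$ we have $\hat{\mu}_{k,n_k(t)} = \hat{\mu}_{k,s}$. The key observation is that, because the rewards $g_k(x_\tau)$ obtained from arm $k$ are i.i.d.\ realizations, $\hat{\mu}_{k,s}$ is always an average of $s$ i.i.d.\ random variables bounded in $[0,1]$, regardless of which rounds those pulls occurred in. Thus, for each fixed $s$, Hoeffding's inequality gives $\Pr(\hat{\mu}_{k,s} - \mu_k \geq \epsilon) \leq \exp(-2s\epsilon^2)$.

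Next I would decompose the target event by the value of $n_k(t)$ and apply a union bound:
\begin{align}
\Pr\left(\hat{\mu}_{k,n_k(t)} - \mu_k \geq \epsilon,\ \tau_2 \geq n_k(t) \geq \tau_1\right)
&= \Pr\left(\bigcup_{s=\tau_1}^{\tau_2}\left\{\hat{\mu}_{k,s} - \mu_k \geq \epsilon,\ n_k(t) = s\right\}\right) \\
&\leq \sum_{s=\tau_1}^{\tau_2} \Pr\left(\hat{\mu}_{k,s} - \mu_k \geq \epsilon\right),
\end{align}
where in the last line I drop the constraint $\{n_k(t) = s\}$ to upper bound each summand. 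Substituting the per-$s$ Hoeffding bound then yields $\sum_{s=\tau_1}^{\tau_2}\exp(-2s\epsilon^2)$, which is exactly the claimed inequality.

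The only genuinely delicate point, as noted, is justifying that the fixed-size average $\hat{\mu}_{k,s}$ remains an average of $s$ i.i.d.\ samples even though the algorithm adaptively chooses when to pull arm $k$. This holds here because each pull of arm $k$ draws a fresh independent realization of $X$ and the reward depends on this realization only through $g_k$; the adaptivity of the algorithm affects \emph{which} rounds contribute to arm $k$'s average but not the distribution of the collected rewards. Once this is in place, the remaining steps are a routine union bound and direct substitution.
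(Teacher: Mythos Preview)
Your proposal is correct and follows essentially the same approach as the paper: introduce the fixed-sample-size averages, observe that the event at the random time $n_k(t)\in[\tau_1,\tau_2]$ is contained in the union over $s\in[\tau_1,\tau_2]$ of the fixed-$s$ deviation events, then apply a union bound and Hoeffding. The paper makes the decoupling explicit by predrawing an i.i.d.\ sequence $Z_1,\dots,Z_t$ so that the $m$-th pull of arm $k$ returns $Z_m$, which is precisely the coupling you describe informally in your final paragraph.
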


\begin{proof}
Let $Z_1, Z_2, ... Z_t$ be the reward samples of arm $k$ drawn separately. If the algorithm chooses to play arm $k$ for $m^{th}$ time, then it observes reward $Z_m$. Then the probability of observing the event $\hat{\mu}_{k,n_k(t)} - \mu_k \geq \epsilon, \tau_2 \geq n_k(t) \geq \tau_1$ can be upper bounded as follows,
\begin{align}
    \Pr\left(\hat{\mu}_{k,n_k(t)} - \mu_k \geq \epsilon, \tau_2 \geq n_k(t) \geq \tau_1 \right) &= \Pr\left( \left( \frac{\sum_{i=1}^{n_k(t)}Z_i}{n_k(t)} - \mu_k \geq \epsilon \right), \tau_2 \geq n_k(t) \geq \tau_1 \right) \\
    &\leq \Pr\left( \left(\bigcup_{m = \tau_1}^{\tau_2} \frac{\sum_{i=1}^{m}Z_i}{m} - \mu_k \geq \epsilon \right), \tau_2 \geq n_k(t) \geq \tau_1 \right) \label{upperBoundTrick}\\
    &\leq \Pr \left(\bigcup_{m = \tau_1}^{\tau_2} \frac{\sum_{i=1}^{m}Z_i}{m} - \mu_k \geq \epsilon \right) \\
    &\leq \sum_{s = \tau_1}^{\tau_2}\exp \left( - 2 s \epsilon^2\right).
\end{align}
\end{proof}

\begin{lem}[From Proof of Theorem 1 in \citep{auer2002finite}]
\label{lem:ucbindexmore}
Let $\Index_\arm(\slot)$ denote the UCB index of arm $\arm$ at round $\slot$, and $\meanReward_\arm = \E{g_{\arm}(X)}$ denote the mean reward of that arm.  Then, we have
$$\Pr(\meanReward_\arm > \Index_\arm(\slot)) \leq \slot^{-3}.$$ 
\end{lem}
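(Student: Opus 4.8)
The plan is to peel off the definition of the index and turn the statement into a one-sided concentration bound on the empirical mean. Writing the index as $\Index_\arm(\slot) = \hat{\meanReward}_\arm(\slot) + B\sqrt{2\log\slot / \pulls_\arm(\slot)}$, the event $\{\meanReward_\arm > \Index_\arm(\slot)\}$ is exactly the event that the empirical mean underestimates the true mean by more than the confidence radius, i.e. $\meanReward_\arm - \hat{\meanReward}_\arm(\slot) > B\sqrt{2\log\slot / \pulls_\arm(\slot)}$. So it suffices to bound the probability of this lower-tail deviation.

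The first real step is to remove the dependence on the random number of pulls $\pulls_\arm(\slot)$, which is itself determined by the adaptive algorithm. I would handle this exactly as in \Cref{lem:UnionBoundTrickInt}: view the rewards of arm $\arm$ as a pre-drawn i.i.d.\ sequence $Z_1, Z_2, \ldots$, with $\hat{\meanReward}_{\arm,s}$ the average of the first $s$ of them. Since $\pulls_\arm(\slot) \in \{1, \ldots, \slot\}$, whatever value it takes the deviation event is contained in the union $\bigcup_{s=1}^{\slot}\{\meanReward_\arm - \hat{\meanReward}_{\arm,s} > B\sqrt{2\log\slot/s}\}$, and a union bound gives $\Pr(\meanReward_\arm > \Index_\arm(\slot)) \le \sum_{s=1}^{\slot}\Pr(\meanReward_\arm - \hat{\meanReward}_{\arm,s} > B\sqrt{2\log\slot/s})$.

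The remaining step is a termwise application of Hoeffding's inequality. For a fixed sample size $s$, the rewards lie in an interval of size $B$, so the lower-tail bound gives $\Pr(\meanReward_\arm - \hat{\meanReward}_{\arm,s} > \delta) \le \exp(-2 s \delta^2 / B^2)$; plugging in $\delta = B\sqrt{2\log\slot/s}$ makes the factors of $B$ cancel and yields $\exp(-4\log\slot) = \slot^{-4}$ for every $s$. Summing the $\slot$ identical terms gives $\slot \cdot \slot^{-4} = \slot^{-3}$, as claimed.

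The only subtlety worth flagging is the second step: the number of pulls $\pulls_\arm(\slot)$ is correlated with the observed rewards, so Hoeffding cannot be applied directly at the random index $\pulls_\arm(\slot)$. The union-over-possible-pull-counts argument of \Cref{lem:UnionBoundTrickInt} is precisely what decouples the stopping behaviour from the concentration estimate, and once that is in place the rest is the standard Auer et al.\ calculation.
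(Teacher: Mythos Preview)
Your proof is correct and follows exactly the same approach as the paper's own proof: union bound over the possible values of $\pulls_\arm(\slot)\in\{1,\ldots,\slot\}$ (the trick from \Cref{lem:UnionBoundTrickInt}), then Hoeffding's inequality termwise to get $\slot^{-4}$, summed to $\slot^{-3}$. The only cosmetic difference is that you carry the interval width $B$ through explicitly while the paper specializes to $B=1$ at this stage; your calculation correctly shows the $B$'s cancel.
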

Observe that this bound does not depend on the number $\pulls_\arm(\slot)$ of times arm $\arm$ is pulled. UCB index is defined in equation (6) of the main paper.
\begin{proof}
This proof follows directly from \citep{auer2002finite}. We present the proof here for completeness as we use this frequently in the paper.
\begin{align}
    \Pr(\meanReward_\arm > \Index_\arm(\slot)) &= \Pr\left(\meanReward_\arm > \hat{\meanReward}_{\arm,\pulls_\arm(\slot)} + \sqrt{\frac{2 \log \slot}{\pulls_\arm(\slot)}}\right) \\
    &\leq \sum_{m = 1}^{\slot} \Pr \left(\meanReward_\arm > \hat{\meanReward}_{\arm,m} + \sqrt{\frac{2 \log \slot}{m}} \right) \label{unionTrick}\\
    &= \sum_{m =1}^{\slot} \Pr \left(\hat{\meanReward}_{\arm,m} - \meanReward_\arm < - \sqrt{\frac{2 \log \slot}{m}}\right) \\ 
    &\leq \sum_{m = 1}^{\slot} \exp\left(- 2 m \frac{2 \log \slot}{m}\right) \label{eqn:ucbindex}\\
    &= \sum_{m = 1}^{\slot} \slot^{-4} \\
    &= \slot^{-3}.
\end{align}
where \eqref{unionTrick} follows from the union bound and is a standard trick (\Cref{lem:UnionBoundTrickInt}) to deal with random variable $\pulls_\arm(\slot)$. We use this trick repeatedly in the proofs. We have \eqref{eqn:ucbindex} from the Hoeffding's inequality. 
\end{proof}

\begin{lem} Let $\E{\indicator_{\Index_\arm > \Index_{\arm^*}}}$  be the expected number of times $\Index_\arm (t)> \Index_{\arm^*}(t)$ in $\totalPulls$ rounds. Then, we have 
$$\E{\indicator_{\Index_\arm > \Index_{\arm^*}}} = \sum_{\slot = 1}^{\totalPulls} Pr(\Index_\arm > \Index_{\arm^*}) \leq \frac{8 \log (\totalPulls)}{\gap_\arm^2} + \left(1 + \frac{\pi^2}{3} \right).$$
\label{lem:AuerResult}
\end{lem}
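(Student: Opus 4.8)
The plan is to reproduce the single-arm regret bound for UCB1 from the analysis in \citep{auer2002finite}, adapted to the normalization $B=1$ used in the regret analysis, under which $\Index_\arm(\slot) = \hat{\meanReward}_{\arm,\pulls_\arm(\slot)} + \sqrt{2\log\slot/\pulls_\arm(\slot)}$. The engine of the proof is a three-way decomposition of the event $\{\Index_\arm(\slot) > \Index_{\arm^*}(\slot)\}$, together with a threshold on the number of pulls of arm $\arm$ that rules out one of the three pieces.

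Concretely, I would first set $\ell = \lceil 8\log\totalPulls/\gap_\arm^2\rceil$ and argue that whenever $\Index_\arm(\slot) > \Index_{\arm^*}(\slot)$, at least one of the following holds: (A) the optimal arm's empirical mean falls below its confidence lower bound, $\hat{\meanReward}_{\arm^*,\pulls_{\arm^*}(\slot)} \le \meanReward_{\arm^*} - \sqrt{2\log\slot/\pulls_{\arm^*}(\slot)}$; (B) arm $\arm$'s empirical mean exceeds its confidence upper bound, $\hat{\meanReward}_{\arm,\pulls_\arm(\slot)} \ge \meanReward_\arm + \sqrt{2\log\slot/\pulls_\arm(\slot)}$; or (C) the gap is not yet resolved, $\meanReward_{\arm^*} < \meanReward_\arm + 2\sqrt{2\log\slot/\pulls_\arm(\slot)}$. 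This trichotomy follows because, if all three fail, then $\Index_{\arm^*}(\slot) \ge \meanReward_{\arm^*} \ge \meanReward_\arm + 2\sqrt{2\log\slot/\pulls_\arm(\slot)} > \Index_\arm(\slot)$, contradicting the hypothesis. Since $\meanReward_{\arm^*} - \meanReward_\arm = \gap_\arm$, event (C) can hold only when $\pulls_\arm(\slot) < 8\log\slot/\gap_\arm^2 \le \ell$, so it becomes impossible once arm $\arm$ has been pulled at least $\ell$ times.

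Next I would split the sum over rounds according to this threshold. The rounds in which arm $\arm$ is selected while $\pulls_\arm(\slot) < \ell$ number at most $\ell \le 8\log\totalPulls/\gap_\arm^2 + 1$, because each such selection increments the pull count and the count never decreases; this produces the leading term. For the remaining rounds event (C) is excluded, so only (A) or (B) can cause $\{\Index_\arm > \Index_{\arm^*}\}$. I would bound the probabilities of (A) and (B) by applying the union-bound trick of \Cref{lem:UnionBoundTrickInt} over the (random) pull counts $\pulls_{\arm^*}(\slot)$ and $\pulls_\arm(\slot)$ together with Hoeffding's inequality: each fixed pair of counts contributes $\exp(-2s\cdot 2\log\slot/s) = \slot^{-4}$, and summing over at most $\slot$ admissible values of each count yields a per-round contribution of at most $2\slot^{-2}$ (the handling of (A) here mirrors the argument of \Cref{lem:ucbindexmore} for the optimal arm). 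Summing over all rounds gives $\sum_{\slot=1}^{\infty} 2\slot^{-2} = \pi^2/3$, which, combined with the additive $1$ from rounding $\ell$, delivers the claimed $\frac{8\log\totalPulls}{\gap_\arm^2} + \left(1 + \frac{\pi^2}{3}\right)$.

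The main obstacle is the bookkeeping around the data-dependent pull counts $\pulls_\arm(\slot)$: the trichotomy and the Hoeffding estimates hold for \emph{fixed} counts, so transferring them to the algorithm's random counts forces the union-bound trick of \Cref{lem:UnionBoundTrickInt}. More subtly, the leading $8\log\totalPulls/\gap_\arm^2$ term appears only because the at-most-$\ell$ rounds with a small pull count are tied to \emph{actual selections} of arm $\arm$ rather than merely to the event $\{\Index_\arm > \Index_{\arm^*}\}$; making this tie precise, i.e.\ controlling the stated count through the expected number of selections of arm $\arm$, is the step I expect to require the most care.
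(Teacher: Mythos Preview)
Your approach is exactly what the paper intends: its proof is a two-sentence pointer to Theorem~1 of \citep{auer2002finite}, and the trichotomy (A)/(B)/(C) with the threshold $\ell=\lceil 8\log\totalPulls/\gap_\arm^2\rceil$ that you spell out is precisely that argument.

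The concern you raise at the end is real, and the paper does not address it either. Auer's counting works because each occurrence of the event being summed increments $\pulls_\arm$; here the summand is $\indicator\{\Index_\arm(\slot)>\Index_{\arm^*}(\slot)\}$, which in C-UCB need not coincide with a pull of arm $\arm$ (arm $\arm$ may lie outside the competitive set $\mathcal{A}$, or another arm may have still larger index). Taken literally the bound can even fail---if some arm were never pulled, its index would stay $\infty$ at every round and the left side would be $\totalPulls$. The paper's only use of the lemma is inside the proof of \Cref{thm:CompetitiveBound}, right after the step $\Pr(\arm_\slot=\arm)\le\Pr(E_1)+\Pr(\Index_\arm>\Index_{\arm^*})$. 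If instead you retain the selection event and write
\[
\E{\pulls_\arm(\totalPulls)}\;\le\;\ell+\sum_{\slot}\Pr\bigl(\arm_\slot=\arm,\ \pulls_\arm(\slot-1)\ge\ell\bigr),
\]
and only then split off $E_1$, the surviving condition $\pulls_\arm\ge\ell$ rules out (C) and the remainder of your argument (union bound over counts plus Hoeffding, as in \Cref{lem:UnionBoundTrickInt} and \Cref{lem:ucbindexmore}) goes through verbatim, delivering exactly the constants in the statement for the quantity that \Cref{thm:CompetitiveBound} actually needs.
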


The proof follows the analysis in Theorem 1 of \citep{auer2002finite}. The analysis of  $Pr(\Index_\arm > \Index_{\arm^*})$ is done by conditioning on the event that Arm $\arm$ has been pulled $\frac{8 \log (\totalPulls)}{\gap_\arm^2}$. Conditioned on this event, $Pr(\Index_\arm(\slot) > \Index_{\arm^*}(\slot) | \pulls_\arm(\slot)) \leq \slot^{-2}$.

\begin{lem}[Theorem 2 \citep{lai1985asymptotically}]

Consider a two armed bandit problem with reward distributions $\Theta = \{f_{R_1}(\reward), f_{R_2}(\reward)\}$, where the reward distribution of the optimal arm is $f_{R_1}(\reward)$ and for the sub-optimal arm is $f_{R_2}(\reward)$, and $\E{f_{R_1}(\reward)} > \E{f_{R_2}(\reward)}$; i.e., arm 1 is optimal. If it is possible to create an alternate problem with distributions $\Theta' = \{f_{R_1}(\reward), \tilde{f}_{R_2}(\reward)\}$ such that $\E{\tilde{f}_{R_2}(\reward)} > \E{f_{R_1}(\reward)}$ and $0< D(f_{R_2}(r)||\tilde{f}_{R_2}(r)) < \infty$ (equivalent to assumption 1.6 in \citep{lai1985asymptotically}),  then for any policy that achieves sub-polynomial regret, we have $$\liminf\limits_{\totalPulls \rightarrow \infty}  \frac{\E{\pulls_2(\totalPulls)}}{\log \totalPulls} \geq \frac{1}{D(f_{R_2}(r) || \tilde{f}_{R_2}(r))}.$$

\label{lem:LaiRobbins2Arms}
\end{lem}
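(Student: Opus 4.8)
The plan is to prove this classical Lai--Robbins bound by a change-of-measure (likelihood-ratio) argument between the true instance $\Theta$, in which arm $2$ is sub-optimal, and the alternate instance $\Theta'$, in which the only change is that arm $2$'s reward law is replaced by $\tilde f_{R_2}$, making arm $2$ optimal. Write $D \triangleq D(f_{R_2}(r)\,\|\,\tilde f_{R_2}(r))$, let $Y_1, Y_2, \ldots$ denote the successive rewards observed from arm $2$, and form the log-likelihood ratio $L_m \triangleq \sum_{i=1}^{m}\log\!\big(f_{R_2}(Y_i)/\tilde f_{R_2}(Y_i)\big)$. Since the two instances differ only in arm $2$'s reward distribution, the Radon--Nikodym derivative of $\mathbb{P}_{\Theta'}$ with respect to $\mathbb{P}_{\Theta}$ on the history up to round $\totalPulls$ equals $\exp(-L_{\pulls_2(\totalPulls)})$, giving for every event $E$ measurable with respect to the first $\totalPulls$ rounds the identity $\mathbb{P}_{\Theta'}(E) = \mathbb{E}_{\Theta}\big[\indicator_{E}\exp(-L_{\pulls_2(\totalPulls)})\big]$. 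Fixing $\epsilon \in (0,1)$ and setting $k_\totalPulls \triangleq (1-\epsilon)\log\totalPulls / D$, it then suffices to show $\mathbb{P}_{\Theta}\big(\pulls_2(\totalPulls) < k_\totalPulls\big) \to 0$, because $\mathbb{E}_{\Theta}[\pulls_2(\totalPulls)] \geq k_\totalPulls\,\mathbb{P}_{\Theta}(\pulls_2(\totalPulls)\geq k_\totalPulls)$ would yield $\liminf_\totalPulls \mathbb{E}_{\Theta}[\pulls_2(\totalPulls)]/\log\totalPulls \geq (1-\epsilon)/D$, and letting $\epsilon \downarrow 0$ finishes.

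Next I would split the low-pull event using the value of the likelihood ratio. Consider $C_\totalPulls \triangleq \{\pulls_2(\totalPulls) < k_\totalPulls,\ L_{\pulls_2(\totalPulls)} \leq (1-\epsilon/2)\log\totalPulls\}$. On $C_\totalPulls$ we have $\exp(-L_{\pulls_2(\totalPulls)}) \geq \totalPulls^{-(1-\epsilon/2)}$, so the change-of-measure identity gives $\mathbb{P}_{\Theta}(C_\totalPulls) \leq \totalPulls^{1-\epsilon/2}\,\mathbb{P}_{\Theta'}(\pulls_2(\totalPulls) < k_\totalPulls)$. Under $\Theta'$ arm $1$ is the sub-optimal arm and $\{\pulls_2(\totalPulls) < k_\totalPulls\} = \{\pulls_1(\totalPulls) > \totalPulls - k_\totalPulls\}$; a policy with sub-polynomial regret satisfies $\mathbb{E}_{\Theta'}[\pulls_1(\totalPulls)] = \oo(\totalPulls^{\alpha})$ for every $\alpha>0$, so Markov's inequality bounds $\mathbb{P}_{\Theta'}(\pulls_2(\totalPulls)<k_\totalPulls)$ by $\oo(\totalPulls^{\alpha})/(\totalPulls - k_\totalPulls) = \oo(\totalPulls^{\alpha - 1})$. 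Choosing $\alpha < \epsilon/2$ makes $\mathbb{P}_{\Theta}(C_\totalPulls) = \oo(\totalPulls^{\alpha-\epsilon/2}) \to 0$.

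It remains to control the complementary part $\{\pulls_2(\totalPulls) < k_\totalPulls,\ L_{\pulls_2(\totalPulls)} > (1-\epsilon/2)\log\totalPulls\}$, which is contained in $\{\max_{m < k_\totalPulls} L_m > (1-\epsilon/2)\log\totalPulls\}$. Since under $\Theta$ the summands of $L_m$ are i.i.d.\ with mean $D$ (finite by hypothesis), the strong law in maximal form gives $\max_{m \leq k_\totalPulls} L_m \approx k_\totalPulls D = (1-\epsilon)\log\totalPulls$ with high probability; because $(1-\epsilon) < (1-\epsilon/2)$, a maximal law-of-large-numbers estimate shows the violation probability tends to $0$. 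Adding the two pieces yields $\mathbb{P}_{\Theta}(\pulls_2(\totalPulls) < k_\totalPulls) \to 0$, completing the argument. (Alternatively, since the stated hypotheses are exactly assumption~1.6 of \citep{lai1985asymptotically}, one may invoke Theorem~2 there verbatim.)

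I expect the main obstacle to be the fluctuation control in the last step: upgrading the almost-sure statement $L_m/m \to D$ to a uniform (maximal) bound on $\max_{m<k_\totalPulls} L_m$ whose violation probability vanishes, while keeping every estimate valid using only the finiteness $D < \infty$ and the nondegeneracy $D > 0$ supplied by the hypothesis. The first two steps are essentially bookkeeping around the likelihood-ratio identity and the sub-polynomial-regret assumption; it is this concentration of the running maximum of the log-likelihood ratio, together with correctly matching the constants $(1-\epsilon)$ against $(1-\epsilon/2)$ so that the fluctuation gap is strictly positive, that requires the most care.
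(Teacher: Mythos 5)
Your proof is correct, but it takes a genuinely different route from the paper's. The paper follows the modern information-theoretic argument it attributes to \citep{banditalgs}: the divergence decomposition lemma gives $D(\mathbb{P}_{\Theta,\pi}\,\|\,\mathbb{P}_{\Theta',\pi})=\mathbb{E}_{\Theta,\pi}\left[\pulls_2(\totalPulls)\right]D(f_{R_2}\,\|\,\tilde{f}_{R_2})$, the high-probability Pinsker (Bretagnolle--Huber) inequality is applied to the single event $\{\pulls_2(\totalPulls)\ge \totalPulls/2\}$, the regret in each of the two instances is lower-bounded by the corresponding tail probability, and the sub-polynomial regret assumption then forces the stated $\log\totalPulls$ bound --- everything happens at the level of expectations, with no almost-sure or maximal-inequality machinery. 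You instead run the original Lai--Robbins truncated likelihood-ratio argument: a change of measure on the history, the cutoff $k_\totalPulls=(1-\epsilon)\log\totalPulls/D$, Markov's inequality plus sub-polynomial regret under $\Theta'$ for the small-likelihood-ratio piece, and a maximal strong-law estimate for the running maximum of $L_m$. That buys you the strictly stronger high-probability conclusion $\mathbb{P}_{\Theta}\left(\pulls_2(\totalPulls)<(1-\epsilon)\log\totalPulls/D\right)\to 0$, from which the expectation bound follows, at the price of the fluctuation step you flag; that step is indeed sound here, because $D<\infty$ already guarantees integrability of the increments (the negative part obeys $\mathbb{E}_{f_{R_2}}\left[\log^{+}\left(\tilde{f}_{R_2}/f_{R_2}\right)\right]\le \mathbb{E}_{f_{R_2}}\left[\tilde{f}_{R_2}/f_{R_2}\right]\le 1$), so $L_m/m\to D$ almost surely and the maximal version follows. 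One technical nit: your change-of-measure \emph{identity} $\mathbb{P}_{\Theta'}(E)=\mathbb{E}_{\Theta}\left[\indicator_{E}\,e^{-L_{\pulls_2(\totalPulls)}}\right]$ presumes $\tilde{f}_{R_2}\ll f_{R_2}$, which the hypothesis does not supply ($0<D(f_{R_2}\,\|\,\tilde{f}_{R_2})<\infty$ yields only $f_{R_2}\ll\tilde{f}_{R_2}$); in general one has only $\mathbb{P}_{\Theta'}(E)\ge\mathbb{E}_{\Theta}\left[\indicator_{E}\,e^{-L_{\pulls_2(\totalPulls)}}\right]$, but that is precisely the direction your second step uses, so nothing breaks.
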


\begin{proof}
Proof of this is derived from the analysis done in \citep{banditalgs}. We show the analysis here for completeness. A bandit instance $v$ is defined by the reward distribution of arm 1 and arm 2. Since policy $\pi$ achieves sub-polynomial regret, for any instance $v$, $\mathbb{E}_{v,\pi}\left[(\regret(\totalPulls))\right] = \OO(\totalPulls^p)$ as $\totalPulls \rightarrow \infty$, for all $p > 0$. 

Consider the bandit instances $\Theta = \{f_{R_1}(r), f_{R_2}(r)\}$, $\Theta' = \{f_{R_1}(r), \tilde{f}_{R_2}(r)\}$, where $\E{f_{R_2}(r)} < \E{f_{R_1}(r)} < \E{\tilde{f}_{R_2}(r)}$. The bandit instance $\Theta'$ is constructed by changing the reward distribution of arm 2 in the original instance, in such a way that arm 2 becomes optimal in instance $\Theta'$ without changing the reward distribution of arm 1 from the original instance.

From divergence decomposition lemma (derived in \citep{banditalgs}), it follows that $$D(\mathbb{P}_{\Theta,\Pi} || \mathbb{P}_{\Theta',\Pi}) = \mathbb{E}_{\Theta,\pi}\left[\pulls_2(\totalPulls)\right] D(f_{R_2}(r) || \tilde{f}_{R_2}(r)).$$

The high probability Pinsker's inequality (Lemma 2.6 from \citep{Tsybakov:2008:INE:1522486}, originally in \citep{highProbPinsker}) gives that for any event $A$, $$\mathbb{P}_{\Theta, \pi}(A) + \mathbb{P}_{\Theta',\pi}(A^c) \geq \frac{1}{2}\exp\left(-D(\mathbb{P}_{\Theta,\pi} || \mathbb{P}_{\Theta',\pi})\right),$$
or equivalently, 
$$D(\mathbb{P}_{\Theta,\pi} || \mathbb{P}_{\Theta',\pi}) \geq \log \frac{1}{2(\mathbb{P}_{\Theta,\pi}(A) + \mathbb{P}_{\Theta',\pi}(A^c))}.$$

If arm 2 is suboptimal in a 2-armed bandit problem, then $\E{\regret(\totalPulls)} = \gap_2 \E{\pulls_2(\totalPulls)}.$ Expected regret in $\Theta$ is $$\mathbb{E}_{\Theta,\pi}\left[\regret(\totalPulls)\right] \geq \frac{\totalPulls \gap_2}{2} \mathbb{P}_{\Theta,\pi}\left(\pulls_2(\totalPulls) \geq \frac{\totalPulls}{2}\right),$$
Similarly regret in bandit instance $\Theta'$ is $$\mathbb{E}_{\Theta',\pi}\left[\regret(\totalPulls)\right] \geq \frac{\totalPulls \delta}{2} \mathbb{P}_{\Theta',\pi}\left(\pulls_2(\totalPulls) < \frac{\totalPulls}{2}\right),$$
since suboptimality gap of arm $1$ in $\Theta'$ is  $\delta$. Define $\kappa(\gap_2, \delta) = \frac{\min(\gap_2, \delta)}{2}$. Then we have, $$\mathbb{P}_{\Theta,\pi}\left(\pulls_2(\totalPulls) \geq \frac{\totalPulls}{2}\right) + \mathbb{P}_{\Theta',\pi}\left(\pulls_2(\totalPulls) < \frac{\totalPulls}{2}\right) \leq \frac{\mathbb{E}_{\Theta,\pi}\left[\regret(\totalPulls)\right] + \mathbb{E}_{\Theta',\pi}\left[\regret(\totalPulls)\right]}{\kappa(\gap_2, \delta) \totalPulls}.$$

On applying the high probability Pinsker's inequality and divergence decomposition lemma stated earlier, we get 
\begin{align}
    D(f_{R_2}(r) || \tilde{f}_{R_2}(r)) \mathbb{E}_{\Theta,\pi}\left[\pulls_2(\totalPulls)\right] &\geq \log\left(\frac{\kappa(\gap_2, \delta) \totalPulls}{2 (\mathbb{E}_{\Theta,\pi}\left[\regret(\totalPulls)\right] + \mathbb{E}_{\Theta',\pi}\left[\regret(\totalPulls)\right]) }\right) \\
    &= \log\left(\frac{\kappa(\gap_2,\delta)}{2}\right) + \log(\totalPulls)  \nonumber \\ 
    &\qquad   - \log(\mathbb{E}_{\Theta,\pi}\left[\regret(\totalPulls)\right] + \mathbb{E}_{\Theta',\pi}\left[\regret(\totalPulls)\right]).
\end{align}

Since policy $\pi$ achieves sub-polynomial regret for any bandit instance, $\mathbb{E}_{\Theta,\pi}\left[\regret(\totalPulls)\right] + \mathbb{E}_{\Theta',\pi}\left[\regret(\totalPulls)\right] \leq \gamma \totalPulls^p$ for all $\totalPulls$ and any $p > 0$, 
hence, 
\begin{align} 
\liminf\limits_{\totalPulls \rightarrow \infty}  D(f_{R_2}(r) || \tilde{f}_{R_2}(r)) \frac{\mathbb{E}_{\Theta,\pi}\left[\pulls_2(\totalPulls)\right]}{\log \totalPulls} &\geq 1 - \limsup\limits_{\totalPulls \rightarrow \infty}  \frac{\mathbb{E}_{\Theta,\pi}\left[\regret(\totalPulls)\right] + \mathbb{E}_{\Theta',\pi}\left[\regret(\totalPulls)\right]}{\log \totalPulls} + \nonumber \\
&\quad \liminf\limits_{\totalPulls \rightarrow \infty}  \frac{\log\left(\frac{\kappa(\Delta_2,\delta)}{2}\right)}{\log \totalPulls} \\
&= 1.
\end{align}

Hence, $\liminf\limits_{\totalPulls \rightarrow \infty} \frac{\mathbb{E}_{\Theta,\pi}\left[\pulls_2(\totalPulls)\right]}{\log \totalPulls} \geq \frac{1}{D(f_{R_2}(r) || \tilde{f}_{R_2}(r))}.$

\end{proof}

\section{Lemmas Required to Prove Theorems 1, 2, 3, and 5}
\begin{lem}
Define $E_1(\slot)$ to be the event that arm $\arm^*$ is empirically \textit{non-competitive} in round $\slot+1$, then, 
$$\Pr(E_1(\slot)) \leq \slot \exp \left(\frac{-\slot \gap_{\text{min}}^2}{2 \numArms}\right),$$
where $\gap_{\text{min}} = \min_\arm \gap_\arm$, the gap between the best and second-best arms.
\label{lem:eliminatedOptimal}
\end{lem}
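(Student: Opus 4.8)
The plan is to rewrite the event $E_1(t)$ in terms of the empirical quantities the algorithm actually computes, reduce it to a single concentration statement, and close it with Hoeffding's inequality plus the union-bound-over-pulls argument of \Cref{lem:UnionBoundTrickInt}. First I would translate the event using \Cref{alg:formalAlgo}: in round $t+1$ the reference arm is $k^{\max} = \arg\max_k n_k(t)$, and arm $k^*$ is declared empirically non-competitive precisely when $k^{\max} \neq k^*$ and $\hat{\mu}_{k^{\max}}(t) > \hat{\phi}_{k^*, k^{\max}}(t)$. Since $\sum_k n_k(t) = t$, a pigeonhole bound gives $n_{k^{\max}}(t) \geq t/K$, so on $E_1(t)$ the reference arm has been sampled at least $t/K$ times; this $t/K$ lower bound is exactly what will produce the $1/K$ in the exponent. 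A union bound over the (at most $K-1$) possible values of $k^{\max}$ then reduces the task to bounding, for each sub-optimal $k$, the probability $\Pr\!\left(n_k(t) \geq t/K,\ \hat{\mu}_k(t) > \hat{\phi}_{k^*,k}(t)\right)$.

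Next I would invoke the structural inequality recorded just after the pseudo-gap definition, namely $\phi_{k^*,k} \geq \mu_{k^*}$ for every arm $k$. Combined with $\mu_{k^*} = \mu_k + \Delta_k \geq \mu_k + \Delta_{\text{min}}$, this shows that the per-pull difference $g_k(x_\tau) - s_{k^*,k}(g_k(x_\tau))$ has expectation $\mu_k - \phi_{k^*,k} \leq -\Delta_{\text{min}}$. Hence the event $\hat{\mu}_k > \hat{\phi}_{k^*,k}$ forces the empirical mean of these differences to exceed its expectation by at least $\Delta_{\text{min}}$. Because each pull of arm $k$ uses a fresh i.i.d.\ draw of $X$, the differences are i.i.d.\ and bounded in $[-1,1]$, so Hoeffding's inequality bounds the deviation over $s$ samples by $\exp(-s\Delta_{\text{min}}^2/2)$; handling the random count $n_k(t)$ with the trick of \Cref{lem:UnionBoundTrickInt} (summing $s$ from $t/K$ to $t$) yields a per-arm bound of $\sum_{s \geq t/K} \exp(-s\Delta_{\text{min}}^2/2) \leq t\exp(-t\Delta_{\text{min}}^2/(2K))$.

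Summing over the candidate reference arms then gives the stated bound, with the harmless constant $K-1$ factor absorbed into the $\OO(1)$ contribution this term makes inside the regret sum of \Cref{thm:CompetitiveBound}. I expect the only genuine subtlety to lie in the first step: correctly arguing that the relevant comparison is always against an arm that has been pulled $\Omega(t/K)$ times, and verifying that the pseudo-reward differences are legitimately i.i.d.\ and bounded so that Hoeffding's inequality and the union-bound-over-pulls argument of \Cref{lem:UnionBoundTrickInt} apply to an algorithm that chooses when to pull arm $k$ adaptively. Once those two points are in place, the remaining manipulations are routine.
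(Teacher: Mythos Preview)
Your proposal is correct and follows essentially the same route as the paper: decompose over the identity of $k^{\max}$, use the pigeonhole bound $n_{k^{\max}}(t)\geq t/K$, rewrite $\hat\mu_k-\hat\phi_{k^*,k}$ as an empirical average of the i.i.d.\ bounded differences $g_k(X)-s_{k^*,k}(g_k(X))$ with mean $\mu_k-\phi_{k^*,k}\leq -\Delta_k$, and close with Hoeffding plus the union-bound-over-pulls of \Cref{lem:UnionBoundTrickInt}. The only cosmetic difference is that the paper replaces the sum over $k\neq k^*$ by a single worst-case term and so lands exactly on $t\exp(-t\Delta_{\min}^2/(2K))$, whereas your union bound carries the extra $K-1$ factor you already flagged as harmless.
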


\begin{proof}
We analyze the probability that arm $\arm^*$ is empirically non competitive by conditioning on the event that arm $\arm^*$ is not pulled for maximum number of times till round $\slot$. Analyzing this expression gives us, 
\begin{align}
    \Pr(E_1(t)) &=  \Pr(E_1(\slot) , \pulls_{\arm^*}(\slot) \neq \max_\arm \pulls_\arm(\slot)) \\
    &= \sum_{\arm \neq \arm^*} Pr(E_1(\slot) , \pulls_\arm(t) = \max_{\arm'} \pulls_{\arm'}(\slot)) \\
    &\leq \max_\arm \Pr(E_1(\slot) , \pulls_\arm(\slot) = \max_{\arm'} \pulls_{\arm'}(t)) \\
    &= \max_\arm \Pr(\hat{\meanReward}_\arm > \estimateMean_{\arm^*,\arm} , \pulls_\arm(\slot) = \max_{\arm'} \pulls_{\arm'}(t)) \label{eliminatedOptimal} \\
    &\leq \max_\arm \Pr\left(\hat{\meanReward}_\arm > \estimateMean_{\arm^*,\arm} ,  \pulls_\arm(\slot) \geq \frac{\slot}{\numArms}\right) \label{itIsmax}\\
    &= \max_\arm \Pr \left(\frac{\sum_{\tau = 1}^{\slot} \indicator_{\{\arm_\tau = \arm\}} \reward_\tau}{\pulls_\arm(\slot)} > \frac{\sum_{\tau = 1}^{\slot} \indicator_{\{\arm_\tau = \arm\}} \estimateReward_{\arm^*,\arm}(\reward_\tau)}{\pulls_\arm(\slot)} , \pulls_\arm(\slot) \geq \frac{\slot}{\numArms}\right) 
\end{align}
\begin{align} 
    &= \max_\arm \Pr\left(\frac{\sum_{\tau = 1}^{\slot}\indicator_{\{\arm_\tau = \arm\}}\left(\reward_\tau - \estimateReward_{\arm^*,\arm}(\reward_\tau) \right)}{\pulls_\arm(\slot)} > 0 , \pulls_\arm(\slot) \geq \frac{\slot}{\numArms}\right)  \\
    &= \max_\arm \Pr\left(\frac{\sum_{\tau = 1}^{\slot}\indicator_{\{\arm_\tau = \arm\}}\left(\reward_\tau - \estimateReward_{\arm^*,\arm}(\reward_\tau) \right)}{\pulls_\arm(\slot)} - (\meanReward_\arm - \expectedPseudoReward_{\arm^*, \arm}) > \expectedPseudoReward_{\arm^*, \arm} - \meanReward_\arm , \pulls_\arm(\slot) \geq \frac{\slot}{\numArms}\right)  \\
    &\leq \max_\arm \Pr\left(\frac{\sum_{\tau = 1}^{\slot}\indicator_{\{\arm_\tau = \arm\}}\left(\reward_\tau - \estimateReward_{\arm^*,\arm}(\reward_\tau) \right)}{\pulls_\arm(\slot)} - (\meanReward_\arm - \expectedPseudoReward_{\arm^*, \arm}) > \gap_\arm , \pulls_\arm(\slot) \geq \frac{\slot}{\numArms}\right) \label{empMoreThanPseudo} \\    
    &\leq \max_\arm \slot \exp \left(\frac{- \slot \gap_\arm^2}{2 \numArms}\right) \label{eqn:chernoff} \\
    &= \slot \exp\left(\frac{- \slot \gap_{\text{min}}^2}{2 \numArms}\right),
   \end{align}
Here \eqref{eliminatedOptimal} follows from the fact that in order for arm $\arm^*$ to be empirically non-competitive, empirical mean of arm $\arm$ should be more than empirical pseudo-reward of arm $\arm^*$ with respect to arm $\arm$. Inequality \eqref{itIsmax} follows since $\pulls_\arm(\slot)$ being more than $\frac{\slot}{\numArms}$ is a necessary condition for $\pulls_\arm(\slot) = \max_{\arm'} \pulls_{\arm'}(\slot)$ to occur. We have \eqref{empMoreThanPseudo} as $\estimateReward_{\arm^*,\arm}$ is more than $\meanReward_{\arm^*}$. We have \eqref{eqn:chernoff} from the Hoeffding's inequality, as we note that rewards $\{\reward_\tau - \estimateReward_{\arm^*,\arm}(\reward_\tau): \tau=1,\ldots, t, ~ k_{\tau}=k\}$ form a collection of i.i.d. random variables each of which is bounded between $[-1,1]$ with mean $(\meanReward_\arm - \expectedPseudoReward_{\arm^*, \arm})$. The term $\slot$ before the exponent in \eqref{eqn:chernoff} arises as the random variable $\pulls_\arm(\slot)$ can take values from $\slot/\numArms$ to $\slot$ (\Cref{lem:UnionBoundTrickInt}).
\end{proof}
\begin{lem}
If $\gap_{\text{min}} \geq 4\sqrt{\frac{\numArms \log \slot_0}{\slot_0}}$ for some constant $\slot_0 > 0$, then,
$$\Pr(\arm_{\slot+1} = \arm , \pulls_\arm(\slot) \geq s) \leq 3 \slot^{-3} \quad \text{for } s > \frac{\slot}{2 \numArms}, \forall \slot > \slot_0.$$
\label{lem:noMorePulls}
\end{lem}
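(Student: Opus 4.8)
The plan is to decompose the event $\{\arm_{\slot+1}=\arm\}$ according to whether the optimal arm $\arm^*$ survives as a candidate in the modified-UCB step at round $\slot+1$, for a sub-optimal arm $\arm$. Recall that C-UCB pulls the highest-index arm within $\mathcal{A}\cup\{\arm^{\text{max}}\}$. Hence if $\arm_{\slot+1}=\arm$, then either (a) $\arm^*$ was discarded from the candidate set, i.e.\ $\arm^*$ is empirically non-competitive with respect to $\arm^{\text{max}}$, which is precisely the event $E_1(\slot)$ of \Cref{lem:eliminatedOptimal}; or (b) $\arm^*$ is a candidate yet $\arm$ still wins, forcing $\Index_\arm(\slot)\geq\Index_{\arm^*}(\slot)$. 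This gives the inclusion
\begin{equation}
\{\arm_{\slot+1}=\arm,\ \pulls_\arm(\slot)\geq s\}\ \subseteq\ E_1(\slot)\ \cup\ \{\Index_\arm(\slot)\geq\Index_{\arm^*}(\slot),\ \pulls_\arm(\slot)\geq s\},
\end{equation}
so by a union bound it suffices to show the two terms on the right are at most $\slot^{-3}$ and $2\slot^{-3}$ respectively.

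For the first term I would invoke \Cref{lem:eliminatedOptimal}, which gives $\Pr(E_1(\slot))\leq\slot\exp(-\slot\gap_{\text{min}}^2/(2\numArms))$. Substituting the hypothesis $\gap_{\text{min}}^2\geq 16\numArms\log\slot_0/\slot_0$ yields $\slot\gap_{\text{min}}^2/(2\numArms)\geq 8(\slot/\slot_0)\log\slot_0$, and a short monotonicity argument shows $8(\slot/\slot_0)\log\slot_0\geq 4\log\slot$ for all $\slot>\slot_0$ (the difference is positive at $\slot=\slot_0$ and increasing in $\slot$). Hence $\Pr(E_1(\slot))\leq\slot\cdot\slot^{-4}=\slot^{-3}$.

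For the second term I would use the classical three-way UCB decomposition: on $\{\Index_\arm(\slot)\geq\Index_{\arm^*}(\slot)\}$ at least one of (i) $\meanReward_{\arm^*}>\Index_{\arm^*}(\slot)$, (ii) $\hat{\meanReward}_{\arm,\pulls_\arm(\slot)}>\meanReward_\arm+\sqrt{2\log\slot/\pulls_\arm(\slot)}$, or (iii) $\gap_\arm<2\sqrt{2\log\slot/\pulls_\arm(\slot)}$ must hold. The key observation is that under the constraint $\pulls_\arm(\slot)\geq s>\slot/(2\numArms)$ the confidence radius satisfies $2\sqrt{2\log\slot/\pulls_\arm(\slot)}<4\sqrt{\numArms\log\slot/\slot}$, and since $\log\slot/\slot$ is decreasing for $\slot>\slot_0$ this is strictly below $4\sqrt{\numArms\log\slot_0/\slot_0}\leq\gap_{\text{min}}\leq\gap_\arm$; thus event (iii) is impossible. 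Event (i) is bounded by $\slot^{-3}$ directly via \Cref{lem:ucbindexmore}, and event (ii), the symmetric over-estimate tail, is bounded by $\slot^{-3}$ by the same union-bound-plus-Hoeffding computation used to prove \Cref{lem:ucbindexmore} (handling the random $\pulls_\arm(\slot)$ through \Cref{lem:UnionBoundTrickInt}). Summing gives the second term $\leq 2\slot^{-3}$, and adding the first term produces the claimed $3\slot^{-3}$.

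The main obstacle is reconciling two distinct time scales: the hypotheses are stated in terms of the fixed constant $\slot_0$, whereas both the confidence radius in event (iii) and the exponent in the $E_1$ bound depend on the running round $\slot$. Making both arguments hold for every $\slot>\slot_0$ rests on monotonicity of $\log\slot/\slot$ (to transport the gap assumption from $\slot_0$ to $\slot$) and on the inequality $2(\slot/\slot_0)\log\slot_0\geq\log\slot$ (to control $\Pr(E_1(\slot))$); verifying these cleanly, and confirming that event (iii) is genuinely ruled out rather than merely small, is where the care is needed.
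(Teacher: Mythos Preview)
Your proposal is correct and follows essentially the same route as the paper. The decomposition via $E_1(\slot)$ and the subsequent bound $\Pr(E_1(\slot))\leq\slot^{-3}$ are identical; for the UCB comparison term the paper splits on $\{\meanReward_{\arm^*}\lessgtr\Index_{\arm^*}(\slot)\}$ and then applies Hoeffding directly to $\Pr(\hat{\meanReward}_\arm-\meanReward_\arm>\gap_\arm-\sqrt{2\log\slot/s})$, expanding the square so that the cross term is absorbed precisely when $\gap_\arm\geq 2\sqrt{2\log\slot/s}$---which is exactly your observation that event~(iii) is vacuous under $s>\slot/(2\numArms)$ and the gap hypothesis. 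Your explicit three-way Auer split is thus a repackaging of the same computation, and both routes rely on the monotonicity of $\log\slot/\slot$ to transport the $\slot_0$-assumption to general $\slot$.
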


\begin{proof}
By noting that $\arm_{\slot + 1} = \arm$ corresponds to arm $\arm$ having the highest index among the set of arms that are not empirically \textit{non-competitive} (denoted by $\mathcal{A}$), we have,  
\begin{align}
    \Pr(\arm_{\slot + 1} = \arm , \pulls_\arm(\slot) \geq s) &= \Pr(\Index_\arm(\slot) = \arg \max_{\arm' \in \mathcal{A}} \Index_{\arm'}(\slot) , \pulls_\arm(\slot) \geq s) \\
    &\leq \Pr(E_1(\slot) \cup \left(E_1^c(\slot), \Index_\arm(\slot) > \Index_{\arm^*}(\slot)\right) , \pulls_\arm(\slot) \geq s) \label{eliminating1}\\ 
    &\leq \Pr(E_1(\slot) , \pulls_\arm(\slot) \geq s) + \Pr(E_1^c(\slot), \Index_\arm(\slot) > \Index_{\arm^*}(\slot) , \pulls_\arm(\slot) \geq s ) \label{unionBound}\\
    &\leq \slot \exp\left(\frac{-\slot \gap_{\text{min}}^2}{2 \numArms}\right) + \Pr\left(\Index_\arm(\slot) > \Index_{\arm^*}(\slot) , \pulls_\arm(\slot) \geq s\right). \label{usedHoeffding}
\end{align}
Here $E_1(t)$ is the event described in \Cref{lem:eliminatedOptimal}. If arm $\arm^*$ is not empirically non-competitive at round $\slot$, then arm $\arm$ can only be pulled in round $\slot + 1$ if $\Index_\arm(\slot) > \Index_{\arm^*}(\slot)$, due to which we have \eqref{eliminating1}. Inequalities \eqref{unionBound} and \eqref{usedHoeffding} follow from union bound and \Cref{lem:eliminatedOptimal} respectively.

We now bound the second term in \eqref{usedHoeffding}.
\begin{align}
    &\Pr(\Index_\arm(\slot) > \Index_{\arm^*}(\slot) , \pulls_\arm(\slot) \geq s) \nonumber\\
    &= \Pr\left(\Index_\arm(\slot) > \Index_{\arm^*}(\slot) ,  \pulls_\arm(\slot) \geq s, \meanReward_{\arm^*} \leq \Index_{\arm^*}(\slot)\right)  + \nonumber \\
    &\quad \Pr\left(\Index_\arm(\slot) > \Index_{\arm^*}(\slot), \pulls_\arm(\slot) \geq s | \meanReward_{\arm^*} > \Index_{\arm^*}(\slot) \right) \times \Pr\left(\meanReward_{\arm^*} > \Index_{\arm^*}(\slot) \right) \label{conditionTerm} \\
    &\leq  \Pr\left(\Index_\arm(\slot) > \Index_{\arm^*}(\slot), \pulls_\arm(\slot) \geq s, \meanReward_{\arm^*} \leq \Index_{\arm^*}(\slot)\right) + \Pr\left(\meanReward_{\arm^*} > \Index_{\arm^*}(\slot)\right) \label{droppingTerms}\\
    &\leq \Pr\left(\Index_\arm(\slot) > \Index_{\arm^*}(\slot), \pulls_\arm(\slot) \geq s, \meanReward_{\arm^*} \leq \Index_{\arm^*}(\slot)\right) + \slot^{-3} \label{usingHoeffdingAgain}\\
    &= \Pr\left(\Index_\arm(\slot) > \meanReward_{\arm^*} ,  \pulls_\arm(\slot) \geq s\right) + \slot^{-4} \label{usingConditioning} \\
    &= \Pr\left(\hat{\meanReward}_\arm(\slot) + \sqrt{\frac{2 \log \slot}{\pulls_\arm(\slot)}} > \meanReward_{\arm^*} , \pulls_\arm(\slot) \geq s \right) + \slot^{-3} \label{expandingIndex}\\
    &= \Pr\left(\hat{\meanReward}_\arm(\slot) - \meanReward_\arm > \meanReward_{\arm^*} - \meanReward_\arm - \sqrt{\frac{2 \log \slot}{\pulls_\arm(\slot)}} , \pulls_\arm(\slot) \geq s \right) + \slot^{-3} \\ 
    &= \Pr\left( \frac{\sum_{\tau = 1}^{\slot} \indicator_{\{\arm_\tau = \arm\}}\reward_\tau}{\pulls_\arm(\slot)} - \meanReward_\arm > \gap_\arm - \sqrt{\frac{2 \log \slot}{\pulls_\arm(\slot)}} , \pulls_\arm(\slot) \geq s\right) + \slot^{-3} \\
    &\leq \slot \exp\left(-2 s \left(\gap_\arm - \sqrt{\frac{2 \log \slot}{s}}\right)^2\right) + \slot^{-3} \label{eqn:chernoffagain}\\
    &\leq \slot^{-3}\exp\left(-2 s \left(\gap_\arm^2 - 2 \gap_\arm \sqrt{\frac{2 \log \slot}{s}}\right)\right) + \slot^{-3} \\
    &\leq 2 \slot^{-3} \quad \text{ for all  } \slot > \slot_0. \label{finalCondn}
\end{align}
We have \eqref{conditionTerm} holds because of the fact that $P(A) = P(A|B)P(B) + P(A|B^c)P(B^c)$, Inequality \eqref{usingHoeffdingAgain} follows from \Cref{lem:ucbindexmore}. From the definition of $\Index_\arm(\slot)$ we have \eqref{expandingIndex}. Inequality \eqref{eqn:chernoffagain} follows from Hoeffding's inequality and the term $\slot$ before the exponent in \eqref{eqn:chernoff} arises as the random variable $\pulls_\arm(\slot)$ can take values from $s$ to $\slot$ (\Cref{lem:UnionBoundTrickInt}). Inequality \eqref{finalCondn} follows from the fact that $s > \frac{\slot}{2 \numArms}$ and $\gap_\arm \geq 4\sqrt{\frac{\numArms \log \slot_0}{\slot_0}}$ for some constant $\slot_0 > 0.$

Plugging this in the expression of $\Pr(\arm_\slot = \arm \mid \pulls_\arm (\slot) \geq s)$ \eqref{usedHoeffding} gives us, 
\begin{align}
    \Pr(\arm_{\slot+1} = \arm \mid \pulls_\arm (\slot) \geq s) &\leq \slot \exp\left(\frac{-\slot \gap_{\text{min}}^2}{2 \numArms}\right) + \Pr(\Index_\arm(\slot) > \Index_{\arm^*}(\slot) | \pulls_\arm(\slot) \geq s) \\
    &\leq \slot\exp\left(\frac{-\slot \gap_{\text{min}}^2}{2 \numArms}\right) + 2\slot^{-3} \\
    &\leq 3 \slot^{-3}. \label{usingConditiont0}
\end{align}
Here, \eqref{usingConditiont0} follows from the fact that $\gap_{\text{min}} \geq 2\sqrt{\frac{2\numArms \log \slot_0}{\slot_0}}$ for some constant $\slot_0 > 0$.  
\end{proof}

\begin{lem}
If for a suboptimal arm $\arm \neq \arm^*$, $\optimistGap_{\arm,\arm^*} > 0$, then,
$$\Pr(\arm_{\slot+1} = \arm, \pulls_{\arm^*}(\slot) = \max_\arm \pulls_\arm) \leq \slot \exp\left(\frac{-t\optimistGap_{\arm,\arm^*}^2}{2 \numArms}\right).$$

Moreover, if $\optimistGap_{\arm,\arm^*} \geq 2\sqrt{\frac{2 \numArms \log \slot_0}{\slot_0}}$ for some constant $\slot_0 > 0$. Then, 
$$\Pr(\arm_{\slot+1} = \arm , \pulls_{\arm^*}(\slot) = \max_\arm \pulls_\arm) \leq \slot^{-3} \quad \forall \slot > \slot_0.$$
\label{lem:suboptimalNotCompetitive}
\end{lem}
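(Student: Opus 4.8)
The plan is to reduce the event $\{\arm_{\slot+1} = \arm,\ \pulls_{\arm^*}(\slot) = \max_{\arm'} \pulls_{\arm'}(\slot)\}$ to a single concentration statement about the empirical pseudo-reward of arm $\arm$ with respect to the optimal arm. First I would observe that whenever $\pulls_{\arm^*}(\slot) = \max_{\arm'} \pulls_{\arm'}(\slot)$, the algorithm sets $\arm^{\text{max}} = \arm^*$, so the empirically competitive set $\mathcal{A}$ is formed relative to $\arm^*$. A \emph{necessary} condition for arm $\arm$ to be pulled in round $\slot+1$ is that it survives the empirical-competitiveness test, i.e.\ $\hat{\meanReward}_{\arm^*}(\slot) \leq \estimateMean_{\arm, \arm^*}(\slot)$; otherwise arm $\arm$ is removed from $\mathcal{A}$ and cannot be chosen. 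Together with the fact that the maximum pull count is at least the average, $\pulls_{\arm^*}(\slot) \geq \slot/\numArms$, this gives
\begin{equation}
\Pr\left(\arm_{\slot+1} = \arm,\ \pulls_{\arm^*}(\slot) = \max_{\arm'} \pulls_{\arm'}(\slot)\right) \leq \Pr\left(\hat{\meanReward}_{\arm^*}(\slot) \leq \estimateMean_{\arm, \arm^*}(\slot),\ \pulls_{\arm^*}(\slot) \geq \tfrac{\slot}{\numArms}\right).
\end{equation}

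Next I would rewrite the competitiveness event as a deviation of an empirical mean from its expectation. Both $\hat{\meanReward}_{\arm^*}(\slot)$ and $\estimateMean_{\arm, \arm^*}(\slot)$ are averages over the same $\pulls_{\arm^*}(\slot)$ reward samples of arm $\arm^*$, so the event $\hat{\meanReward}_{\arm^*} \leq \estimateMean_{\arm, \arm^*}$ is equivalent to $\frac{1}{\pulls_{\arm^*}(\slot)}\sum_{\tau:\,\arm_\tau = \arm^*} \left(\estimateReward_{\arm, \arm^*}(\reward_\tau) - \reward_\tau\right) \geq 0$. Since $\reward_\tau = g_{\arm^*}(x_\tau)$ with $x_\tau$ i.i.d., the increments $\estimateReward_{\arm, \arm^*}(\reward_\tau) - \reward_\tau$ are i.i.d., bounded in $[-1, 1]$ (as $g_{\arm},g_{\arm^*} \in [0,1]$), with mean $\expectedPseudoReward_{\arm, \arm^*} - \meanReward_{\arm^*} = -\optimistGap_{\arm, \arm^*}$. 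The event therefore reads as a positive deviation of this empirical mean above its expectation by exactly $\optimistGap_{\arm, \arm^*} > 0$, which is precisely where Hoeffding's inequality enters.

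Applying Hoeffding's inequality together with the union-bound device of \Cref{lem:UnionBoundTrickInt} to handle the random pull count $\pulls_{\arm^*}(\slot)$ (which ranges over at most $\slot$ integer values from $\slot/\numArms$ to $\slot$), each summand is bounded by $\exp(-\pulls_{\arm^*}(\slot)\,\optimistGap_{\arm,\arm^*}^2/2)$; using $\pulls_{\arm^*}(\slot) \geq \slot/\numArms$ and summing over the at most $\slot$ admissible values yields the first claimed bound $\slot\exp(-\slot\,\optimistGap_{\arm,\arm^*}^2/(2\numArms))$. Finally, to obtain the second bound I would substitute the hypothesis $\optimistGap_{\arm,\arm^*} \geq 2\sqrt{2\numArms \log \slot_0/\slot_0}$, so that $\optimistGap_{\arm,\arm^*}^2 \geq 8\numArms \log\slot_0/\slot_0$ and the exponent is at least $4\slot\log\slot_0/\slot_0 \geq 4\log\slot$ for all $\slot > \slot_0$, reducing $\slot\exp(\cdots)$ to $\slot^{-3}$. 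The main obstacle is the careful bookkeeping in the Hoeffding step: correctly identifying the $[-1,1]$ range and the mean $-\optimistGap_{\arm,\arm^*}$ of the pseudo-reward increments, and invoking the union-bound trick so that the randomness of $\pulls_{\arm^*}(\slot)$ contributes only the benign prefactor $\slot$ rather than spoiling the exponential decay.
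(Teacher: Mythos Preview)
Your proposal is correct and follows essentially the same route as the paper: reduce to the event that arm $\arm$ is empirically competitive with respect to $\arm^*$, use $\pulls_{\arm^*}(\slot)\geq \slot/\numArms$, recast the competitiveness condition as a one-sided deviation of the i.i.d.\ increments $\estimateReward_{\arm,\arm^*}(\reward_\tau)-\reward_\tau$ (bounded in $[-1,1]$, mean $-\optimistGap_{\arm,\arm^*}$), and apply Hoeffding together with the union-bound trick of \Cref{lem:UnionBoundTrickInt}; the second bound then follows by plugging in the assumed lower bound on $\optimistGap_{\arm,\arm^*}$, using that $(\log x)/x$ is decreasing for $x$ large.
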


\begin{proof}
We now bound this probability as,
\begin{align}
    &\Pr(\arm_{\slot+1} = \arm , \pulls_{\arm^*} = \max_\arm \pulls_\arm) \nonumber \\
    &= \Pr\left(\hat{\meanReward}_{\arm^*}(\slot) < \estimateMean_{\arm,\arm^*}(\slot), \Index_\arm(\slot) = \max_{\arm'} \Index_{\arm'}(\slot) , \pulls_{\arm^*}(\slot) = \max_\arm \pulls_\arm(\slot) \right) \\
    &\leq \Pr\left(\hat{\meanReward}_{\arm^*}(\slot) < \estimateMean_{\arm,\arm^*}(\slot) , \pulls_{\arm^*}(\slot) = \max_\arm \pulls_\arm(\slot)\right) \\
    &\leq \Pr\left(\hat{\meanReward}_{\arm^*}(\slot) < \estimateMean_{\arm,\arm^*}(\slot) , \pulls_{\arm^*}(\slot) \geq \frac{\slot}{\numArms}\right) \\
    &\leq \Pr\left(\frac{\sum_{\tau = 1}^{\slot}\indicator_{\{\arm_\tau = \arm^*\}}\reward_\tau}{\pulls_{\arm^*}(\slot)} < \frac{\sum_{\tau = 1}^{\slot}\indicator_{\{\arm_\tau = \arm^*\}}\estimateReward_{\arm,\arm^*}(\reward_\tau)}{\pulls_{\arm^*(\slot)}} , \pulls_{\arm^*}(\slot) \geq \frac{\slot}{\numArms}\right)\\
    &= \Pr\left(\frac{\sum_{\tau = 1}^{\slot} \indicator_{\{\arm_\tau = \arm^*\}}(\reward_{\tau} - \estimateReward_{\arm,\arm^*})}{\pulls_{\arm^*}(\slot)} - (\meanReward_{\arm^*} - \expectedPseudoReward_{\arm,\arm^*}) < - \optimistGap_{\arm,\arm^*} , \pulls_{\arm^*} \geq \frac{\slot}{\numArms} \right) \label{eqn:lemchernoff}\\
    &\leq \slot \exp \left( \frac{- \slot \optimistGap_{\arm,\arm^*}^2}{2 \numArms} \right) \\
    &\leq \slot^{-3} \quad \forall \slot > \slot_0.\label{lastStepHere}
\end{align}

Here, \eqref{eqn:lemchernoff} follows from the Hoeffding's inequality as we note that rewards $\{\reward_\tau - \estimateReward_{\arm,\arm^*}(\reward_\tau): \tau=1,\ldots, t, ~ k_{\tau}=k\}$ form a collection of i.i.d. random variables each of which is bounded between $[-1,1]$ with mean $(\meanReward_\arm - \expectedPseudoReward_{\arm, \arm^*})$. The term $\slot$ before the exponent in \eqref{eqn:lemchernoff} arises as the random variable $\pulls_\arm(\slot)$ can take values from $\slot/\numArms$ to $\slot$ (\Cref{lem:UnionBoundTrickInt}). Step \eqref{lastStepHere} follows from the fact that $\optimistGap_{\arm,\arm^*} \geq 2\sqrt{\frac{2 \numArms \log \slot_0}{\slot_0}}$ for some constant $\slot_0 > 0$. 
\end{proof}

\begin{lem}
If $\gap_{\text{min}} \geq 4\sqrt{\frac{\numArms \log \slot_0}{\slot_0}}$ for some constant $\slot_0 > 0$, then, $$\Pr\left(\pulls_\arm(\slot) > \frac{\slot}{ \numArms}\right) \leq  3\numArms \left(\frac{\slot}{\numArms}\right)^{-2} \quad \forall \slot > \numArms \slot_0.$$
\label{lem:suboptimalNotPulled}
\end{lem}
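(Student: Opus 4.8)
The plan is to reduce the event $\{\pulls_\arm(\slot) > \slot/\numArms\}$ to a union of the ``one extra pull'' events already controlled by \Cref{lem:noMorePulls}, and then finish with an elementary counting bound; all of the probabilistic content is inherited from that lemma, so the work here is purely combinatorial bookkeeping.

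First I would set up the reduction. If arm $\arm$ has been pulled more than $\slot/\numArms$ times by round $\slot$, then $\pulls_\arm(\slot) \geq \lfloor \slot/\numArms\rfloor + 1$, so there is a (random) round $\tau \leq \slot$ at which arm $\arm$ is pulled for the $(\lfloor \slot/\numArms\rfloor+1)$-th time. Writing $\slot' = \tau-1$, this means $\arm_{\slot'+1}=\arm$ and $\pulls_\arm(\slot') = \lfloor \slot/\numArms\rfloor$; moreover, since at least $\lfloor \slot/\numArms\rfloor + 1$ rounds are needed to accumulate that many pulls, $\tau \geq \lfloor \slot/\numArms\rfloor + 1$ and hence $\slot' \geq \lfloor \slot/\numArms\rfloor$. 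Because $\tau$ is random, I would union-bound over its possible locations,
\begin{equation*}
\Pr\!\left(\pulls_\arm(\slot) > \tfrac{\slot}{\numArms}\right) \;\leq\; \sum_{\slot' = \lfloor \slot/\numArms\rfloor}^{\slot-1} \Pr\!\left(\arm_{\slot'+1}=\arm,\ \pulls_\arm(\slot') = \big\lfloor \tfrac{\slot}{\numArms}\big\rfloor\right).
\end{equation*}

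Next I would bound each term using \Cref{lem:noMorePulls} with $s = \lfloor \slot/\numArms\rfloor$, after checking its two hypotheses on the relevant range. Since $\slot' \leq \slot-1$ we have $\slot'/(2\numArms) < \slot/(2\numArms) \leq \lfloor \slot/\numArms\rfloor = s$ for $\slot \geq 2\numArms$, so the pull-count condition $s > \slot'/(2\numArms)$ holds; and since $\slot' \geq \lfloor \slot/\numArms\rfloor$ while $\slot > \numArms \slot_0$ forces $\slot/\numArms > \slot_0$, the round condition $\slot' > \slot_0$ holds as well. The standing hypothesis $\gap_{\text{min}} \geq 4\sqrt{\numArms \log \slot_0/\slot_0}$ is exactly the one required by \Cref{lem:noMorePulls}, so each summand is at most $\Pr(\arm_{\slot'+1}=\arm,\ \pulls_\arm(\slot') \geq s) \leq 3\,\slot'^{-3}$. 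Collapsing the sum with a deliberately loose bound chosen to match the stated constant, there are at most $\slot$ terms and every index satisfies $\slot' \geq \slot/\numArms$ (up to rounding), so $\slot'^{-3} \leq (\slot/\numArms)^{-3}$, giving
\begin{equation*}
\Pr\!\left(\pulls_\arm(\slot) > \tfrac{\slot}{\numArms}\right) \;\leq\; \slot \cdot 3\left(\tfrac{\slot}{\numArms}\right)^{-3} \;=\; 3\numArms\left(\tfrac{\slot}{\numArms}\right)^{-2},
\end{equation*}
which is the claim.

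The main obstacle is not probabilistic but is the bookkeeping of the reduction: I must choose the crossing threshold so that the associated round index $\slot'$ is simultaneously large enough that $\slot' > \slot_0$ and large enough that $s > \slot'/(2\numArms)$, so that \Cref{lem:noMorePulls} applies to \emph{every} term of the union. Keying the argument to the crossing of the high threshold $\slot/\numArms$ (rather than of $\slot'/(2\numArms)$) is precisely what forces $\slot' \gtrsim \slot/\numArms$ and keeps us away from the small rounds where the lemma is vacuous. The only residual care is the off-by-one coming from the floors; this is harmless because the target bound is quite loose (an integral estimate of $\sum \slot'^{-3}$ is in fact tighter by a factor of order $\numArms$), so there is ample slack to absorb the rounding under the hypothesis $\slot > \numArms \slot_0$.
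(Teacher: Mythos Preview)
Your proposal is correct and is essentially the same argument as the paper's: the paper writes it as a one-step recursion $\Pr(\pulls_\arm(\tau) \geq \slot/\numArms) \leq \Pr(\pulls_\arm(\tau-1) \geq \slot/\numArms) + \Pr(\arm_\tau = \arm,\ \pulls_\arm(\tau-1) = \slot/\numArms - 1)$ and telescopes from $\tau = \slot/\numArms$ to $\slot$, which once unrolled is exactly your union bound over the crossing time, with each term then controlled by \Cref{lem:noMorePulls}. The paper handles the resulting sum $\sum_{\tau=\slot/\numArms}^{\slot} 3(\tau-1)^{-3}$ directly rather than via your cruder ``count times largest term'' estimate, but both arrive at the same $3\numArms(\slot/\numArms)^{-2}$ bound with the same minor off-by-one slack.
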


\begin{proof}
We expand $\Pr\left(\pulls_\arm(\slot) > \frac{t}{\numArms}\right)$ as,

\begin{align}
    \Pr\left(\pulls_\arm(\slot) \geq \frac{\slot}{\numArms}\right) &= \Pr\left( \pulls_{\arm}(\slot) \geq \frac{\slot}{\numArms} \mid \pulls_\arm(\slot - 1) \geq \frac{\slot}{\numArms} \right) \Pr\left( \pulls_\arm(\slot - 1) \geq \frac{\slot}{\numArms} \right) + \nonumber \\
    &\quad \Pr\left(\arm_\slot = \arm , \pulls_\arm(\slot - 1) = \frac{\slot}{\numArms} - 1\right)  \\
    &\leq \Pr\left(\pulls_\arm(\slot - 1) \geq \frac{\slot}{\numArms}\right) + \Pr\left(\arm_\slot = \arm , \pulls_\arm(\slot - 1) = \frac{\slot}{\numArms} - 1\right) \\
    &\leq \Pr\left(\pulls_\arm(\slot - 1) \geq \frac{\slot}{\numArms}\right) + 3 (\slot - 1)^{-3} \quad \forall (\slot - 1) > \slot_0. \label{fromPrevLemma}
\end{align}
Here, \eqref{fromPrevLemma} follows from \Cref{lem:noMorePulls}.\\

This gives us $$\Pr\left(\pulls_\arm(\slot) \geq \frac{\slot}{\numArms}\right) - \Pr\left(\pulls_\arm(\slot - 1) \geq \frac{\slot}{\numArms}\right) \leq 3(\slot - 1)^{-3}, \quad \forall (\slot - 1) > \slot_0.$$
Now consider the summation $$ \sum_{\tau = \frac{\slot}{\numArms}}^{\slot} \Pr\left(\pulls_\arm(\tau) \geq \frac{\slot}{\numArms}\right) - \Pr\left(\pulls_\arm(\tau - 1) \geq \frac{\slot}{\numArms}\right) \leq \sum_{\tau = \frac{\slot}{\numArms}}^{\slot}3(\tau - 1)^{-3}.$$ This gives us, $$\Pr\left(\pulls_\arm(\slot) \geq \frac{\slot}{\numArms}\right) - \Pr\left(\pulls_\arm\left(\frac{\slot}{\numArms} - 1\right) \geq \frac{\slot}{\numArms}\right) \leq \sum_{\tau = \frac{\slot}{\numArms}}^{\slot}3(\tau - 1)^{-3}.$$
Since $\Pr\left(\pulls_\arm\left(\frac{\slot}{\numArms} - 1\right)\geq \frac{\slot}{\numArms}\right)  = 0$, we have, 
\begin{align}
    \Pr\left(\pulls_\arm(\slot) \geq \frac{\slot}{\numArms}\right) &\leq \sum_{\tau = \frac{\slot}{\numArms}}^{\slot}3(\tau - 1)^{-3} \\
    &\leq 3\numArms \left(\frac{\slot}{\numArms}\right)^{-2} \quad \forall \slot > \numArms \slot_0.
\end{align}

\end{proof}

\section{Proofs of Instance Dependent Bounds (Theorem 1,2,3)}


\textbf{Proof of Theorem 1}
We bound $\E{\pulls_\arm(\totalPulls)}$ as,
\begin{align}
&\E{\pulls_\arm(\totalPulls)} = \nonumber \\ 
&\E{\sum_{\slot = 1}^{\totalPulls}\indicator_{\{\arm_\slot = \arm\}}}\\
&= \sum_{\slot = 0}^{\totalPulls-1} \Pr(\arm_{\slot+1} = \arm) \\
&= \sum_{\slot = 1}^{\numArms \slot_0} \Pr(\arm_\slot = \arm) + \sum_{\slot = \numArms \slot_0}^{\totalPulls-1} \Pr(\arm_{\slot+1} = \arm) \\
&\leq \numArms \slot_0 + \sum_{\slot = \numArms \slot_0}^{\totalPulls-1}\Pr(\arm_{\slot+1} = \arm, \pulls_{\arm^*}(\slot) = \max_{\arm'} \pulls_{\arm'}(\slot))  + \nonumber\\
&\quad \sum_{\slot = \numArms \slot_0}^{\totalPulls-1} \sum_{\arm' \neq \arm^*} \Pr(\pulls_{\arm'}(\slot) = \max_{\arm''} \pulls_{\arm''}(\slot))\Pr(\arm_{\slot+1} = \arm |  \pulls_{\arm'}(\slot) = \max_{\arm''} \pulls_{\arm''}(\slot)) \\
&\leq \numArms \slot_0 + \sum_{\slot = \numArms \slot_0}^{\totalPulls-1} \Pr(\arm_{\slot+1} = \arm, \pulls_{\arm^*}(\slot) = \max_{\arm'} \pulls_{\arm'}(\slot)) + \nonumber \\
& \quad \sum_{\slot = \numArms \slot_0}^{\totalPulls-1} \sum_{\arm' \neq \arm^*} \Pr(\pulls_{\arm'}(\slot) = \max_{\arm''} \pulls_{\arm''}(\slot)) \\
&\leq \numArms \slot_0 + \sum_{\slot = \numArms \slot_0}^{\totalPulls - 1} \slot^{-3} + \sum_{\slot = \numArms \slot_0}^{\totalPulls} \sum_{\arm' \neq \arm^*} \Pr\left(\pulls_{\arm'}(\slot) \geq \frac{\slot}{\numArms}\right) \label{usingSomeLemma}\\
&\leq  \numArms \slot_0  + \sum_{\slot = 1}^{\totalPulls} \slot^{-3} + \numArms (\numArms - 1) \sum_{\slot = \numArms \slot_0}^{\totalPulls} 3 \left(\frac{\slot}{\numArms}\right)^{-2}. \label{usingSomeOtherLemma}
\end{align}
Here, \eqref{usingSomeLemma} follows from \Cref{lem:suboptimalNotCompetitive} and \eqref{usingSomeOtherLemma} follows from \Cref{lem:suboptimalNotPulled}.

\textbf{Proof of Theorem 2}

For any suboptimal arm $\arm \neq \arm^*$,
\begin{align}
    \E{\pulls_\arm(\totalPulls)} &\leq \sum_{\slot = 1}^{\totalPulls} \Pr(\arm_\slot = \arm) \\
    &\leq \sum_{\slot = 1}^{\totalPulls} \Pr(E_1(\slot) \cup (E_1^c(\slot), \Index_\arm > \Index_{\arm^*})) \label{beatOptimal} \\
    &\leq \sum_{\slot = 1}^{\totalPulls} \Pr(E_1(\slot)) + \Pr(E_1^c(\slot), \Index_\arm(\slot - 1) > \Index_{\arm^*}(\slot - 1)) 
\end{align}
\begin{align}    
    \E{\pulls_\arm(\totalPulls)} &\leq \sum_{\slot = 1}^{\totalPulls} \Pr(E_1(\slot)) + \Pr(E_1^c(\slot), \Index_\arm(\slot - 1) > \Index_{\arm^*}(\slot - 1)) \nonumber\\
    &\leq \sum_{\slot = 1}^{\totalPulls} \Pr(E_1(\slot)) + \Pr(\Index_\arm(\slot - 1)> \Index_{\arm^*}(\slot - 1)) \\
    &= \sum_{\slot = 1}^{\totalPulls} \slot \exp\left(- \frac{\slot \gap_{\text{min}}^2}{2 \numArms}\right) + \sum_{\slot = 0}^{\totalPulls-1} \Pr\left(\Index_\arm(\slot) > \Index_{\arm^*}(\slot)\right) \label{eliminatedArmProb} \\
    &= \sum_{\slot = 1}^{\totalPulls} \slot \exp\left(- \frac{\slot \gap_{\text{min}}^2}{2 \numArms}\right) + \E{\indicator_{\Index_\arm > \Index_{\arm^*}}(\totalPulls)} \label{followFromDefinition} \\
    &\leq 8 \frac{\log (\totalPulls)}{\gap_\arm^2} + \left(1 + \frac{\pi^2}{3}\right) + \sum_{\slot = 1}^{\totalPulls} \slot \exp\left(- \frac{ \slot \gap_{\text{min}}^2}{2 \numArms}\right). \label{fromAuer}
\end{align}
Here, \eqref{eliminatedArmProb} follows from \Cref{lem:eliminatedOptimal}. We have \eqref{followFromDefinition} from the definition of $\E{n_{\Index_\arm > \Index_{\arm^*}}(\totalPulls)}$ in \Cref{lem:AuerResult}, and \eqref{fromAuer} follows from \Cref{lem:AuerResult}.

\textbf{Proof of Theorem 3:} Follows directly by combining the results on Theorem 1 and Theorem 2.

\section{Lower bound proofs}
For these proofs we define $R_\arm = g_\arm(X)$ and $\tilde{R}_\arm = g_\arm(\tilde{X})$, where $f_X(x)$ is the probability density function of random variable $X$ and $f_{\tilde{X}}(x)$ is the probability density function of random variable $\tilde{X}$. 

\begin{lem}
If arm $k$ is competitive, i.e., $\optimistGap_{\arm, \arm^*} < 0$, then there exists $f_{\tilde{X}}(x)$ such that $\E{\tilde{R}_\arm} > \E{R_{\arm^*}}$ and $f_{\tilde{R}_{\arm^*}}(r) = f_{R_{\arm^*}}(r)$. 
\label{lem:ChangeDistribution}
\end{lem}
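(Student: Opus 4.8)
The plan is to prove the lemma by an explicit construction of the alternate distribution $f_{\tilde X}$ that redistributes probability mass \emph{within} the level sets of $g_{\arm^*}$ while leaving the total mass of each level set untouched; this is precisely what is needed to make arm $\arm$ optimal without perturbing arm $\arm^*$'s reward distribution. Concretely, for each value $r$ in the range of $g_{\arm^*}$, let $\preImageSet_r = \{x \in \sampleSpace : g_{\arm^*}(x) = r\}$ denote the preimage of $r$ under $g_{\arm^*}$, and let $x^*(r) \in \argmax_{x \in \preImageSet_r} g_\arm(x)$ be a point of that preimage at which arm $\arm$'s reward is maximized. I would then define $\tilde X$ to place, within each $\preImageSet_r$, all of the mass $\Pr(X \in \preImageSet_r)$ on the single point $x^*(r)$.

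The first requirement, $f_{\tilde R_{\arm^*}} = f_{R_{\arm^*}}$, should follow immediately: since mass is only moved within level sets of $g_{\arm^*}$ and never across them, and $g_{\arm^*} \equiv r$ on $\preImageSet_r$, we get $\Pr(g_{\arm^*}(\tilde X)=r) = \Pr(\tilde X \in \preImageSet_r) = \Pr(X \in \preImageSet_r) = \Pr(g_{\arm^*}(X)=r)$ for every $r$. For the second requirement I would use that $\tilde X$ is supported on the maximizers $\{x^*(r)\}$, where $g_\arm(x^*(r)) = \max_{x:\, g_{\arm^*}(x)=r} g_\arm(x) = \estimateReward_{\arm,\arm^*}(r)$ by the very definition of the pseudo-reward. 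Thus $g_\arm(\tilde X) = \estimateReward_{\arm,\arm^*}(g_{\arm^*}(\tilde X))$ almost surely, and combining this with the distributional equality just established gives
\begin{align}
\E{g_\arm(\tilde X)} = \E{\estimateReward_{\arm,\arm^*}(g_{\arm^*}(\tilde X))} = \E{\estimateReward_{\arm,\arm^*}(g_{\arm^*}(X))} = \expectedPseudoReward_{\arm,\arm^*}.
\end{align}
Since arm $\arm$ is competitive, $\optimistGap_{\arm,\arm^*} = \meanReward_{\arm^*} - \expectedPseudoReward_{\arm,\arm^*} < 0$, i.e.\ $\expectedPseudoReward_{\arm,\arm^*} > \meanReward_{\arm^*}$, which yields $\E{\tilde R_\arm} = \expectedPseudoReward_{\arm,\arm^*} > \meanReward_{\arm^*} = \E{R_{\arm^*}}$, exactly the claimed inequality.

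I expect the only genuine obstacle to be the continuous case, where concentrating the entire level-set mass onto a single point $x^*(r)$ produces a point mass rather than a bona fide density $f_{\tilde X}$, and where the maximizer $x^*(r)$ need not even be attained. The plan to handle this is an approximation argument: instead of a point mass, spread the mass of each $\preImageSet_r$ over a shrinking neighborhood of a near-maximizer of $g_\arm$ within $\preImageSet_r$, keeping the total mass of every level set fixed so that $f_{\tilde R_{\arm^*}} = f_{R_{\arm^*}}$ continues to hold \emph{exactly}. As the neighborhoods shrink, $\E{g_\arm(\tilde X)}$ converges to $\expectedPseudoReward_{\arm,\arm^*}$; and because the competitiveness gap $\expectedPseudoReward_{\arm,\arm^*} - \meanReward_{\arm^*}$ is strictly positive, a sufficiently fine construction still satisfies $\E{\tilde R_\arm} > \meanReward_{\arm^*}$, which completes the argument.
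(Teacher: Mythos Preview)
Your proposal is correct and follows essentially the same approach as the paper: redistribute the mass of $X$ within each level set $\preImageSet_r=\{x:g_{\arm^*}(x)=r\}$ onto the maximizer $x^*(r)=\argmax_{x\in\preImageSet_r}g_\arm(x)$, so that $f_{\tilde R_{\arm^*}}=f_{R_{\arm^*}}$ is preserved while $\E{\tilde R_\arm}=\expectedPseudoReward_{\arm,\arm^*}>\meanReward_{\arm^*}$. The only notable difference is that the paper places $(1-\epsilon)$ of the level-set mass on $x^*(r)$ and spreads the remaining $\epsilon$ over the other points of $\preImageSet_r$; this is unnecessary for the lemma as stated, but it keeps $\tilde X$ fully supported, which is what guarantees $D(f_{R_\arm}\Vert f_{\tilde R_\arm})<\infty$ when the construction is reused in the lower-bound Theorem~\ref{thm:lowerBound}.
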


\begin{proof}
Informally the statement means that if there exists an arm $\arm$ such that \textit{Pseudo-Gap} of arm $\arm$ with respect to arm $\arm^*$ is less than $0$, then it is possible to change the distribution of random variable $X$ from $f_X(x)$ to $f_{\tilde{X}}(x)$ such that reward distribution of arm $\arm^*$ remains unchanged, but arm $\arm$ becomes better than $\arm^*$ in terms of expected reward.

We now prove this statement for the case when $X$ is a discrete random variable. A similar argument can be made to generalize the result for continuous $X$. If $\mathbb{P}_{X}$ is the original distribution of $X$, we show how to create a distribution $\mathbb{P}_{\tilde{X}}$ such that $\E{\tilde{R}_\arm} > \E{R_{\arm^*}}$ and $\mathbb{P}_{\tilde{R}_{\arm^*}}(r) = \mathbb{P}_{R_{\arm^*}}(r)$. Let $S(r) = \{x : g_{\arm^*}(x) = r \}$, the set of realizations $x$ for which $g_{\arm^*}(x) = r$. Define $$x(r) = \arg \max_{x \in S(r)} g_\arm(x).$$ 

Let $\mathcal{B}$ denote the set of values taken by $g_{\arm^*}(X)$, then for all $r \in \mathcal{B}$, we define $\mathbb{P}_{\tilde{X}}(x)$ as
$$\mathbb{P}_{\tilde{X}}(x) = 
\begin{cases}
(1 - \epsilon)\mathbb{P}_{R_{\arm^*}}(r) \quad \text{if } x = x(r), |S(r)| > 1. \\
\frac{\epsilon}{(|S(r)|-1)} \quad \text{if } x \in S(r), x \neq x(r), |S(r)| > 1 \\
\mathbb{P}_{R_{\arm^*}}(r) \quad \text{if } x = x(r), |S(r)| = 1.
\end{cases}
$$

 Note that such a construction of $\mathbb{P}_{\tilde{X}}(x)$ does not change the reward distribution of Arm $\arm^*$. Moreover $\E{\tilde{R}_\arm} \geq (1 - \epsilon)\expectedPseudoReward_{\arm,\arm^*}$ (since rewards are always non-negative). Since $\optimistGap_{\arm,\arm*} < 0$ we can always choose $\epsilon > 0$ such that $(1 - \epsilon)\expectedPseudoReward_{\arm,\arm^*} - \E{R_{\arm^*}} > 0$ and subsequently, $\E{\tilde{R}_\arm} - \E{R_{\arm^*}}$ > 0.
\end{proof}


  

\textbf{Proof of Theorem 4}

Let arm $\arm$ be a $\textit{Competitive}$ sub-optimal arm, i.e $\optimistGap_{\arm,\arm^*} < 0$. Since $\optimistGap_{\arm,\arm^*} < 0$, From \Cref{lem:ChangeDistribution}, it is possible to change the distribution of $R_\arm$ such that $\E{\tilde{R}_\arm} > \E{R_{\arm^*}}$ and reward distribution of arm $\arm^*$ is unaffected, i.e  $f_{\tilde{R}_{\arm^*}}(r) = f_{R_{\arm^*}}(r)$. Moreover, by our construction of $f_{\tilde{R}_\arm}(r)$ in \Cref{lem:ChangeDistribution}, $D(f_{R_{\arm^*}}(r) || f_{\tilde{R}_{\arm}}(r)) < \infty$.

Therefore, if these are the only two arms in our problem, then from \Cref{lem:LaiRobbins2Arms}, $$\lim_{\totalPulls \rightarrow \infty}\inf \frac{\E{\pulls_\arm(\totalPulls)}}{\log \totalPulls} \geq \frac{1}{D(f_{R_\arm}(r) || f_{\tilde{R}_{\arm}}(r))}.$$

Moreover, if we have more $\numArms - 1$ sub-optimal arms, instead of just 1, then $$\lim_{\totalPulls \rightarrow \infty}\inf \frac{\E{\sum_{\ell \neq \arm^*} \pulls_{\ell}(\totalPulls)}}{\log \totalPulls} \geq \frac{1}{D(f_{R_{\arm}}(r)|| f_{\tilde{R}_{\arm}}(r))}.$$

Consequently, since $\E{\regret(\totalPulls)} = \sum_{ell = 1}^{\numArms} \gap_\ell \E{\pulls_{\ell}(\totalPulls)}$, we have 
\begin{align}
\lim_{\totalPulls \rightarrow \infty}\inf \frac{\E{\regret (\totalPulls)}}{\log (\totalPulls)} \geq \max_{\arm \in \setofArms}\frac{\Delta_k}{D(f_{R_k} || f_{\tilde{R}_k})}.
\end{align}

\section{Proof of Worst Case Regret Bound}
In this section, without loss of generality we assume that Arm 1 is optimal, and $\meanReward_1 > \meanReward_2 > \meanReward_3 > \meanReward_4 \ldots > \meanReward_\numArms.$ Correspondingly, we define the event $E_i(\slot)$ to denote that arm $i$ was empirically non-competitive in round $\slot+1$. Note that this notation is consistent with the definition of $E_1(\slot)$ in \Cref{lem:eliminatedOptimal}. 

\begin{lem}
$$\Pr(E_1(\slot), E_2(\slot) \ldots E_\ell(\slot)) \leq \exp\left(\frac{- \slot (\meanReward_{\ell + 1} - \meanReward_\ell)^2}{2 \numArms}\right),$$
Consequently, if $\meanReward_{\ell+1} - \meanReward_\ell \geq 3\sqrt{\frac{\numArms \log \totalPulls}{\totalPulls}}$, 
$$\Pr(E_1(\slot), E_2(\slot) \ldots E_\ell(\slot)) \leq \slot^{-2}.$$
\label{lem:intersectionProb}
\end{lem}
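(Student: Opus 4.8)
The plan is to mirror the single-arm argument of \Cref{lem:eliminatedOptimal}, adding one structural observation that converts a statement about $\ell$ arms into a single Hoeffding bound. The key point is that the most-pulled arm $\arm^{\text{max}}$ in \Cref{alg:formalAlgo} is the reference arm and is never itself declared empirically non-competitive. Hence, if arms $1,\dots,\ell$ are \emph{all} empirically non-competitive in round $\slot+1$, the most-pulled arm must lie in the tail $\{\ell+1,\dots,\numArms\}$. This is the heart of the proof: joint non-competitiveness of the top $\ell$ arms pins the reference arm into the weaker arms, and, since the means are ordered, forces a favourable gap.

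Concretely, I would first weaken the joint event using this observation, keeping only $E_\ell(\slot)$ together with the implied constraint on $\arm^{\text{max}}$, and then condition on the identity $m$ of the most-pulled arm (which satisfies $\pulls_m(\slot)\geq \slot/\numArms$, exactly as in the step leading to \eqref{itIsmax}):
\begin{align}
\Pr\big(E_1(\slot),\dots,E_\ell(\slot)\big)
&\leq \Pr\!\left(E_\ell(\slot),\ \arm^{\text{max}} \in \{\ell+1,\dots,\numArms\}\right) \nonumber\\
&\leq \max_{m \geq \ell+1}\ \Pr\!\left(\hat{\meanReward}_m(\slot) > \estimateMean_{\ell,m}(\slot),\ \pulls_m(\slot) \geq \tfrac{\slot}{\numArms}\right).
\end{align}
Next I would lower bound the relevant gap. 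By \Cref{defn:pseudo_reward} the pseudo-reward dominates the true reward, so $\expectedPseudoReward_{\ell,m} \geq \meanReward_\ell$; and since $m \geq \ell+1$ with ordered means, $\meanReward_m \leq \meanReward_{\ell+1}$. Thus the pseudo-reward residuals $\reward_\tau - \estimateReward_{\ell,m}(\reward_\tau)$ observed while pulling arm $m$ are i.i.d., bounded in $[-1,1]$, with mean $\meanReward_m - \expectedPseudoReward_{\ell,m} \leq \meanReward_{\ell+1}-\meanReward_\ell < 0$. The event $\hat{\meanReward}_m > \estimateMean_{\ell,m}$ forces their empirical mean to exceed its expectation by at least $\meanReward_\ell - \meanReward_{\ell+1}$, so Hoeffding's inequality applied to the $\pulls_m(\slot)\geq \slot/\numArms$ samples yields the exponential bound with $(\meanReward_{\ell+1}-\meanReward_\ell)^2$ in the exponent. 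The consequent then follows by substituting $\meanReward_\ell - \meanReward_{\ell+1} \geq 3\sqrt{\numArms \log \totalPulls / \totalPulls}$, which makes the exponent of order $\log\totalPulls$ for $\slot$ comparable to $\totalPulls$, giving $\Pr(E_1(\slot),\dots,E_\ell(\slot)) \leq \slot^{-2}$.

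The conceptual step is the easy part here; the care I expect to spend is in the bookkeeping. A literal application of Hoeffding through the union-over-pull-counts trick (\Cref{lem:UnionBoundTrickInt}, as used to obtain \eqref{eqn:chernoff}) produces a polynomial prefactor $\slot$, and the conditioning on $\arm^{\text{max}}$ contributes a further factor of at most $\numArms - \ell$ from the maximum over candidate reference arms. Reaching the clean, prefactor-free statement amounts to applying Hoeffding directly to the $\geq \slot/\numArms$ available samples and absorbing these polynomial factors; this is legitimate in the regime where the lemma is invoked in the worst-case analysis ($\slot$ of order $\totalPulls$), since the gap assumption makes the exponent dominate any polynomial in $\slot$. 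I would therefore state the bound as the dominant term and verify, at $\slot$ of order $\totalPulls$, that $\exp\!\big(-\slot(\meanReward_\ell-\meanReward_{\ell+1})^2/(2\numArms)\big)$ is comfortably below $\slot^{-2}$ under the stated gap condition.
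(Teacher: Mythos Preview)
Your approach is essentially the same as the paper's: both hinge on the observation that if arms $1,\dots,\ell$ are all empirically non-competitive then $\arm^{\text{max}}\in\{\ell+1,\dots,\numArms\}$, after which one decomposes over the identity of $\arm^{\text{max}}$ and invokes the Hoeffding argument from \Cref{lem:eliminatedOptimal} with the gap $\meanReward_\ell-\meanReward_{\ell+1}$ in place of $\gap_{\min}$. Your remark about the polynomial prefactor is accurate and matches the paper, whose proof in fact arrives at $\slot\exp\!\big(-\slot(\meanReward_{\ell+1}-\meanReward_\ell)^2/(2\numArms)\big)$ before passing to $\slot^{-2}$ under the gap assumption; the prefactor-free display in the lemma statement is a slight overstatement that both you and the paper absorb in the same way.
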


\begin{proof}
We expand $\Pr(E_1(\slot), E_2(\slot) \ldots E_\ell(\slot))$ as,
\begin{align}
    \Pr(E_1(\slot), E_2(\slot) \ldots E_\ell(\slot)) &= \Pr(E_\ell(\slot) \mid E_1(\slot), E_2(\slot) \ldots E_{\ell -1}(\slot)) \Pr(E_1(\slot), E_2(\slot) \ldots E_{\ell -1}(\slot)) \\
    &\leq \Pr(E_\ell(\slot) \mid E_1(\slot), E_2(\slot) \ldots E_{\ell -1}(\slot)) \\
    &\leq \sum_{\arm = \ell + 1}^{\numArms} \Pr(E_\ell(\slot) | \pulls_\arm(\slot) = \max_{\arm'}\pulls_{\arm'}(\slot))\Pr(\pulls_\arm(\slot) = \max_{\arm'}\pulls_{\arm'}(\slot)) \label{cantEliminateYourself} \\
    &\leq \max_{\arm \in \{\ell+1 \ldots \numArms\}} \Pr(E_\ell(\slot) , \pulls_\arm(\slot) = \max_{\arm'}\pulls_{\arm'}(\slot)) \\
    &\leq \slot \exp\left(\frac{- \slot (\meanReward_{\ell + 1} - \meanReward_\ell)^2}{2 \numArms}\right) \label{SeePrevAnalysis}\\
    &\leq \slot^{-2} \quad \text{if  } \meanReward_{\ell+1} - \meanReward_\ell \geq 3\sqrt{\frac{\numArms \log \totalPulls}{\totalPulls}}.
\end{align}
Here, \eqref{cantEliminateYourself} follows from the fact that arm $1, 2 \ldots \ell$ can all be empirically non-competitive with respect to arms $\ell+1, \ell+2 \ldots \numArms$ only. Analysis done in the proof of \Cref{lem:eliminatedOptimal} gives us \eqref{SeePrevAnalysis}. 
\end{proof}

\begin{lem}
If $\gap_\arm > \alpha \sqrt{\frac{\numArms \log \totalPulls}{\totalPulls}}$ for some $\alpha > 3\numArms$. Then there exists an arm $\ell$ ($\ell \leq \arm$) such that $\meanReward_{\ell} - \meanReward_{\ell - 1} \geq 3 \sqrt{\frac{\numArms \log \totalPulls}{\totalPulls}}$.
\label{lem:thereExistsAnArm}
\end{lem}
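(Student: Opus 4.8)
The plan is to exploit the fact that, in this section, Arm $1$ is optimal with ordered means $\meanReward_1 > \meanReward_2 > \cdots > \meanReward_\numArms$, so the sub-optimality gap $\gap_\arm$ telescopes into a sum of consecutive mean-gaps, and then to apply a simple averaging (pigeonhole) argument. First I would write
\begin{align}
\gap_\arm = \meanReward_1 - \meanReward_\arm = \sum_{\ell = 2}^{\arm} (\meanReward_{\ell-1} - \meanReward_\ell),
\end{align}
expressing $\gap_\arm$ as a sum of exactly $\arm - 1$ nonnegative consecutive gaps. Note that the hypothesis $\gap_\arm > \alpha \sqrt{\numArms \log \totalPulls / \totalPulls} > 0$ already forces $\arm \geq 2$, so the sum is nonempty and $\arm - 1 \geq 1$.

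Next I would bound the number of summands. Since there are only $\numArms$ arms we have $\arm \leq \numArms$, hence $\arm - 1 \leq \numArms - 1 < \numArms$. By the averaging argument, the largest term of a sum is at least its mean, so there exists an index $\ell \in \{2, \ldots, \arm\}$ (in particular $\ell \leq \arm$) with
\begin{align}
\meanReward_{\ell-1} - \meanReward_\ell \geq \frac{\gap_\arm}{\arm - 1}.
\end{align}

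Finally I would substitute the hypotheses $\gap_\arm > \alpha \sqrt{\numArms \log \totalPulls / \totalPulls}$ and $\alpha > 3\numArms$, together with $\arm - 1 \leq \numArms$, to obtain
\begin{align}
\frac{\gap_\arm}{\arm - 1} > \frac{3\numArms \sqrt{\frac{\numArms \log \totalPulls}{\totalPulls}}}{\arm - 1} \geq 3 \sqrt{\frac{\numArms \log \totalPulls}{\totalPulls}},
\end{align}
where the last step uses $\numArms / (\arm - 1) \geq 1$. This yields an arm $\ell \leq \arm$ whose consecutive gap meets the threshold demanded by \Cref{lem:intersectionProb}.

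I expect no real obstacle: the whole argument is a pigeonhole over the telescoped gaps, with no probabilistic content. The only point deserving care is the constant bookkeeping, namely that the factor $3\numArms$ in the hypothesis on $\gap_\arm$ is exactly what is needed so that, after dividing by the at most $\numArms - 1$ summands, the surviving consecutive gap still clears the $3\sqrt{\numArms \log \totalPulls / \totalPulls}$ bar. (I also note that, since the means are decreasing, the conclusion should be read as $\meanReward_{\ell-1} - \meanReward_\ell \geq 3\sqrt{\numArms \log \totalPulls / \totalPulls}$, which is the positive quantity actually produced by the telescoping sum.)
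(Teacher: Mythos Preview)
Your proposal is correct and follows essentially the same telescoping-plus-pigeonhole argument as the paper: the paper writes $\gap_\arm = \sum_{\arm'=2}^{\arm}(\meanReward_{\arm'}-\meanReward_{\arm'-1})$, bounds the maximum consecutive gap below by $\gap_\arm/\arm$, and then invokes $\alpha>3\numArms$ and $\arm\leq\numArms$. Your version is marginally sharper (dividing by $\arm-1$ rather than $\arm$), and your parenthetical about the sign of $\meanReward_{\ell}-\meanReward_{\ell-1}$ is well taken, since the means are ordered decreasingly.
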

\begin{proof}
Since $\gap_\arm = \sum_{\arm' = 2}^{\arm} \meanReward_{\arm'} - \meanReward_{\arm'-1},$ it follows that $$\arm \left(\max_{\arm' = \{2, 3 \ldots \arm\}} \meanReward_{\arm'} - \meanReward_{\arm' - 1}\right) \geq \gap_\arm.$$
The statement of the lemma follows from the fact that $\gap_\arm > \alpha \sqrt{\frac{\numArms \log \totalPulls}{\totalPulls}}$, $\alpha > 3\numArms$ and $\arm < \numArms$. 
\end{proof}

\begin{lem}
If $\gap_\arm \geq \alpha \sqrt{\frac{\numArms \log \totalPulls}{\totalPulls}}$, and $\meanReward_{\ell} - \meanReward_{\ell-1} < 3 \sqrt{\frac{\numArms \log \totalPulls}{\totalPulls}}$ for all $\ell \leq \arm' \leq \arm$, then $$\meanReward_{\arm} - \meanReward_{\arm'} \geq \gamma \gap_\arm,$$
for some constant $0 < \gamma < 1$. 
\label{lem:gapStillSmall}
\end{lem}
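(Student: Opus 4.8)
The plan is to reduce the claim to purely deterministic arithmetic on the consecutive gaps $d_j \triangleq \meanReward_{j-1}-\meanReward_j$, which are all positive since $\meanReward_1 > \meanReward_2 > \cdots > \meanReward_\numArms$. The intuition is that when the total gap $\gap_\arm$ is much larger than the scale $\sqrt{\numArms\log\totalPulls/\totalPulls}$ while the individual consecutive gaps singled out by the hypothesis all stay below $3\sqrt{\numArms\log\totalPulls/\totalPulls}$, that whole cluster of small gaps contributes only a negligible fraction of $\gap_\arm$; hence a near-optimal reference arm must sit a constant fraction of $\gap_\arm$ above arm $\arm$.

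First I would telescope, writing $\gap_\arm = \meanReward_{\arm^*}-\meanReward_\arm = \sum_{j=2}^{\arm} d_j$, exactly as in the proof of \Cref{lem:thereExistsAnArm}. By that lemma, the hypothesis $\gap_\arm \ge \alpha\sqrt{\numArms\log\totalPulls/\totalPulls}$ with $\alpha > 3\numArms$ forces at least one of these consecutive gaps to be large, i.e. at least $3\sqrt{\numArms\log\totalPulls/\totalPulls}$; let $\ell$ mark the location of this large jump, so that the remaining consecutive gaps in the range covered by the hypothesis are all strictly below $3\sqrt{\numArms\log\totalPulls/\totalPulls}$.

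Next I would bound the aggregate contribution of the small gaps. There are fewer than $\numArms$ of them, each below $3\sqrt{\numArms\log\totalPulls/\totalPulls}$, so their sum is at most $3\numArms\sqrt{\numArms\log\totalPulls/\totalPulls}$. Invoking $\gap_\arm \ge \alpha\sqrt{\numArms\log\totalPulls/\totalPulls}$ once more to replace $\sqrt{\numArms\log\totalPulls/\totalPulls}$ by $\gap_\arm/\alpha$, this aggregate is at most $(3\numArms/\alpha)\,\gap_\arm$, a fraction strictly less than $1$ precisely because $\alpha > 3\numArms$. Finally I would subtract: decomposing $\gap_\arm$ into the mass carried by the small gaps and the complementary mass — the gap between the near-optimal reference arm of mean at least $\meanReward_\ell$ and arm $\arm$ — the complementary mass is at least $\gap_\arm - (3\numArms/\alpha)\,\gap_\arm = (1 - 3\numArms/\alpha)\,\gap_\arm$. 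Setting $\gamma \triangleq 1 - 3\numArms/\alpha$ gives $0 < \gamma < 1$ and the asserted inequality $\meanReward_\ell - \meanReward_\arm \ge \gamma\,\gap_\arm$ (with the reference arm playing the role of $\arm'$).

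The work here is entirely deterministic; the only subtlety — and the step I would treat most carefully — is the index bookkeeping: tracking which consecutive gaps are assumed small and to which endpoint their mass is attributed, so that the surviving reference arm is genuinely near-optimal. The probabilistic payoff, namely that this near-optimal reference arm is not simultaneously eliminated, is supplied separately by \Cref{lem:intersectionProb}; its hypothesis (a large jump below the reference cluster) is exactly the large jump $\ell$ produced by \Cref{lem:thereExistsAnArm} in the first step, so the two fit together cleanly. The condition $\alpha > 3\numArms$ is used solely to keep $\gamma$ in $(0,1)$.
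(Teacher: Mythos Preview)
Your approach is essentially the same as the paper's: telescope $\gap_\arm$ into consecutive gaps, bound the total contribution of the small gaps by $(3\numArms/\alpha)\,\gap_\arm$ using $\gap_\arm \ge \alpha\sqrt{\numArms\log\totalPulls/\totalPulls}$, and subtract to obtain $\gamma = 1 - 3\numArms/\alpha \in (0,1)$. The paper's write-up in fact records $\gamma = 1 - 3/\alpha$ (apparently dropping the count of terms in the sum), so your version is, if anything, more carefully tracked; with the standing assumption $\alpha > 3\numArms$ both constants land in $(0,1)$ and the argument goes through identically.
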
 

\begin{proof}
Expanding $\meanReward_\arm - \meanReward_{\arm'}$ gives us
\begin{align}
    \meanReward_\arm - \meanReward_{\arm'} &= \meanReward_\arm - \meanReward_1 - \sum_{\ell = 2}^{\arm'} (\meanReward_{\ell} - \meanReward_{\ell - 1}) \\
    &= \gap_\arm - \sum_{\ell = 2}^{\arm'} (\meanReward_{\ell} - \meanReward_{\ell - 1}) \\
    &= \gap_\arm \left(1 - \sum_{\ell = 2}^{\arm'} \frac{(\meanReward_{\ell} - \meanReward_{\ell - 1})}{\gap_\arm}\right) \\
    &\geq \gap_\arm \left(1 - \frac{3}{\alpha}\right) \label{pluginstuff} \\
    &= \gamma \gap_\arm.
\end{align}
Here, \eqref{pluginstuff} follows from the fact that $\gap_\arm \geq \alpha \sqrt{\frac{\numArms \log \totalPulls}{\totalPulls}}$. Since $\ell \leq \arm'$, we also have, $$\meanReward_\arm - \meanReward_\ell \geq \meanReward_\arm - \meanReward_{\arm'} \geq \gamma \gap_\arm \quad \forall \ell \leq \arm'.$$
\end{proof}

\begin{lem}
If $\gap_{\arm} > \alpha \sqrt{\frac{\numArms \log \totalPulls}{\totalPulls}}$ for some $\alpha > 3\numArms$, then
$$\E{\pulls_\arm(\totalPulls)} \leq \beta \frac{\log \totalPulls}{\gap_{\arm}^2}, \quad \text{for some } \beta > 0.$$ 
\label{lem:lastLem}
\end{lem}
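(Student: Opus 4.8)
The plan is to bound $\E{\pulls_\arm(\totalPulls)}=\sum_{\slot=0}^{\totalPulls-1}\Pr(\arm_{\slot+1}=\arm)$ by a UCB-style argument, but one that tolerates the phenomenon --- unavoidable in the worst case where $\gap_{\text{min}}$ may be vanishingly small --- that the optimal arm $\arm^*=1$ is itself frequently declared empirically non-competitive. In the instance-dependent analysis of \Cref{thm:CompetitiveBound} we controlled that event $E_1(\slot)$ through $\gap_{\text{min}}$; here $E_1(\slot)$ is no longer rare, so instead of relying on a single surviving arm I would rely on a whole block of high-mean arms lying \emph{above a barrier} and argue that they cannot all be eliminated at once.

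First I would locate the barrier. Since $\gap_\arm>\alpha\sqrt{\numArms\log\totalPulls/\totalPulls}$ with $\alpha>3\numArms$, \Cref{lem:thereExistsAnArm} guarantees a large consecutive jump among the ordered means $\meanReward_1>\cdots>\meanReward_\arm$; let $\ell^*\le\arm$ be the largest index with $\meanReward_{\ell^*-1}-\meanReward_{\ell^*}\ge 3\sqrt{\numArms\log\totalPulls/\totalPulls}$. By maximality of $\ell^*$ every consecutive gap between $\ell^*$ and $\arm$ is small, so \Cref{lem:gapStillSmall} yields that each arm $j\le\ell^*-1$ strictly above the barrier satisfies $\meanReward_j-\meanReward_\arm\ge\gamma\gap_\arm$ for some constant $\gamma\in(0,1)$. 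The set $\{1,\dots,\ell^*-1\}$ is nonempty (it always contains $\arm^*$), and arm $\arm$ itself lies at or below the barrier, so there is no overlap.

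Next I would decompose $\{\arm_{\slot+1}=\arm\}$. Either every arm $1,\dots,\ell^*-1$ is empirically non-competitive at round $\slot$, i.e. $E_1(\slot)\cap\cdots\cap E_{\ell^*-1}(\slot)$ holds, or some such arm $j$ is competitive and hence lies in $\mathcal{A}\cup\{\arm^{\max}\}$, in which case arm $\arm$ can be pulled only if $\Index_\arm(\slot)\ge\Index_j(\slot)$. This gives
\begin{align}
\Pr(\arm_{\slot+1}=\arm)\le \Pr\!\left(\bigcap_{j=1}^{\ell^*-1}E_j(\slot)\right)+\sum_{j=1}^{\ell^*-1}\Pr\bigl(\Index_\arm(\slot)\ge\Index_j(\slot)\bigr).
\end{align}
The first term is at most $\slot^{-2}$ by \Cref{lem:intersectionProb}, precisely because the barrier gap $\meanReward_{\ell^*-1}-\meanReward_{\ell^*}$ exceeds $3\sqrt{\numArms\log\totalPulls/\totalPulls}$, so summed over $\slot$ it contributes only $\OO(1)$. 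For each remaining term I would invoke the reference-arm generalization of \Cref{lem:AuerResult}: its proof uses only that $\meanReward_j>\meanReward_\arm$, so with effective gap $\meanReward_j-\meanReward_\arm\ge\gamma\gap_\arm$ it gives $\sum_{\slot}\Pr(\Index_\arm(\slot)\ge\Index_j(\slot))\le 8\log\totalPulls/(\gamma\gap_\arm)^2+(1+\pi^2/3)$.

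Summing over $\slot$ and over the at most $\numArms$ arms above the barrier then yields
\begin{align}
\E{\pulls_\arm(\totalPulls)}\le \frac{\pi^2}{6}+\numArms\left(\frac{8\log\totalPulls}{\gamma^2\gap_\arm^2}+1+\frac{\pi^2}{3}\right),
\end{align}
and since rewards lie in $[0,1]$ we have $\gap_\arm\le 1$, so $\log\totalPulls/\gap_\arm^2\ge 1$ and the additive constants are absorbed into a single term of order $\log\totalPulls/\gap_\arm^2$, producing $\E{\pulls_\arm(\totalPulls)}\le\beta\log\totalPulls/\gap_\arm^2$ for a suitable $\beta=\beta(\numArms,\gamma)$. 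The main obstacle is entirely conceptual and lies in the first step: recognizing that the clean single-reference-arm argument of the instance-dependent bounds collapses once $\gap_{\text{min}}$ can shrink with $\totalPulls$, and replacing \lq\lq arm $\arm^*$ stays competitive'' by a barrier block whose \emph{simultaneous} elimination is rare. Once \Cref{lem:intersectionProb,lem:thereExistsAnArm,lem:gapStillSmall} are in place, the rest is a routine recombination with the standard UCB index bound.
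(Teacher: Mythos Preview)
Your overall strategy --- locate a barrier in the ordered means, show that the block above the barrier cannot be simultaneously eliminated, and otherwise compare UCB indices with a surviving arm from that block --- is exactly the paper's approach. The gap is in the \emph{side} of the barrier you pick: you take $\ell^*$ to be the \emph{largest} index with a large consecutive gap, whereas the paper takes the \emph{smallest} such index, and this distinction is what makes \Cref{lem:gapStillSmall} applicable.

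Concretely, your claim that ``by maximality of $\ell^*$ every consecutive gap between $\ell^*$ and $\arm$ is small, so \Cref{lem:gapStillSmall} yields $\meanReward_j-\meanReward_\arm\ge\gamma\gap_\arm$ for all $j\le\ell^*-1$'' does not follow. \Cref{lem:gapStillSmall} assumes the small-gap condition on indices \emph{up to} $\arm'$ (so that $\meanReward_{\arm'}$ is close to $\meanReward_1$ and hence far from $\meanReward_\arm$); your hypothesis is about indices \emph{after} $\ell^*$, which only says $\meanReward_{\ell^*}$ is close to $\meanReward_\arm$. That gives no lower bound on $\meanReward_{\ell^*-1}-\meanReward_\arm$ beyond the barrier size $3\sqrt{\numArms\log\totalPulls/\totalPulls}$, which can be arbitrarily small relative to $\gap_\arm$. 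For a concrete failure, take $\numArms=3$, $\arm=3$, $\meanReward_1=1$, $\meanReward_2=\epsilon$, $\meanReward_3=0$ with $\epsilon$ small but still exceeding $3\sqrt{\numArms\log\totalPulls/\totalPulls}$ (which is possible for large $\totalPulls$). Both consecutive gaps are barriers, so your $\ell^*=3$ and the block is $\{1,2\}$; but $\meanReward_2-\meanReward_3=\epsilon$, so the UCB term for $j=2$ contributes $8\log\totalPulls/\epsilon^2$, which is not $\OO(\log\totalPulls/\gap_3^2)=\OO(\log\totalPulls)$ for any instance-independent $\beta$.

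The fix is exactly the paper's choice: take the \emph{smallest} barrier index $\arm'$. Then by minimality every consecutive gap with index $\ell<\arm'$ is small, which is precisely the hypothesis of \Cref{lem:gapStillSmall}, and you conclude $\meanReward_\ell-\meanReward_\arm\ge\gamma\gap_\arm$ for every $\ell\le\arm'-1$. \Cref{lem:intersectionProb} still applies to the event $\bigcap_{j\le\arm'-1}E_j$ because the gap $\meanReward_{\arm'-1}-\meanReward_{\arm'}$ is the barrier. With this correction the rest of your write-up goes through essentially as written (and your simplified union-bound decomposition over $j\le\arm'-1$ is a fine substitute for the paper's telescoping chain).
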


 \begin{proof}

From \Cref{lem:thereExistsAnArm} there exists an arm $\ell$ ($\ell \leq \arm$) such that $\meanReward_{\ell} - \meanReward_{\ell - 1} \geq 3 \sqrt{\frac{\numArms \log \totalPulls}{\totalPulls}}$. Denote $\arm'$ to be the minimum $\ell$ such that $\meanReward_{\ell} - \meanReward_{\ell - 1} \geq 3\sqrt{\frac{\numArms \log \totalPulls}{\totalPulls}}$. Then we have,
\begin{align}
    &\E{\pulls_\arm(\totalPulls)} \leq \sum_{\slot = 1}^{\totalPulls} \Pr(\arm_\slot = \arm) \\
    &\leq \sum_{\slot = 1}^{\totalPulls} \Pr\big( (E_1^c(\slot), \Index_\arm > \Index_1) \bigcup (E_1(\slot), E_2^c(\slot), \Index_\arm > \Index_2) \bigcup \ldots \bigcup \nonumber \\
    &\quad ( E_1(\slot) E_2(\slot) \ldots E_{\arm - 1}^c(\slot), \Index_\arm > \Index_{\arm - 1}) \bigcup (E_1(\slot), E_2(\slot) \ldots E_{\arm - 1}(\slot))\big) \\
    &\leq \sum_{\slot = 1}^{\totalPulls} \Pr(\Index_\arm > \Index_1) + \Pr\big(E_1(\slot)\big)\Pr\big(\Index_\arm > \Index_2 | E_1(\slot)\big) + \nonumber \\
    & \quad \ldots \Pr\big(E_1(\slot), E_2(\slot), \ldots E_{\arm -2}(\slot)\big)\Pr\big(\Index_\arm > \Index_{\arm - 1} | E_1(\slot), E_2(\slot) \ldots E_{\arm - 2}(\slot)\big) + \nonumber \\
    &\quad \Pr\big(E_1(\slot), E_2(\slot) \ldots E_{\arm -1}(\slot)\big) \\
\end{align}
\begin{align}    
    &\leq \sum_{\ell = 1}^{\arm'} \sum_{\slot = 1}^{\totalPulls} \Pr\left(\Index_\arm > \Index_\ell \Bigg| \bigcap_{j = 1}^{\ell-1} E_j(\slot)\right)\Pr\left(\bigcap_{j = 1}^{\ell-1} E_j(\slot)\right) + \sum_{\ell = \arm'+1}^{\arm} \sum_{\slot = 1}^{\totalPulls} \Pr\left(\bigcap_{j = 1}^{\ell-1} E_j(\slot)\right)\\
    &\leq \sum_{\ell = 1}^{\arm'} \sum_{\slot = 1}^{\totalPulls} \Pr\left(\Index_\arm > \Index_\ell \right) + \sum_{\ell = \arm'+1}^{\arm} \sum_{\slot = 1}^{\totalPulls} \Pr\left(\bigcap_{j = 1}^{\arm'} E_j(\slot)\right) \\
    &\leq \sum_{\ell = 1}^{\arm'} 8\frac{\log \totalPulls}{(\meanReward_\arm - \meanReward_\ell)^2} + \left(1 + \frac{\pi^2}{3}\right) + \sum_{\ell = \arm'+1}^{\arm} \sum_{\slot = 1}^{\totalPulls} \Pr\left(\bigcap_{j = 1}^{\arm'} E_j(\slot)\right) \label{usedAuersResult}\\
    &\leq \sum_{\ell = 1}^{\arm'} 8\frac{\log \totalPulls}{(\gamma \gap_\arm)^2} + \left(1 + \frac{\pi^2}{3}\right) + \sum_{\ell = \arm'+1}^{\arm} \sum_{\slot = 1}^{\totalPulls} \Pr\left(\bigcap_{j = 1}^{\arm'} E_j(\slot)\right) \label{usedMylemma} \\
    &\leq \sum_{\ell = 1}^{\arm'} 8\frac{\log \totalPulls}{(\gamma \gap_\arm)^2} + \left(1 + \frac{\pi^2}{3}\right) + \sum_{\ell = \arm'+1}^{\arm} \sum_{\slot = 1}^{\totalPulls} \slot^{-2} \label{usedAnotherLemma}\\
    &\leq \numArms \left(8\frac{\log \totalPulls}{(\gamma \gap_\arm)^2} + \left(1 + \frac{\pi^2}{3}\right)\right) + \numArms \left(\sum_{\slot = 1}^{\totalPulls} \slot^{-2}\right) \\
    &\leq \numArms \left(8\frac{\log \totalPulls}{(\gamma \gap_\arm)^2} + \left(1 + \frac{\pi^2}{3}\right)\right) + \numArms \left(\left(1 + \frac{\pi^2}{3}\right)\right) \label{sumIsFinite}\\
    &\leq \beta \frac{\log \totalPulls}{\gap_\arm^2} \quad \text{for some } \beta > 0,
\end{align} 
where \eqref{usedAuersResult}  follows from \Cref{lem:AuerResult}. We have \eqref{usedMylemma} from \Cref{lem:gapStillSmall}. Inequality \eqref{usedAnotherLemma} follows from \Cref{lem:intersectionProb} and \eqref{sumIsFinite} follows from the fact that $\sum_{\slot = 1}^{\infty} \slot^{-2} = 1 + \frac{\pi^2}{3}.$
 \end{proof}

\textbf{Proof of Theorem 5}

From \Cref{lem:lastLem}, we have $\E{\pulls_\arm(\totalPulls)} > \frac{\beta \log (\totalPulls)}{\gap_\arm^2}$ if $\gap_\arm > \gap = 3 \numArms \sqrt{\frac{\numArms \log \totalPulls}{\totalPulls}}$ for some $\beta > 0$. Using this we can write,
\begin{align}
\E{\regret(\totalPulls)} &= \sum_{\arm \neq \arm^*} \gap_\arm \E{\pulls_\arm(\totalPulls)} \\
&= \sum_{\arm: \gap_\arm < \gap} \gap_\arm \E{\pulls_\arm(\totalPulls)} + \sum_{\arm: \gap_\arm > \gap} \gap_\arm \E{\pulls_\arm(\totalPulls)}  \\
&\leq \totalPulls \gap + \sum_{\arm: \gap_\arm > \gap} \beta \frac{\log(\totalPulls)}{\gap_\arm} \\
&\leq  3 \numArms \sqrt{ \numArms \totalPulls \log (\totalPulls)} + 3 \numArms \beta \sqrt{\frac{\totalPulls \log (\totalPulls) }{\numArms}} \\
&= \OO\left(\sqrt{\totalPulls \log(\totalPulls)}\right).
\end{align}

\end{document}